\documentclass[10pt,journal,twocolumn,twoside]{IEEEtran} 
\pdfoutput=1 
\usepackage[utf8]{inputenc}
\usepackage{mydefinitions} 
\usepackage{irtdefinitions} 
\usepackage{cite}
\usepackage{amsmath,amssymb,amsfonts}
\usepackage{algorithm,algpseudocode}
\usepackage{color}
\usepackage{multirow}
\usepackage{booktabs}
\usepackage{mathtools}
%
\usepackage{fancyhdr}
\usepackage{xcolor}

\usepackage{hyperref}
\hypersetup{
	pdftitle={Motion Planning using Reactive Circular Fields: A 2D Analysis of Collision Avoidance and Goal Convergence},
	pdfauthor={Marvin Becker},
	colorlinks,
	linkcolor={red!50!black},
	citecolor={blue!50!black},
	urlcolor={blue!80!black}
}

\def\doi{10.1109/TAC.2023.3303168}
\fancyhf{}

\fancyfoot[c]{}
\fancyfootoffset{-1em}
\fancypagestyle{copyright}{
	\chead{\footnotesize This article has been accepted for publication in IEEE Transactions on Automatic Control. This is the author's version which has not been fully edited and content may change prior to final publication. Citation information: DOI \doi
    }
}

\begin{document}
\title{Motion Planning using Reactive Circular Fields:\\ A 2D Analysis of Collision Avoidance and Goal Convergence}
\author{Marvin Becker, Johannes Köhler, Sami Haddadin and Matthias A. Müller
    \thanks{This work was supported in part by the Region Hannover in the project \emph{roboterfabrik}. \emph{(Corresponding author: Marvin Becker.)}}
    \thanks{M. Becker and M. A. Müller are with the Institute of Automatic Control, Leibniz University Hannover, Germany, (e-mail:becker@irt.uni-hannover.de; mueller@irt.uni-hannover.de)}
    \thanks{J. Köhler is with the Institute for Dynamic Systems and Control, ETH Zürich, Zürich CH-8092, Switzerland, (e-mail: jkoehle@ethz.ch).}
    \thanks{S. Haddadin is with the Munich School of Robotics and Machine Intelligence and Chair of Robotics and System Intelligence, Technical University Munich, Germany, (e-mail: sami.haddadin@tum.de).}}

\maketitle
\thispagestyle{copyright}

\begin{abstract}
    Recently, many reactive trajectory planning approaches were suggested in the literature because of their inherent immediate adaption in the ever more demanding cluttered and unpredictable environments of robotic systems.
    However, typically those approaches are only locally reactive without considering global path planning and no guarantees for simultaneous collision avoidance and goal convergence can be given.
    In this paper, we study a recently developed \gls{cf}-based motion planner \cite{BeckerLilMulHad2021} that combines local reactive control with global trajectory generation by adapting an artificial magnetic field such that multiple trajectories around obstacles can be evaluated.
    In particular, we provide a mathematically rigorous analysis of this planner for static environments in the horizontal plane to ensure safe motion of the controlled robot.
    Contrary to existing results, the derived collision avoidance analysis covers the entire \gls{cf} motion planning algorithm including attractive forces for goal convergence and is not limited to a specific choice of the rotation field, i.e., our guarantees are not limited to a specific potentially suboptimal trajectory.
    Our Lyapunov-type collision avoidance analysis is based on the definition of an (equivalent) two-dimensional auxiliary system, which enables us to provide tight, if and only if conditions for the case of a collision with point obstacles.
    Furthermore, we show how this analysis naturally extends to multiple obstacles and we specify sufficient conditions for goal convergence.
    Finally, we provide challenging simulation scenarios with multiple non-convex point cloud obstacles and demonstrate collision avoidance and goal convergence.
\end{abstract}
\glsresetall
\begin{IEEEkeywords}
    Autonomous robots, autonomous systems, collision-free motion planning, robotics
\end{IEEEkeywords}

%
%
\section{Introduction}
\IEEEPARstart{N}{ew} technologies have enabled classical industrial robotics to be increasingly complemented and extended by more sensitive, lighter and less constrained robotic systems that no longer need to be operated behind safety fences.
Therefore, there has been a significant increase in research and application of human-robot collaborations in recent years \cite{AjoudaniZanIvaAlb2018}.
As a result, the use of robotic applications is naturally evolving away from structured and clearly delineated areas to unpredictable, cluttered and complex environments. This development poses new challenges, which place great demands on collision avoidance with obstacles and motion planning in particular \cite{KapplerDanielMeiIssMai2018}.
Traditional sense-plan-act approaches are reaching their limits, while reactive algorithms offer great potential due to their fast computing time and inherent immediate adaptation to unforeseen events \cite{KapplerDanielMeiIssMai2018,KragicGusKarJen2018}.\\
Research for mobile robotics and autonomous vehicles already achieved impressive results in those environments using reactive approaches \cite{KuwataTeoFioKar2009,LeonardHowTelBer2008,BachaBauFarFle2008}.
Nevertheless, motion planning remains to be an active topic in all application areas especially in terms of safety, where rigorous guarantees for collision avoidance are needed to ensure that a robot is able to perform its tasks safely even in the event of unforeseen situations \cite{HoyMatSav2015,WangSavGar2018}.
Such a rigorous analysis in in terms of goal convergence and collision avoidance is often neglected, in particular for reactive approaches where classical verification methods are not applicable \cite{HoyMatSav2015,SavkinWan2013}.\\
\emph{Related work:}
\Gls{ics} can be used to guarantee safe collision free motion planning by considering obstacle and robot dynamics to avoid states where a collision is unavoidable \cite{FraichardAsa2004}.
\gls{ics} result in a high computational complexity, which can be relaxed by planning partial \gls{ics}-free trajectories over a finite time \cite{PettiFra2005}.
Similarly, reachability analysis can be used for online verification of safe motions \cite{AlthoffDol2014,HerbertCheHanBan2017}.
Collision avoidance can also be verified using barrier certificates \cite{PrajnaJad2004}, or enforced using control barrier functions, which enable reactive control strategies with low computational complexity \cite{AmesCooEgeNot2019, WielandAll2007,RauscherKimHir2016}.
Other reactive algorithms that ensure collision avoidance include, e.g., the biologically inspired approach in \cite{TeimooriSav2010,SavkinWan2013} where an avoidance angle of the robots' velocity vector to the obstacles is used for collision avoidance. The algorithm was exploited and extended in several publications \cite{WangSavGar2018,WiigPetKro2017,WiigPetKro2018}.\\
\Glspl{apf} are one of the most popular reactive collision avoidance approaches which suffer from local minima, i.e., goal convergence cannot be guaranteed \cite{Khatib1986}.
Many variants of \glspl{apf} or related approaches were proposed to overcome the problem with local minima and to enable goal convergence in a wider range of applications, e.g., navigation vector fields \cite{RimonKod1988}, harmonic fields \cite{Masoud2010} or gyroscopic forces \cite{ChangMar2003}.
Collision avoidance guarantees for the gyroscopic force algorithm are available for planar robots \cite{ChangMar2003}, spherical or cylindrical obstacles in \gls{3d} space \cite{GarimellaSheKob2016}. In \cite{SabattiniSecFan2013} an additional breaking force was added to the definition of the gyroscopic force, ensuring collision avoidance at the expense of losing goal convergence guarantees.\\
Inspired by the interactions in magnetic fields, the reactive \gls{cf} algorithm first developed in \cite{SinghSteWen1996} has been increasingly studied in the last years \cite{HaddadinBelAlb2011,HaddadinParBelAlb2013,AtakaLamAlt2018,AtakaShiLamAlt2018,AtakaLamAlt2022,LahaFigVraSwi2021,LahaVorFigQu2021}.
Notably, this algorithm does not change the magnitude of the velocity, hence resulting in smooth trajectories without getting stuck in local minima.
Nevertheless, these approaches are typically only locally reactive and use a fixed rotation field to avoid obstacles, resulting in globally suboptimal paths.
In order to overcome this limitation, we developed the \gls{cfp} planner in \cite{BeckerLilMulHad2021}, which combines local reactive control with global motion planning and provides significantly improved trajectories compared to other reactive planners as it is able to avoid obstacles in multiple directions.\\
Among these magnetic field inspired collision avoidance algorithms, several analyses were conducted which are, to a certain extent, applicable to the \gls{cfp} planner \cite{BeckerLilMulHad2021}.
In the seminal work \cite{SinghSteWen1996}, goal convergence is shown in combination with an attractive force, assuming no collision occurs.
In addition, collision avoidance is studied under simplified assumptions, where the artificial magnetic field is kept constant (instead of changing with the movement of the robot) and without any attractive forces.
In \cite{AtakaLamAlt2018} collision avoidance in environments with a single convex obstacle is shown. The approach was recently enhanced in \cite{AtakaLamAlt2022} where an additional repulsive force was added that does not disturb the robot's velocity magnitude so that the collision avoidance guarantees could be extended to single nonconvex obstacles without conflicting with the goal convergence properties.
However, the existing approaches and analyses \cite{SinghSteWen1996,AtakaLamAlt2018,AtakaLamAlt2022} have two major limitations.
First, they are locally reactive by design and do not consider global trajectory planning. Thus, the analyses only provide collision avoidance guarantees for one possibly suboptimal trajectory.
Additionally, all previous collision avoidance analyses for \gls{cf} approaches were conducted for isolated \gls{cf} forces only, i.e., no collision avoidance guarantees could be given when the \gls{cf} force is combined with an additional attractive force which is necessary to achieve goal convergence.\\
\emph{Contribution:}
We address these issues by providing a rigorous mathematical analysis of the complete motion planning approach from \cite{BeckerLilMulHad2021} with \glspl{cf} for collision avoidance and an attractive potential force for goal convergence in planar environments.
In contrast to previous magnetic field inspired motion planners, the considered \gls{cfp} planner is able to generate multiple trajectories to avoid obstacles in different directions.
However, the existing analyses are not applicable to this setting.
In order to study this problem, we define auxiliary system dynamics in \cref{sec:collision_avoidance}, which can be used for a Lyapunov-type analysis of the \gls{cf} forces in the provided planar setting for all avoidance directions.
In this context, we show that a collision with a static point obstacle is only possible for initial conditions on a set of measure zero, which we characterise precisely.
Then, we show how the results can be naturally extended when the entire planner is used, i.e., \gls{cf} forces in combination with an additional goal force (\cref{sec:combined_force_analysis}) and that the previous guarantees remain valid under the influence of \gls{cf} forces from multiple obstacles (\cref{sec:multi_obstacles}). This is done by intermediately showing robustness with respect to small additional disturbances in \cref{sec:disturbance}.
We conclude the collision avoidance analysis with a qualitative argument for collision avoidance of point cloud obstacles (\cref{sec:point_clouds}).
Subsequently, we provide sufficient conditions for goal convergence using a potential field type argument (\cref{sec:goal_convergence}).
Finally, we demonstrate collision avoidance and goal convergence in a \gls{2d} simulation of a critical scenario with multiple nonconvex obstacles to highlight the theoretical results of this paper and in a \gls{3d} setting with dynamic obstacles and noisy point cloud data to show the practical capabilities of the \gls{cfp} planner (\cref{sec:simulations}).\\
\emph{Notation:}
In this paper, we use bold symbols to represent vectors, e.g., $\av \in \mathbb{R}^n$. Let $\av \times \bv$ be the cross product and $\av \cdot \bv$ be the dot product of two vectors $\av, \bv \in \mathbb{R}^n$. We denote the time derivative of a vector $\av \in \mathbb{R}^n$ by $\dot{\av}=\diff{\av}{t}$ and we define $\norm{\av}$ to be the Euclidean norm of this vector. We use $\land$ and $\lor$ to denote the logical conjunction and logical disjunction, respectively.
%
%
\section{Circular Field Motion Planner}\label{sec:cf_planner}
In the following, we describe the \gls{cf} motion planner from \cite{BeckerLilMulHad2021}.
We consider point mass dynamics and use a steering force $\Fm_{\mathrm{s}}$ for the calculation of the control signal, i.e., $\ddot{\xv} = \frac{\Fm_{\mathrm{s}}}{m}$, where $\ddot{\xv}\in\mathbb{R}^3$ is the robot acceleration and $m$ the robot mass.
Without loss of generality, we consider a unit mass $m=1$.
The steering force consists of an attractive potential field force $\Fm_\mathrm{VLC}$ for goal convergence and \gls{cf}-based obstacle avoidance forces $\Fm_{\mathrm{CF}}$ and is defined as
\begin{equation} \label{eq:total_force}
  \Fm_\mathrm{s} = \Fm_{\mathrm{CF}} + k_\mathrm{VLC} \Fm_\mathrm{VLC},
\end{equation}
where $k_\mathrm{VLC}\geq0$ is an additional scaling factor, which is explained in detail in \cref{sec:combined_force_analysis}.
Throughout this paper, we consider $j=1,\dots,n_o$ obstacles which are each characterized by a cloud of points $^j\pv_i\in\mathbb{R}^3, i=1,\dots, m_j$.
In this form, the obstacle data can be obtained from common motion tracking devices like laser scanners or cameras.
Moreover, we only consider static obstacles $\norm{^j\dot{\pv}_i} = 0$ and assume to have perfect knowledge of the environment, i.e., the position of each point $^j\pv_i$ of each obstacle is known exactly.
Extending the collision avoidance and goal convergence guarantees given in this paper to settings with dynamic obstacles and imperfectly known obstacle locations is an interesting subject for future work.
\subsection{Circular Field Force}
The \gls{cf} algorithm is inspired by the forces on moving charges in electromagnetic fields and in our formulation, each point $i$ on an obstacle $j$ generates its own artificial electromagnetic field.
Towards this end, we define an artificial current on the obstacle points as
\begin{equation}\label{eq:cf_current}
  ^j\cv_i = \frac{{}^j\dv_i}{\norm{{}^j\dv_i}} \times \bv_j,
\end{equation}
where $^j\dv_i= \xv - {}^j\pv_i$ is the distance vector between the robot's position $\xv$ and the position of the obstacle point $^j\pv_i$ and $\bv_j \in \mathbb{R}^3$ with $\norm{\bv_j}=1$ is the magnetic field vector of the obstacle that defines the rotation of the artificial magnetic field and thus the direction in which this obstacle is evaded.
Note that the magnetic field vector $\bv_j$ is set equal for all points $^j\pv_i$ on the same obstacle $i$ to prevent oscillations \cite{HaddadinBelAlb2011}.
Then, the artificial magnetic field from an obstacle point is defined as
\begin{equation} \label{eq:cf_b_field}
  ^j\Bm_i = \frac{\kcf}{\norm{^j\dv_i}} \, ^j\cv_i \times \frac{^j\dot{\dv}_i}{\norm{^j\dot{\dv}_i}},
\end{equation}
with the scaling factor $\kcf > 0$.
When the robot moves in such a magnetic field, the \gls{cf} force (a modified version of the Lorentz force) is generated, which prevents it from colliding with the obstacle point.
In order to save computational resources, we only apply the \gls{cf} force if the robot is in the vicinity $\dmax>0$ of this obstacle, i.e.,
\begin{equation} \label{eq:cf_force}
  ^j\Fm_{\mathrm{CF}, i} = \begin{cases}
    \frac{{}^j\dot{\dv}_i}{\norm{{}^j\dot{\dv}_i}} \times {}^j\Bm_i & \text{if } \norm{{}^j\dv_i} \leq \dmax \\
    0                                                               & \text{if } \norm{{}^j\dv_i} > \dmax
  \end{cases}.
\end{equation}
Finally, the resulting \gls{cf} force from $n_o$ obstacles each with $m_j$ points is a superposition of the individual forces of each obstacle point
\begin{equation} \label{eq:cf_force_sum}
  \Fm_{\mathrm{CF}} = \sum_{j=0}^{n_o} \sum_{i=0}^{m_j} {}^j\Fm_{\mathrm{CF}, i}.
\end{equation}
\subsection{Virtual Agents Framework}\label{sec:multi-agents}
The \gls{cfp} planner, first introduced in \cite{BeckerLilMulHad2021}, uses a virtual agents framework of predictive software agents to efficiently simulate different parameter settings in the currently known environment.
Towards this end, virtual agents generate multiple robot trajectories by using different magnetic field vectors $\bv_j$ in \cref{eq:cf_current} for computing the obstacle avoidance force $\Fm_{\mathrm{CF}}$ in \cref{eq:cf_force_sum} for an obstacle, e.g., one agent evades an obstacle on the left side and another agent on the right side.
New agent simulations are started after a defined cycle time and whenever a simulated agent comes close to a new obstacle the respective agent trajectory can be split by creating additional virtual agents with different magnetic field vectors.
By simulating multiple trajectories we can choose the best parameter set according to some specified cost (e.g., shortest path).
The prediction of trajectories is done in parallel to the calculation of the robot control command to ensure reactive behavior.
This parallelization provides a major advantage because it allows to calculate virtual agent trajectories and the next control signal for the real robot independently and asynchronously in separate computation threads.
As shown in the later theoretical exposition, obstacle avoidance does not require a specific choice for the magnetic field $\bv_j$ and is ensured due to the reactivity, even if the virtual agent framework fails to provide timely feedback on better available parameters in case of unexpected obstacles.
For details regarding the virtual agents framework, see \cite{BeckerLilMulHad2021}. In the context of the following theoretical analysis of the planner's obstacle avoidance capabilities, it is important to note that the magnetic field vector $\bv$ can be adjusted to modify the direction of the \gls{cf} force to allow for different trajectories around an obstacle.
Therefore, we investigate collision avoidance for general magnetic field vectors $\bv$ while existing analyses \cite{SinghSteWen1996,AtakaLamAlt2018,AtakaLamAlt2022} only analyze the case where $\bv$ is chosen such that the \gls{cf} force guides the robot around obstacles in the direction of its initial velocity (cf. \cref{lma:point_Sless0}).
Note that this implies that the following derived guarantees remain valid independently of the parameters provided by the virtual agents framework, which is an additional feature to improve the performance.
%
%
%
%
\section{Collision Avoidance Proof}  \label{sec:collision_avoidance}
In this section, we introduce the preliminaries (\cref{sec:preliminaries}), discuss the standing assumptions for our analysis (\cref{sec:assumptions}), and define an auxiliary system (\cref{sec:auxiliary_dynamics}), which we use for the collision avoidance analysis in \cref{sec:point_obstacle}.
\subsection{Preliminaries} \label{sec:preliminaries}
%
The following lemma shows that the \gls{cf} force does not affect the magnitude of the robot velocity.
\begin{lemma}\label{lma:constant_vel}
  The magnitude of the robot velocity $\norm{\dot{\xv}}$ is invariant under the dynamics $\ddot{\xv} = \Fm_{\mathrm{CF}}$ with $\Fm_{\mathrm{CF}}$ from \cref{eq:cf_force}.
\end{lemma}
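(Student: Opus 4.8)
The plan is to track the squared speed $V \coloneqq \norm{\dot{\xv}}^2 = \dot{\xv}\cdot\dot{\xv}$ and show that $\dot V \equiv 0$ along any solution of $\ddot{\xv} = \Fm_{\mathrm{CF}}$. Differentiating yields $\dot V = 2\,\dot{\xv}\cdot\ddot{\xv} = 2\,\dot{\xv}\cdot\Fm_{\mathrm{CF}}$, so the whole lemma reduces to the single claim that the \gls{cf} force is everywhere orthogonal to the velocity, $\dot{\xv}\cdot\Fm_{\mathrm{CF}} = 0$.

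To establish this orthogonality I would argue termwise. By \cref{eq:cf_force_sum}, $\Fm_{\mathrm{CF}}$ is a finite superposition of the per-point contributions ${}^j\Fm_{\mathrm{CF},i}$, so by bilinearity of the dot product it suffices to show $\dot{\xv}\cdot{}^j\Fm_{\mathrm{CF},i} = 0$ for every $j,i$. For obstacle points with $\norm{{}^j\dv_i} > \dmax$ this is immediate since ${}^j\Fm_{\mathrm{CF},i} = 0$. For the active points, I would use the standing assumption of static obstacles, $\norm{{}^j\dot{\pv}_i} = 0$, to conclude ${}^j\dot{\dv}_i = \dot{\xv} - {}^j\dot{\pv}_i = \dot{\xv}$; substituting into \cref{eq:cf_force} gives ${}^j\Fm_{\mathrm{CF},i} = \tfrac{\dot{\xv}}{\norm{\dot{\xv}}}\times{}^j\Bm_i$, a cross product having $\dot{\xv}$ as a factor, hence orthogonal to $\dot{\xv}$. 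Summing over all obstacle points then gives $\dot{\xv}\cdot\Fm_{\mathrm{CF}} = 0$, so $\dot V \equiv 0$ and $\norm{\dot{\xv}}$ is constant along the trajectory.

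The argument is essentially a one-line orthogonality observation, so I do not expect a serious obstacle. The only points needing a little care are: (i) the piecewise definition in \cref{eq:cf_force} makes $\Fm_{\mathrm{CF}}$ only piecewise continuous, but since $\norm{\dot{\xv}}$ is absolutely continuous and its time derivative vanishes wherever it exists, it is still constant; and (ii) the force and the normalised current ${}^j\cv_i$ are only well defined for $\norm{{}^j\dot{\dv}_i} = \norm{\dot{\xv}} \neq 0$, which is consistent with — indeed preserved by — the conclusion, so a trajectory launched with nonzero speed retains it.
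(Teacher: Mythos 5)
Your proof is correct and follows essentially the same route as the paper: differentiate $\norm{\dot{\xv}}^2$, use the static-obstacle assumption to identify ${}^j\dot{\dv}_i$ with $\dot{\xv}$, and conclude from the cross-product structure that $\dot{\xv}\cdot\Fm_{\mathrm{CF}}=0$. The only cosmetic difference is that you argue termwise while the paper pulls the sum inside the cross product, and your added remarks on piecewise continuity and nonvanishing speed are sensible but not needed for the paper's statement.
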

\begin{proof}
  We adapt the idea of \cite{AtakaLamAlt2018} for the following proof.
  Given that the positions of all obstacle points are constant, i.e., $\diff{}{t} {}^j\pv_i = 0$, the relative velocity between each obstacle and the robot is equivalent to the velocity of the robot
  \begin{equation*}
    \diff{}{t} {}^j\dv_i= \diff{}{t} \xv - \diff{}{t} {}^j\pv_i = \diff{}{t}\xv.
  \end{equation*}
  Then, the magnitude of the robot velocity $\norm{\dot{\xv}}$ stays constant since the force  $\Fm_{\mathrm{CF}}$ always acts perpendicular, i.e.,
  \begin{align}
    \diff{}{t}\frac{\norm{\dot{\xv}}^2}{2}= \dot{\xv}\cdot & \ddot{\xv}= \dot{\xv}\cdot  \Fm_{\mathrm{CF}}                                                                                      \nonumber             \\
                                                           & = \dot{\xv} \cdot \sum_{j=0}^{n_o} \sum_{i=0}^{m_j} \left( \frac{{}^j\dot{\dv}_i}{\norm{{}^j\dot{\dv}_i}} \times {}^j\Bm_i \right) \label{eq:dotx_Fcf_0} \\
                                                           & = \dot{\xv} \cdot \left( \frac{\dot{\xv}}{\norm{\dot{\xv}}} \times \sum_{j=0}^{n_o} \sum_{i=0}^{m_j} {}^j\Bm_i \right) = 0. \nonumber \qedhere
  \end{align}
\end{proof}
%
\subsection{Assumptions for the Analysis} \label{sec:assumptions}
For the remainder of the paper, we consider a point-like robot with $\dot{\xv}(0) \neq 0$, which according to \cref{lma:constant_vel} ensures $\dot{\xv}(t) \neq 0$ for all $t \geq 0$ for the dynamics $\ddot{\xv}=\Fm_{\mathrm{CF}}$.
For the sake of clarity, in the current \cref{sec:collision_avoidance} we first provide a proof without a goal force in a planar scenario\footnote{Note that if the magnetic field vector $\bv$ is orthogonal to the plane that is spanned by $\xv$ and $\dot{\xv}$, the resulting \gls{cf} force only works in this $\xv, \dot{\xv}$ plane, i.e., it does not affect the velocity parts of the robot in any other direction.
  Hence, our following analysis for the \gls{2d} setting also applies to this special case in any \gls{3d} setting, which can always be enforced by choosing $\bv$ suitably.\label{fn:3d_analysis}}, i.e., $x_3 = 0$.
Additionally, we only consider a single point-like obstacle in the origin, i.e., $\dv = \xv$ as shown in \cref{fig:scenario}.
\begin{figure}%
  \centering
  \includegraphics[width=0.8\columnwidth]{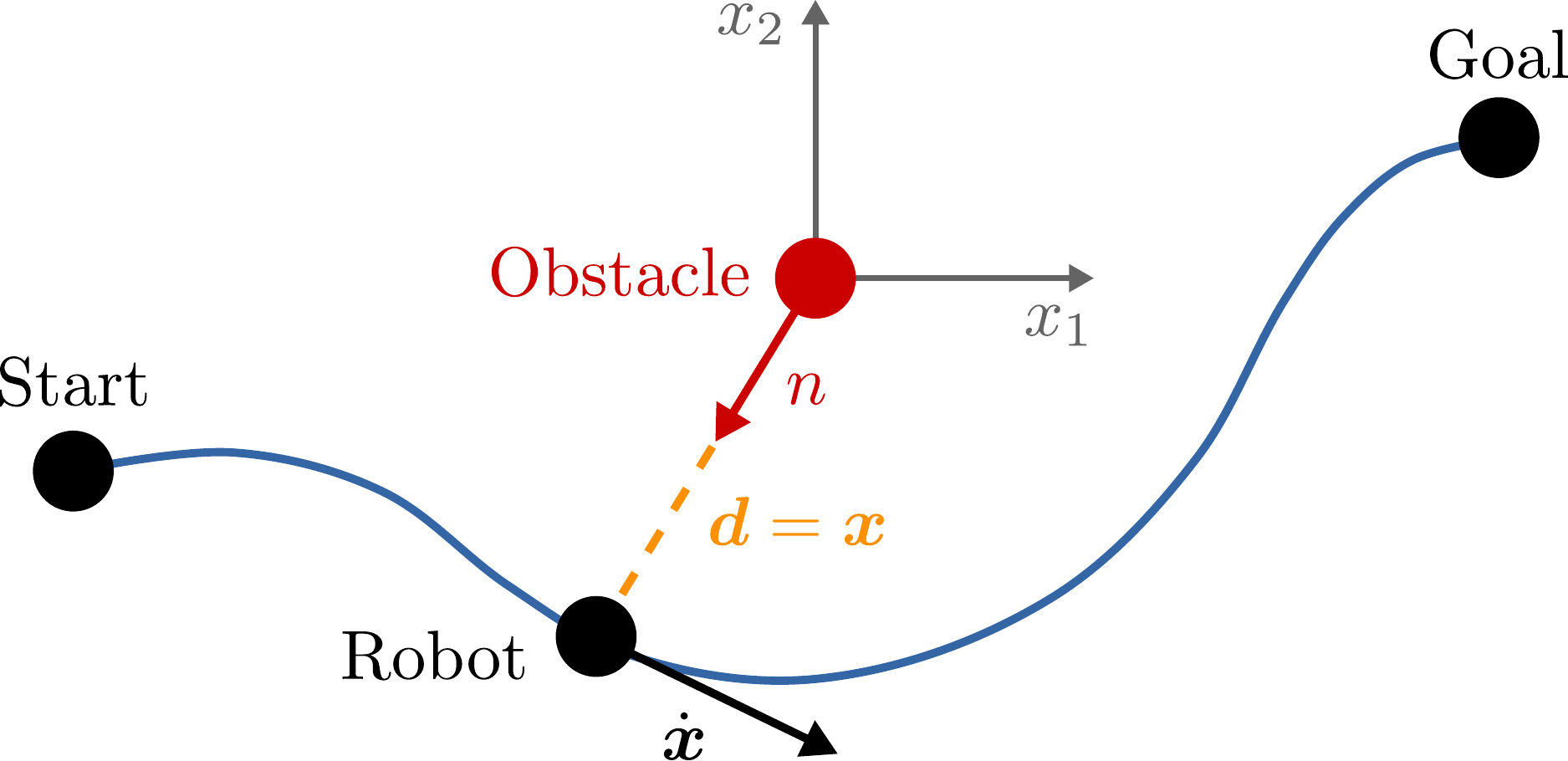}%
  \caption{Schematic view of obstacle avoidance scenario with an exemplary path depicted in blue.}%
  \label{fig:scenario}%
\end{figure}
In such a \gls{2d} scenario, there exist only two possible options for a collision free trajectory, to pass the obstacle on the left or on the right side. Hence, the only choices for the magnetic field vector are $\bv=\left(0 \, 0 \, 1\right)^T$ or $\bv=\left(0 \, 0 \, -1\right)^T$.
Note that in the \gls{2d} scenario, $\bv$ is always defined orthogonal to $\xv$ and $\dot{\xv}$, i.e., $\bv \cdot \xv = \bv \cdot \dot{\xv} = 0$ and therefore the following holds
\begin{align}
  \ddot{\xv} & \overset{\eqref{eq:cf_current}-\eqref{eq:cf_force}}{=} \frac{\kcf}{\norm{\dot{\xv}}^2\norm{\xv}^2} \dot{\xv} \times \left( \left( \xv \times \bv \right) \times \dot{\xv} \right) \nonumber \\
             & = \frac{\kcf}{\norm{\dot{\xv}}^2\norm{\xv}^2} \dot{\xv} \times \left( \bv \left(\dot{\xv} \cdot \xv \right) - \xv\left(\bv \cdot \dot{\xv} \right) \right) \nonumber                        \\
             & = \frac{\kcf}{\norm{\dot{\xv}}^2\norm{\xv}^2} \left(\dot{\xv} \times  \bv \right)\left(\dot{\xv} \cdot \xv \right) \label{eq:ddx}                                                           \\
             & = \pm \frac{\kcf}{\norm{\xv}^2 \norm{\dot{\xv}}^2} \begin{bmatrix}
    x_2 \dot{x}_2^2  + x_1 \dot{x}_1 \dot{x}_2   \\
    - x_1 \dot{x}_1^2  - x_2 \dot{x}_1 \dot{x}_2 \\
    0
  \end{bmatrix}, \nonumber
\end{align}
where the second equality follows from the triple product expansion and the third one from the fact that $\bv \cdot \dot{\xv} = 0$.
The extensions to environments with multiple obstacles, point cloud obstacles and the combination with an attractive force are discussed in \cref{sec:multi_obstacles}, \cref{sec:point_clouds} and \cref{sec:combined_force_analysis}, respectively.
\begin{remark}\label{rmk:theory_vs_appl}
  Note that the ensuing theoretical analysis is restricted to point mass robots in the presence of known stationary obstacles within a \gls{2d} setting (or a \gls{3d} setting, provided the magnetic field vector is orthogonal to the plane spanned by $\xv$ and $\dot{\xv}$, compare \cref{fn:3d_analysis}).
  However, it is worth emphasizing that the planner is purposely designed to function seamlessly within \gls{3d} environments featuring dynamic obstacles, although its efficacy in such situations has only been established empirically thus far (cf. \cref{sec:simulations} and \cite{BeckerLilMulHad2021,BeckerCasHatLilHadMul2023}).
  Note that obstacles outside a range $\dmax$ are not considered by the planner (cf. \cref{eq:cf_force}).
  Hence, only partial knowledge regarding the environment is required for implementation, i.e., the theoretical guarantees remain valid in partially known environments as long as obstacles in the range $\dmax$ around the robot are known.
\end{remark}
\subsection{Auxiliary Dynamics} \label{sec:auxiliary_dynamics}
For the following collision avoidance analysis of a robot controlled by the \gls{cf} force defined in \cref{eq:cf_force}, the definition of an auxiliary system will be crucial. To this end, we define the auxiliary system states
\begin{align}
  R & = \xv \cdot \dot{\xv}, \label{eq:def_r}                           \\
  S & = \left( \xv \times \dot{\xv} \right) \cdot \bv, \label{eq:def_s}
\end{align}
which also imply $R^2+S^2 = \norm{\xv}^2\norm{\dot{\xv}}^2$.
Intuitively, the sign of $R$ describes the moving direction of the robot with respect to the obstacle, i.e., $R>0$ represents the case when the robot is moving away from the obstacle. Similarly, the sign of $S$ describes if the robot is already moving in the intended avoidance direction described by $\bv$, e.g., $S<0$ implies that the robot's velocity points in the same direction as the circular motion the \gls{cf} force tries to achieve.
Accordingly, the critical case in the collision avoidance analysis corresponds to $R<0$ and $S>0$ at the same time because the robot moves towards the obstacle and the avoidance force must first change the direction of the robot's velocity in order to guide the robot around the obstacle in the intended direction (as shown in \cref{fig:RS_cases}).
Note that previous analyses in the literature only consider the case $S<0$ \cite{AtakaLamAlt2018,AtakaLamAlt2022,SinghSteWen1996}.\\
\begin{figure}%
  \centering
  \includegraphics[width=0.85\columnwidth]{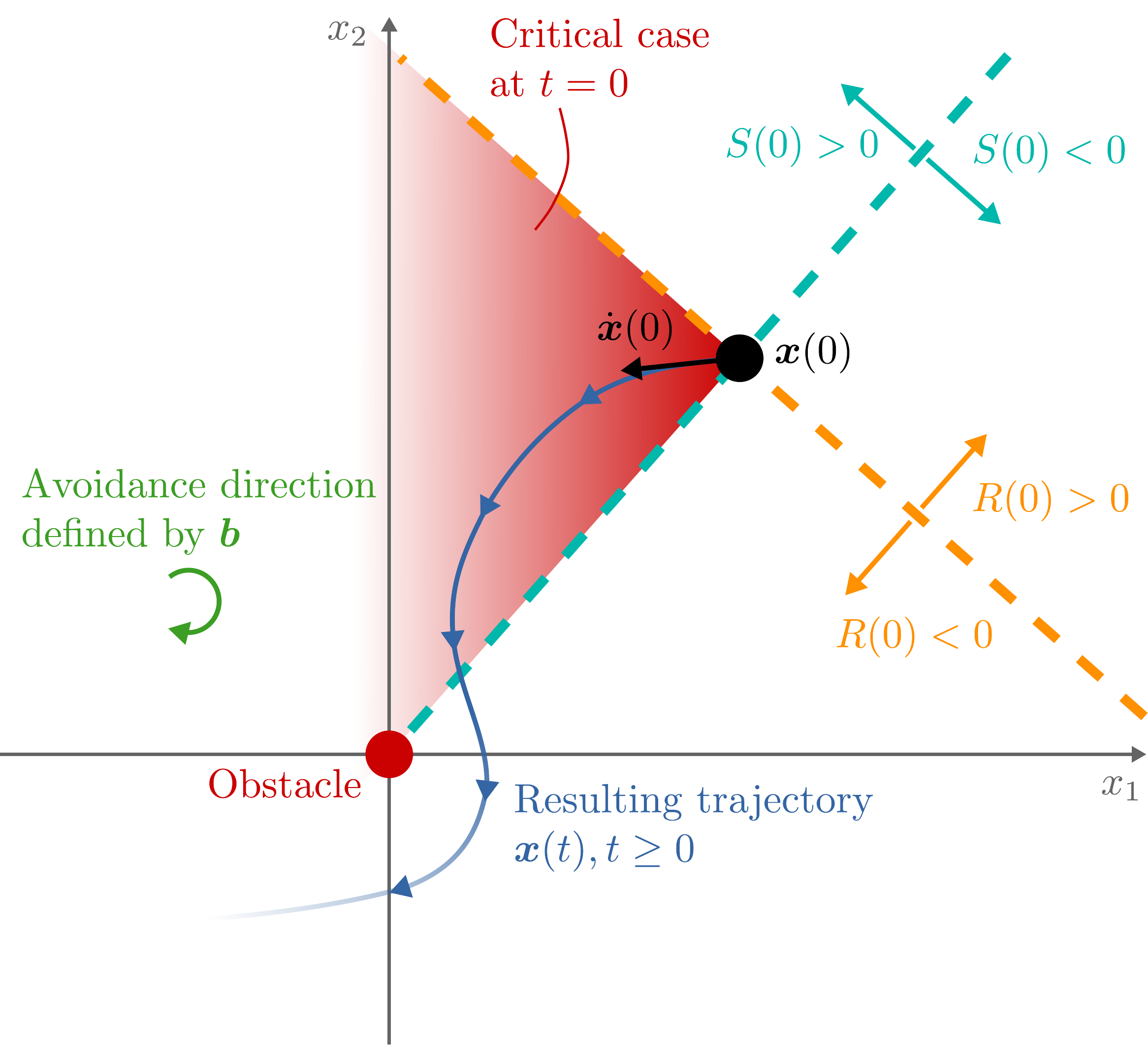}%
  \caption[Visualization of critical trajectory]{Visualization of the critical case ($R(0)<0, S(0)>0$) at $t=0$ as a dependency of the initial robot velocity $\dot{\xv}(0)$ (black), given an initial robot position $\xv(0)=\dv(0)$ and given the magnetic field vector $\bv = \vect{0 & 0 & 1}^T$.
    The resulting trajectory $\xv(t), t\geq 0$ is depicted in blue. Additionally, the sign of $R(0)$ (orange) and $S(0)$ (teal) depending on $\dot{\xv}(0)$ is shown.}
  \label{fig:RS_cases}%
\end{figure}
The derivatives of the auxiliary states $S$, $R$ are given by
\begin{align}
  \dot{R} & = \xv \cdot \ddot{\xv} + \dot{\xv} \cdot \dot{\xv}  \nonumber                                                                                               \\
          & = \frac{\kcf}{\norm{\xv}^2\norm{\dot{\xv}}^2} \xv \cdot \left(\dot{\xv} \times \bv\right) \left(\dot{\xv} \cdot \xv \right) + \norm{\dot{\xv}}^2 \nonumber  \\
          & = \frac{\kcf}{\norm{\xv}^2\norm{\dot{\xv}}^2}  \left(\xv \times\dot{\xv}\right) \cdot \bv \left( \xv \cdot \dot{\xv} \right) + \norm{\dot{\xv}}^2 \nonumber \\
          & = \frac{\kcf}{\norm{\xv}^2\norm{\dot{\xv}}^2} R S + \norm{\dot{\xv}}^2  = \kcf \frac{R S}{R^2+S^2} + \norm{\dot{\xv}}^2, \label{eq:dot_r}
\end{align}
\begin{align}
  \dot{S} & = \left( \xv \times \ddot{\xv}\right) \cdot \bv \nonumber                                                                                                                                                     \\
          & = \frac{\kcf}{\norm{\xv}^2\norm{\dot{\xv}}^2} \left[\xv \times \left(\dot{\xv} \times \bv\right) \left(\dot{\xv} \cdot \xv \right)\right] \cdot \bv \nonumber                                                 \\
          & = \frac{\kcf}{\norm{\xv}^2\norm{\dot{\xv}}^2} \left[ \left( \left( \xv \cdot \bv \right)\dot{\xv} - \left(\xv \cdot \dot{\xv}\right) \bv \right) \left(\dot{\xv} \cdot \xv \right)\right] \cdot \bv \nonumber \\
          & = \frac{\kcf}{\norm{\xv}^2\norm{\dot{\xv}}^2} \left[ \left( - \left(\xv \cdot \dot{\xv}\right) \bv \right) \left(\dot{\xv} \cdot \xv \right)\right] \cdot \bv \nonumber                                       \\
          & = -\frac{\kcf}{\norm{\xv}^2\norm{\dot{\xv}}^2} \left(\xv \cdot \dot{\xv}\right)^2 \left(\bv \cdot \bv\right)  = -\kcf \frac{R^2}{R^2+S^2}, \label{eq:dot_s}
\end{align}
where we use \cref{eq:ddx} and the properties of the triple product and the vector triple product to transform their derivatives.
We use the concept of barrier functions (compare, e.g., \cite{AmesCooEgeNot2019}) to analyze collision avoidance. To this end, we define the barrier function $V_B$ as
\begin{align}
  V_B           & = \frac{1}{\norm{\dv}^2} = \frac{1}{\norm{\xv}^2} = \frac{\norm{\dot{\xv}}^2}{R^2 + S^2}, \label{eq:barrier_ab}                                                                \\
  \diff{}{t}V_B & = -2\norm{\xv}^{-3} \diff{}{t} \norm{\xv} = -2 \frac{\xv \cdot \dot{\xv}}{\norm{\xv}^{4}} = -2 \frac{R \norm{\dot{\xv}}^4}{\left(R^2 + S^2\right)^2} \label{eq:dot_barrier_rs}
\end{align}
and note that $V_B(t) < \infty$ for all $t \geq 0$ implies that there is no collision.
One of the main points we would like to emphasize, which is crucial for the following analysis, is that for a given (constant) $\norm{\dot{\xv}}\neq0$ (cf. \cref{lma:constant_vel}), $R$ and $S$ describe a two-dimensional nonlinear autonomous system. Furthermore, with \cref{eq:barrier_ab}, we will use this representation to analyze collision avoidance.
In particular, a collision occurs if and only if $R=S=0$.
Correspondingly, despite the discontinuity of the dynamics \eqref{eq:dot_r} and \eqref{eq:dot_s} at $S=R=0$, this autonomous system is well-defined if no collision occurs.
Note that given a uniform bound $V_B(t)\leq c_1<\infty, c_1>0$ (and $\norm{\dot{\xv}} \leq c_2 <\infty, c_2>0$), we get a uniform bound on the steering force $\Fm_\mathrm{s}$ and the acceleration $\ddot{\xv}$ (cf. \cref{eq:vlc_force,eq:cf_b_field,eq:cf_force,eq:total_force}).
%
\subsection{Collision Avoidance with a Pointlike Obstacle} \label{sec:point_obstacle}
In the following, we show the collision avoidance properties of the \glspl{cf}, i.e., that $(R(t),S(t))\neq (0,0)$ holds for all $t \geq 0$.
An exemplary vector field of the $R$-$S$ dynamics can be seen in \cref{fig:rs_dynamics}.
It shows the phase plot of $R$ and $S$ as stated in \cref{eq:dot_r,eq:dot_s} for some fixed $\norm{\dot{\xv}}, \kcf > 0$.
In the following, we show collision avoidance using a case distinction depending on the quadrant of the initial condition $R(0)$, $S(0)$.
This case distinction is also illustrated in \cref{fig:rs_dynamics} where we colored the quadrants according to the corresponding lemmas.
Additionally, this figure shows that there exist initial conditions which inevitably lead to a collision.
Note that these conditions are located on a ray in the direction of the origin (shown in red in \cref{fig:rs_dynamics}), which we therefore term \emph{collision ray} in the remainder of this paper.
Mathematically, the \emph{collision ray} is characterized by $S(t) + c R(t) = 0$ with $c=\frac{\kcf}{\norm{\dot{\xv}}^2}$.
A rigorous proof and analysis of these conditions are given in \cref{lma:point_critical}.
Please note that in the following analysis, we assume initial conditions that are collision free, i.e., $V_B (0) < \infty$, which implies $(R(0), S(0)) \neq (0, 0)$ (the origin in \cref{fig:rs_dynamics}).
In particular, \cref{lma:point_Rgeq0,lma:point_Sless0,lma:point_critical} cover all the different possible initial conditions, which together ensures collision avoidance for (almost) all initial conditions, compare \cref{thm:ca_point}.
\begin{figure}%
  \centering
  \includegraphics[width=0.7\columnwidth]{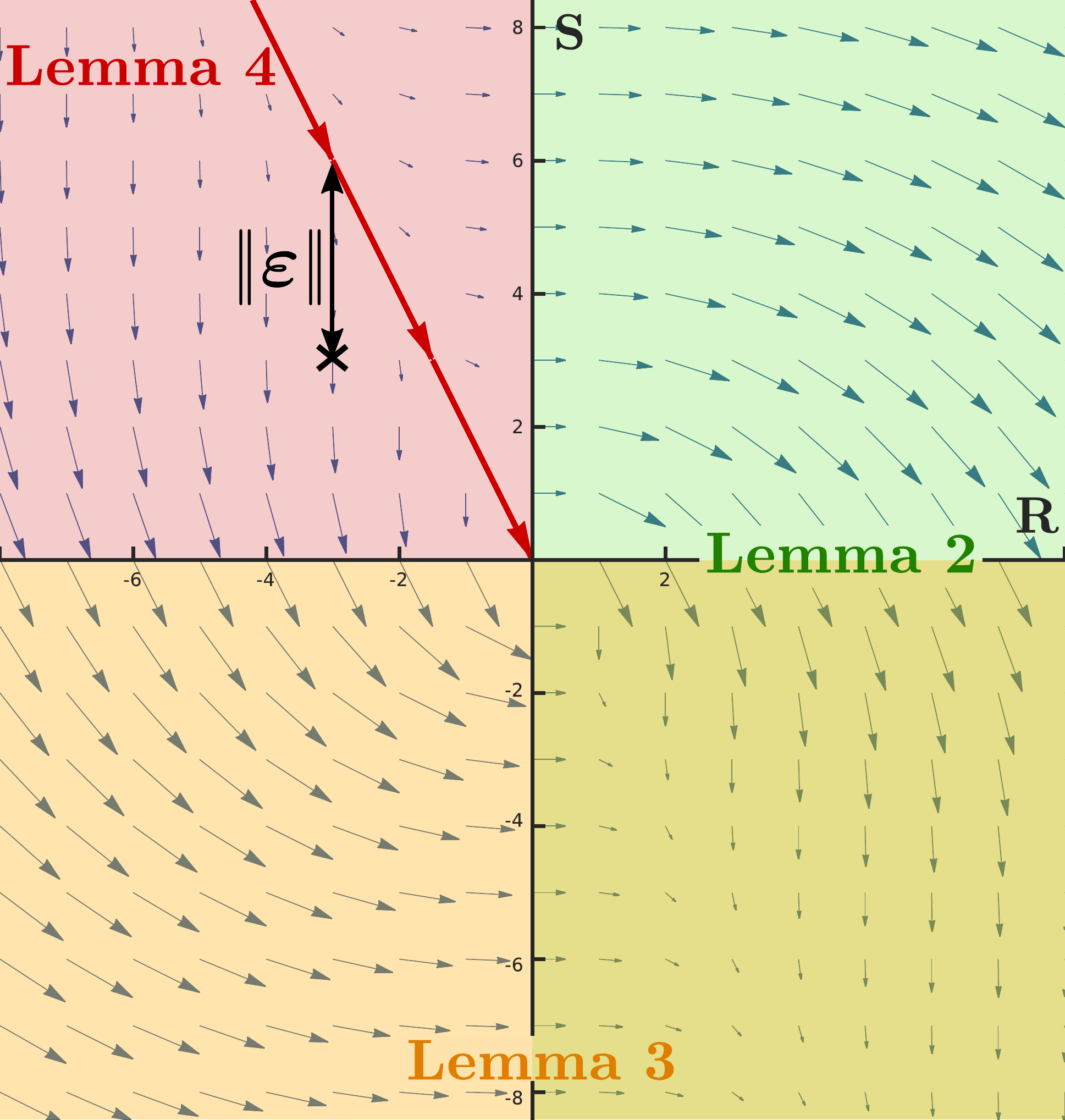}%
  \caption{Vector field plot that shows the $RS$ dynamics according to \cref{eq:dot_r,eq:dot_s}, where the arrows indicate the changing trends of $R$ and $S$.
    Initial conditions with $R(0)\geq 0$ are studied in \cref{lma:point_Rgeq0} (green). \Cref{lma:point_Sless0} considers initial conditions with $S(0)<0$ (orange). Note that both \cref{lma:point_Rgeq0,lma:point_Sless0} cover initial conditions in the lower right quadrant. \Cref{lma:point_critical} investigates initial conditions where $R(0)<0$ and $S(0)>0$ (red).
    Additionally, the \emph{collision ray} of all initial conditions that lead to an inevitable collision is depicted in red. The distance in $S$-direction from the collision ray, $\norm{\epsilon}$ (compare \cref{eq:def_epsilon}), is shown with a black arrow for exemplary initial conditions at $R(0)=-3$, $S(0)=3$.}%
  \label{fig:rs_dynamics}%
\end{figure}
We start the analysis with the simplest case, where the robot is already moving away from the obstacle.
%
\begin{lemma}\label{lma:point_Rgeq0}
  For any $R(0) \geq 0$, the dynamics in \cref{eq:dot_r,eq:dot_s} yield $R(t) \geq 0$ and $V_B(t) \leq V_B(0)$ for all $t \geq 0$.
\end{lemma}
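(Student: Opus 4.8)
The plan is to split the statement into its two parts and note that the second follows from the first almost for free. First I would observe that, by \cref{eq:dot_barrier_rs}, $\diff{}{t}V_B = -2\,R\,\norm{\dot{\xv}}^4/(R^2+S^2)^2$, so $\dot{V}_B$ always has the opposite sign of $R$; hence, as soon as $R(t)\ge 0$ is established for all $t\ge 0$, we immediately get $\dot{V}_B(t)\le 0$ and therefore $V_B(t)\le V_B(0)$. So the whole proof reduces to showing that $R$ cannot become negative.

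For that, the key structural observation is that the vector field of the $(R,S)$-system points into the closed half-plane $\{R\ge 0\}$ along its boundary: setting $R=0$ in \cref{eq:dot_r} gives $\dot{R}=\norm{\dot{\xv}}^2>0$, where $\norm{\dot{\xv}}$ is a positive constant by \cref{lma:constant_vel} together with the standing assumption $\dot{\xv}(0)\neq 0$ (and the right-hand sides \cref{eq:dot_r,eq:dot_s} are well defined at such a point because $R=0$ together with the no-collision assumption forces $S\neq 0$). I would turn this into a rigorous forward-invariance argument by contradiction: assuming $R(t_1)<0$ for some $t_1>0$, set $t_0:=\sup\{t\in[0,t_1]\,:\,R(t)\ge 0\}$; continuity of $R$ and $R(0)\ge 0$ give $t_0<t_1$, $R(t_0)=0$, and $R(t)<0$ for all $t\in(t_0,t_1]$, but $\dot{R}(t_0)=\norm{\dot{\xv}}^2>0$ means $R$ is strictly increasing in a neighbourhood of $t_0$, so $R(t)>R(t_0)=0$ for $t$ slightly larger than $t_0$ --- contradicting $R(t)<0$ on $(t_0,t_1]$. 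Hence $R(t)\ge 0$ for all $t$, which also gives $\dot{V}_B\le 0$ and thus $V_B(t)\le V_B(0)$.

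The one technical point to be careful about --- which I expect to be the main, though minor, obstacle --- is the discontinuity of \cref{eq:dot_r,eq:dot_s} at $R=S=0$ and the need for the solution to exist on all of $[0,\infty)$. Away from the origin the $(R,S)$-dynamics are locally Lipschitz, so there is a unique solution on a maximal interval $[0,T)$ on which $(R,S)\neq(0,0)$; the argument above applies verbatim on $[0,T)$, yielding $R\ge 0$ and hence $V_B\le V_B(0)$ there, which in turn gives $R^2+S^2=\norm{\dot{\xv}}^2/V_B\ge \norm{\dot{\xv}}^2/V_B(0)>0$, i.e.\ the trajectory stays uniformly away from the singularity. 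Since moreover the right-hand sides of \cref{eq:dot_r,eq:dot_s} are globally bounded off any neighbourhood of the origin ($|\dot{R}|\le \kcf/2+\norm{\dot{\xv}}^2$ and $|\dot{S}|\le\kcf$), no finite-time escape is possible, so $T=\infty$ and the claim holds for all $t\ge 0$.
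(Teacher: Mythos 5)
Your proposal is correct and follows essentially the same route as the paper's proof: show that $\dot{R}=\norm{\dot{\xv}}^2>0$ on the boundary $R=0$ to conclude forward invariance of $\{R\geq 0\}$, then use \cref{eq:dot_barrier_rs} to get $\dot{V}_B\leq 0$. Your additional care about the singularity at the origin and global existence of the $(R,S)$-trajectory is a welcome tightening of a step the paper treats only implicitly, but it does not change the argument.
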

%
\begin{proof}
  To show that $R(t)\geq 0$ holds recursively it suffices to show that $\dot{R}(t)\geq 0$ if $R(t)=0$, which holds with $\dot{R} = \norm{\dot{\xv}}^2 \geq 0$.
  Furthermore, using $R(t)\geq 0$ in \cref{eq:dot_barrier_rs}, we have $\dot{V}_B(t) \leq 0$ for all $t \geq 0$ and hence $V_B(t) \leq V_B(0)$ for all $t \geq 0$.
\end{proof}
In the next lemma, we consider initial conditions where the robot is already following the intended direction around the obstacle (by definition of the magnetic field vector).
%
%
\begin{lemma}\label{lma:point_Sless0}
  For any $S(0) < 0$, the dynamics in \cref{eq:dot_r,eq:dot_s} yield $S(t)^2 \geq S(0)^2$ and $V_B(t) \leq \frac{\norm{\dot{\xv}}^2}{S(0)^2}<\infty$ for all $t \geq 0$.
\end{lemma}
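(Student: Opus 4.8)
The plan is to exploit that $S$ is monotonically non-increasing along the dynamics. From \cref{eq:dot_s} we have $\dot S = -\kcf R^2/(R^2+S^2)\le 0$ wherever the autonomous $R$--$S$ system is defined (that is, away from the origin $R=S=0$), so $S$ decreases monotonically and $S(t)\le S(0)$ for every $t$ in the maximal interval of existence. This single observation does essentially all the work.

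First I would settle global existence of the solution. Since $S(0)<0$ and $S$ is non-increasing, $S(t)\le S(0)<0$ on the maximal interval, hence $R(t)^2+S(t)^2\ge S(t)^2\ge S(0)^2>0$ there. Thus the trajectory stays bounded away from the discontinuity at $R=S=0$, and combined with $\norm{\dot{\xv}}\equiv\norm{\dot{\xv}(0)}$ from \cref{lma:constant_vel} — which bounds the right-hand sides of \cref{eq:dot_r,eq:dot_s} on this region — standard ODE continuation arguments give a solution for all $t\ge 0$ (so the barrier never blows up, i.e., no collision).

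The two claims then follow directly. From $S(t)\le S(0)<0$ we get $|S(t)|\ge|S(0)|$, hence $S(t)^2\ge S(0)^2$, which is the first assertion. Plugging $R(t)^2+S(t)^2\ge S(t)^2\ge S(0)^2$ into the barrier expression \cref{eq:barrier_ab} gives $V_B(t)=\norm{\dot{\xv}}^2/(R(t)^2+S(t)^2)\le\norm{\dot{\xv}}^2/S(0)^2<\infty$ for all $t\ge 0$.

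The only real subtlety is the bookkeeping around the discontinuity of the dynamics at $R=S=0$, which is handled by the monotonicity of $S$ keeping us away from it; beyond that, no Lyapunov construction other than the already-defined barrier $V_B$ is needed, in sharp contrast to the harder initial conditions $R(0)<0$, $S(0)>0$ treated in \cref{lma:point_critical}.
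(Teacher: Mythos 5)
Your proposal is correct and follows essentially the same route as the paper: observe $\dot S\le 0$ from \cref{eq:dot_s}, conclude $S(t)\le S(0)<0$ hence $S(t)^2\ge S(0)^2$, and substitute into \cref{eq:barrier_ab}. The extra remarks on global existence near the discontinuity at $R=S=0$ are a welcome (if brief) tightening of a point the paper disposes of in \cref{sec:auxiliary_dynamics}.
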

%
\begin{proof}
  \Cref{eq:dot_s} ensures $\dot{S}(t) \leq 0$.
  Hence, given $S(0) < 0$, we have $S(t)^2 \geq S(0)^2 > 0$ for all $t \geq 0$.
  Therefore, using \cref{eq:barrier_ab} yields $V_B(t) \leq \frac{\norm{\dot{\xv}}^2}{S(0)^2}$ for all $t \geq 0$.
\end{proof}
%
The result in~\cref{lma:point_Sless0} deteriorates for $|S(0)|$ arbitrary small. However, uniform bounds for $V_B(t)$ in the form $V_B(t)\leq k_2V_B(0)$ with $k_2>0$ can also be derived, compare \cref{sec:proof_remark}.\\
%
%
For the analyses of the critical case ($R<0, S>0$) we define
\begin{equation}
  \epsilon(t) = S(t)+c R(t) \label{eq:def_epsilon}
\end{equation}
with $c=\frac{\kcf}{\norm{\dot{\xv}}^2}$, which corresponds to the distance to the collision ray in $S$-direction (cf. \cref{fig:rs_dynamics}).
Note that $\norm{\dot{\xv}}$ constant (cf. \cref{lma:constant_vel}) implies that $c$ is constant and thus $\dot{\epsilon} = \dot{S} +c \dot{R}$.
Substituting \cref{eq:dot_r,eq:dot_s}, the time derivative of \cref{eq:def_epsilon} is given by
\begin{align}
  \dot{\epsilon} & = \dot{S} +c \dot{R} \nonumber                                                                     \\
                 & = -\kcf \frac{R^2}{R^2+S^2} + c\kcf \frac{R S}{R^2+S^2} + c\norm{\dot{\xv}}^2\nonumber             \\
                 & = \kcf\frac{-R^2 + cRS}{R^2+S^2} + \kcf \nonumber                                                  \\
                 & = \kcf\frac{S \left(S + cR \right)}{R^2+S^2} = \kcf \frac{S}{R^2+S^2} \epsilon. \label{eq:dot_eps}
\end{align}
Upon inspection of \cref{eq:dot_eps}, it becomes evident that for $S>0$, the collision ray, i.e., $\epsilon=0$, is non-attractive and hence any initial condition $\epsilon \neq 0$ ensures collision avoidance.
This intuition is formally proved in the following lemma.
\begin{lemma}\label{lma:point_critical}
  For any $R(0)<0, S(0) > 0$ and $\epsilon(0)\neq 0$, there exists a time $\tau>0$ such that $R(\tau)=0$ or $S(\tau)=0$ and for all $t\in[0,\tau]$:  $|\epsilon(t)|\geq|\epsilon(0)|$.
  In addition, we have $\norm{\xv(t)}\geq \frac{ |\epsilon(0)|}{\max(c,1) \norm{\dot{\xv}}}>0$, $t\in[0,\tau]$, i.e., no collision occurs.
  Furthermore, in case $\epsilon(0)=0$, there exists a time $\tau>0$ such that $\epsilon(t)=0$ for all $t\in[0,\tau)$ and $\lim_{t\rightarrow\tau} \norm{\xv(t)}=0$, i.e., we have a collision.
\end{lemma}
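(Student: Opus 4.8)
The plan is to split the argument according to whether $\epsilon(0)=0$ or not, using throughout that, by \cref{lma:constant_vel}, $\norm{\dot\xv}$ and hence $c=\kcf/\norm{\dot\xv}^2$ are constant, so $(R,S)$ obeys the planar autonomous system \eqref{eq:dot_r}--\eqref{eq:dot_s} away from the origin and $\epsilon$ obeys the scalar equation \eqref{eq:dot_eps}. Note also that $R^2+S^2=\norm{\xv}^2\norm{\dot\xv}^2$, so a collision means $(R,S)\to(0,0)$.

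\textbf{The degenerate case $\epsilon(0)=0$.} Here I would show the collision ray $S=-cR$ is invariant and that the flow along it is trivial: substituting $S=-cR$ into \eqref{eq:dot_r}--\eqref{eq:dot_s} and using $c\norm{\dot\xv}^2=\kcf$ shows that $\dot S=-c\dot R$ and that $\dot R=\norm{\dot\xv}^6/(\norm{\dot\xv}^4+\kcf^2)$ is a positive constant along the ray. Since the right-hand side of \eqref{eq:dot_r}--\eqref{eq:dot_s} is locally Lipschitz off the origin, uniqueness forces the trajectory from $R(0)<0$, $S(0)=-cR(0)$ to be exactly this ray segment, on which $\epsilon\equiv0$; $R$ grows at a constant positive rate and reaches $0$ (equivalently $S=0$, equivalently $R^2+S^2=0$) at the finite time $\tau=-R(0)(\norm{\dot\xv}^4+\kcf^2)/\norm{\dot\xv}^6>0$, with $\norm{\xv(t)}=\sqrt{R^2+S^2}/\norm{\dot\xv}\to0$, i.e.\ a collision.

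\textbf{The case $\epsilon(0)\neq0$.} Let $\tau\in(0,\infty]$ be the first time at which $R(\tau)=0$ or $S(\tau)=0$, so the trajectory stays in the open quadrant $Q=\{R<0,\ S>0\}$ on $[0,\tau)$ (and $\tau>0$ since $(R(0),S(0))$ lies strictly inside $Q$ and the flow is continuous there). On $Q$, equation \eqref{eq:dot_eps} gives $\tfrac{d}{dt}\epsilon^2=2\kcf\tfrac{S}{R^2+S^2}\epsilon^2\ge0$, hence $|\epsilon(t)|\ge|\epsilon(0)|>0$; in particular the trajectory never meets the collision ray, $R^2+S^2>0$ stays bounded away from zero, and the solution is well-defined on $[0,\tau)$. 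For the distance estimate I would use the elementary bound, valid on the closure $\overline Q$ (where $R\le0\le S$): $|\epsilon|=|S-c|R||\le\max(S,c|R|)\le\max(c,1)\sqrt{R^2+S^2}=\max(c,1)\norm{\dot\xv}\norm{\xv}$, which yields $\norm{\xv(t)}\ge|\epsilon(0)|/(\max(c,1)\norm{\dot\xv})>0$ on $[0,\tau)$ and, by continuity, at $t=\tau$ as well; together with the lower bound on $|\epsilon|$ this gives all the claimed inequalities on $[0,\tau]$.

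\textbf{Finiteness of $\tau$ (the main obstacle).} It remains to exclude $\tau=\infty$. Suppose $R(t)<0$ for all $t\ge0$. Then $\tfrac{d}{dt}\norm{\xv}^2=2R<0$, so $\norm{\xv}^2$ is decreasing and, being bounded below by the estimate above, converges; since $\dot S\le0$ on $Q$ and $S>0$, $S$ is decreasing and bounded below, so $(R,S)\to(R_\infty,S_\infty)$ with $R_\infty\le0$, $S_\infty\ge0$, $R_\infty^2+S_\infty^2>0$. Because $\tfrac{d}{dt}\norm{\xv}^2=2R\to2R_\infty$ while $\norm{\xv}^2$ has a finite limit, necessarily $R_\infty=0$, hence $S_\infty>0$. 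But then $(R(t),S(t))\to(0,S_\infty)$ forces $\tfrac{RS}{R^2+S^2}\to0$, so $\dot R(t)=\kcf\tfrac{RS}{R^2+S^2}+\norm{\dot\xv}^2\to\norm{\dot\xv}^2>0$ and thus $\dot R(t)\ge\tfrac12\norm{\dot\xv}^2$ for all large $t$, contradicting that $R$ is bounded and stays negative. Hence $\tau<\infty$, which completes the proof. I expect this last step to be the crux: the monotonicity of $\epsilon$ and the barrier estimate are essentially algebraic, whereas ruling out the "stalling" scenario $R\to0^-$, $S$ convergent requires a limit argument exploiting that $\dot R$ is bounded below by $\norm{\dot\xv}^2$ whenever $R$ is near $0$.
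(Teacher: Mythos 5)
Your proof is correct, and it reaches the conclusion by a genuinely different route in the one step that carries most of the weight. The monotonicity of $|\epsilon|$ via \cref{eq:dot_eps}, the treatment of the invariant ray $\epsilon=0$, and the collision claim all match the paper's argument in substance. Two things differ. First, your distance bound is obtained in one stroke from the elementary inequality $|\epsilon|=\bigl|S-c|R|\bigr|\leq\max(S,c|R|)\leq\max(c,1)\sqrt{R^2+S^2}=\max(c,1)\,\norm{\dot{\xv}}\norm{\xv}$ on the closed quadrant, whereas the paper splits into $\epsilon(0)<0$ (bounding $R^2$ from below) and $\epsilon(0)>0$ (bounding $S^2$ from below); your version is cleaner and directly produces the $\max(c,1)$ in the statement. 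Second, and more significantly, you establish finiteness of the exit time $\tau$ by a qualitative Barbalat-type contradiction (monotone convergence of $\norm{\xv}^2$ and $S$ forces $R\to 0$ and $S\to S_\infty>0$, whence $\dot R\to\norm{\dot{\xv}}^2>0$, contradicting $R<0$), while the paper derives explicit constant sign bounds on $\dot S$ (for $\epsilon(0)<0$) and on $\dot R$ via the auxiliary \cref{lma:bound_ab-term} (for $\epsilon(0)>0$) and integrates them to get concrete upper bounds $\tau_{\max}$. Your argument avoids the auxiliary lemma entirely and is arguably more transparent, but it buys only $\tau<\infty$; the paper's quantitative $\tau_{\max}$ expressions are reused verbatim as the template for the disturbance-robust version in \cref{lma:disturbance_critical}, where a soft limiting argument would no longer suffice. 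One small point to spell out: you assert $(R,S)\to(R_\infty,S_\infty)$ after establishing only that $\norm{\xv}^2$ and $S$ converge; the convergence of $R$ itself follows because $R^2=\norm{\dot{\xv}}^2\norm{\xv}^2-S^2$ converges and $R<0$ fixes the sign, which is a one-line addition worth including.
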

%
\begin{proof}
  The following proof is split into three parts. First, we show that we leave the critical quadrant in finite time, i.e., for each $R(0)<0$ and $S(0)>0$, there exists a constant $\tau_{\max}>0$, such that
  \begin{equation}\label{eq:def_tau_max}
    \tau_{\max} \geq  \tau:= \inf_{t\geq 0, S(t)=0 \lor R(t)=0} t.
  \end{equation}
  Then we show $|\epsilon(t)|\geq |\epsilon(0)|$, $t\in[0,\tau]$. Finally, we prove a lower bound on $\norm{\xv}$ for $\epsilon(0)\neq 0$.\\
  %
  \textbf{Part I.}
  In the following, we distinguish three cases: $\epsilon(0)<0$, $\epsilon(0)>0$, $\epsilon(0)=0$.
  We show that for all three cases, there exists a time $\tau$ such that $R(\tau)=0$ or $S(\tau)=0$.\\
  We would like to point out that in the following, in \emph{case i)} and \emph{case ii)}, i.e., $\epsilon(0)<0$ and $\epsilon(0)>0$ we repeatedly use
  \begin{equation} \label{eq:eps_geq_eps0}
    |\epsilon(t)| \geq |\epsilon(0)| > 0,
  \end{equation}
  for all $t\in[0,\tau]$, which will be recursively established in Part~II.\\
  %
  \emph{Case i)}: $\epsilon(0) <0$:
  From \cref{eq:eps_geq_eps0} and $\epsilon(0) <0$, it follows that $\epsilon(t) <0$ for all $t\in [0,\tau]$,
  which implies that $\tau$ in \cref{eq:def_tau_max} is such that $S(\tau)=0$ and $R(t)<0$ for all $t\in [0,\tau]$.
  Furthermore, by \cref{eq:def_epsilon} and $S(t)\geq0$ it follows that $\frac{S^2}{R^2}<c^2$ for all $t\in [0,\tau]$.
  Correspondingly, using \cref{eq:dot_s}, we have
  \begin{equation} \label{eq:bound_dotS_below}
    \dot{S}=-\kcf \frac{1}{1+\frac{S^2}{R^2}}< - \frac{\kcf}{1+c^2}.
  \end{equation}
  For contradiction, suppose $\tau>\tau_{\max,1}$ with
  \begin{equation}
    \tau_{\max,1}: = \frac{S(0) \left(1+c^2\right)}{\kcf}. \label{eq:tmax_below}
  \end{equation}
  Then, integration of \cref{eq:bound_dotS_below} yields
  \begin{align*}
    S(\tau_{\max,1}) & \leq S(0) - \frac{\kcf}{1+c^2}\tau_{\max,1}                           \\
                     & \leq S(0) - \frac{\kcf}{1+c^2}\frac{S(0) \left(1+c^2\right)}{\kcf}=0,
  \end{align*}
  which contradicts $S(t)> 0$, $t\in[0,\tau)$. Hence, by contradiction we leave the upper left quadrant with $\tau\leq\tau_{\max,1}$.\\
  %
  \emph{Case ii)}: $\epsilon(0) >0$:
  Note that \cref{eq:eps_geq_eps0} and $\epsilon(0) >0$ imply $\epsilon(t) >0$ and hence $S(t)>0$ for all $t\in [0,\tau]$.\\
  \emph{Case ii) a)}: $c\geq1$:
  Applying \cref{eq:bound_ab-term} from \cref{lma:bound_ab-term} to the first term of \cref{eq:dot_r} yields
  \begin{equation} \label{eq:bound_dotR_above_cgeq1}
    \dot{R}(t) \geq  - \kcf \frac{c}{c^2+1} + \frac{\kcf}{c} = \frac{\kcf}{c+c^3} > 0.
  \end{equation}
  For contradiction, suppose $\tau>\tau_{\max,2a}$ with
  \begin{equation}
    \tau_{\max,2a}: = \frac{- R(0) \left( c + c^3\right)}{\kcf} \label{eq:tmax_above_1}.
  \end{equation}
  Then, integration of \cref{eq:bound_dotR_above_cgeq1} yields
  \begin{align*}
    R(\tau_{\max,2a}) & \geq R(0) + \frac{\kcf}{c+c^3}\tau_{\max,2a}                                  \\
                      & \geq R(0) + \frac{\kcf}{c+c^3} \frac{- R(0) \left( c + c^3\right)}{\kcf} = 0,
  \end{align*}
  which contradicts $R(t)< 0$, $t\in[0,\tau)$. Hence, by contradiction we leave the upper left quadrant with $\tau\leq\tau_{\max,2a}$.\\
  \emph{Case ii) b)}: $c<1$:
  Analogously, applying \cref{eq:bound_ab-term} from \cref{lma:bound_ab-term} to the first term of \cref{eq:dot_r} yields
  \begin{align}
    \dot{R}(t) & \geq   - \frac{\kcf}{2} + \frac{\kcf}{c} =\kcf \frac{2-c}{2c} \nonumber \\
               & \geq \frac{\kcf}{2} > 0. \label{eq:bound_dotR_above_cless1}
  \end{align}
  For contradiction, suppose $\tau>\tau_{\max,2b}$ with
  \begin{equation}
    \tau_{\max,2b}:  = -\frac{ 2 R(0)}{\kcf} \label{eq:tmax_above_2}.
  \end{equation}
  Then, integration of \cref{eq:bound_dotR_above_cless1} yields
  \begin{align*}
    R(\tau_{\max,2b}) & \geq R(0) + \frac{\kcf}{2}\tau_{\max,2b}             \\
                      & \geq R(0) + \frac{\kcf}{2}\frac{ -2 R(0)}{\kcf} = 0,
  \end{align*}
  i.e., by contradiction we leave the upper left quadrant at some time $\tau\leq \tau_{\max,2b}$.\\
  %
  \emph{Case iii)}: $\epsilon(0)=0$:
  From \cref{eq:dot_eps}, we know that the linear subspace $\epsilon=S+cR = 0$ is positively invariant.
  Additionally, from $R^2=\frac{S^2}{c^2}$ and \cref{eq:dot_s}, it follows that $S$ is linearly decreasing with
  \begin{align}
    \dot{S} & = -\kcf \frac{R^2}{R^2+S^2}        \nonumber      \\
            & = -\kcf \frac{1}{1+c^2} \label{eq:dotS_collision}
  \end{align}
  for all $t\in[0,\tau)$. Correspondingly, the integration of \cref{eq:dotS_collision} in the interval $[0,\tau)$ with
  \begin{equation}
    \tau  =  \frac{S(0) \left(1+c^2\right)}{\kcf} \label{eq:tmax_collision}
  \end{equation}
  yields
  \begin{align*}
    S(\tau) & = S(0) - \frac{\kcf}{1+c^2}\tau                                       \\
            & = S(0) - \frac{\kcf}{1+c^2} \frac{S(0) \left(1+c^2\right)}{\kcf} = 0,
  \end{align*}
  which also implies $R(\tau)=0$.\\
  %
  \textbf{Part II.}
  In order to show that $|\epsilon(t)|\geq |\epsilon(0)|$ holds, we use that $\kcf \frac{S}{R^2+S^2} \geq 0$ holds for all $t\in[0,\tau]$. Therefore, from \cref{eq:dot_eps}, it follows that $\diff{}{t}|\epsilon(t)|\geq 0$ and thus
  \begin{equation}
    |\epsilon(t)|\geq |\epsilon(0)| \label{eq:eps_geq_eps0_proof}
  \end{equation}
  for all $t\in [0,\tau]$.\\
  %
  \textbf{Part III.}
  In the following, we show a lower bound for the robot obstacle distance for $|\epsilon(0)|>0$.
  Additionally, we show that the initial condition $\epsilon(0) = 0$ inevitably leads to a collision with the obstacle.\\
  %
  \emph{Case i)}: $\epsilon(0) <0$:
  Given $\epsilon(0) <0$, \cref{eq:eps_geq_eps0_proof} implies $\epsilon(t) \leq \epsilon(0)$ for all $t\in[0,\tau]$. Then \cref{eq:def_epsilon} yields $R(t) \leq \frac{\epsilon(0)-S(t)}{c} \leq \frac{\epsilon(0)}{c}$ and thus $R(t)^2 \geq \frac{\epsilon(0)^2}{c^2}$. Combining this inequality with \cref{eq:barrier_ab} establishes the following bound for the barrier function $V_B(t)\leq\frac{c^2 \norm{\dot{\xv}}^2}{\epsilon(0)^2} = \frac{\kcf^2}{\norm{\dot{\xv}}^2 \epsilon(0)^2}$, where the last equality follows from recalling that $c=\frac{\kcf}{\norm{\dot{\xv}}^2}$.
  Therefore, using \cref{eq:barrier_ab}
  \begin{equation}\label{eq:bound_x_point_eps_less}
    \norm{\xv(t)} \geq \frac{\norm{\dot{\xv}} |\epsilon(0)|}{\kcf}=\frac{|\epsilon(0)|}{c \norm{\dot{\xv}}},\quad t\in[0,\tau].
  \end{equation}
  %
  \emph{Case ii)}: $\epsilon(0) >0$:
  Analogously, given $\epsilon(0) >0$, \cref{eq:eps_geq_eps0_proof} implies $\epsilon(t) \geq \epsilon(0)$ for all $t\in[0,\tau]$. Then \cref{eq:def_epsilon} yields $S(t) \geq \epsilon(t) \geq \epsilon(0)$ and thus $S(t)^2 \geq \epsilon(0)^2$. This can be used with \cref{eq:barrier_ab} to establish the following bound for the barrier function $V_B(t)\leq\frac{\norm{\dot{\xv}}^2}{\epsilon(0)^2}$ and therefore
  \begin{equation} \label{eq:bound_x_point_eps_greater}
    \norm{\xv(t)} \geq \frac{|\epsilon(0)|}{\norm{\dot{\xv}}},\quad t \in [0,\tau].
  \end{equation}
  %
  \emph{Case iii)}: $\epsilon(0) =0$:
  Given $\epsilon(0) = 0$, \cref{eq:dot_eps} implies $\epsilon(t) = \epsilon(0)=0$ for all $t\in[0,\tau]$. Then \cref{eq:def_epsilon} yields $cR(t) = S(t)$ and thus $R(\tau)= S(\tau) = 0$. Therefore, $V_B(\tau) = \infty$ and $\norm{\xv(\tau)}=0$, i.e., the robot collides with the obstacle.
\end{proof}
%
%
As stated before, \cref{lma:point_Rgeq0,lma:point_Sless0,lma:point_critical} combined cover all possible initial conditions and therefore guarantee collision avoidance except for a set of initial conditions of measure zero.
\begin{theorem}\label{thm:ca_point}
  For the dynamics $\ddot{\xv}=\Fm_{\mathrm{CF}}$ with $\Fm_{\mathrm{CF}}$ according to \cref{eq:cf_force} and with a point obstacle, no collision occurs for almost all initial conditions.
  In particular, a collision occurs if and only if the initial condition satisfies $R(0)<0$, $S(0)>0$ and $\epsilon(0)=0$.
\end{theorem}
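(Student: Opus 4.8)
The plan is to obtain \cref{thm:ca_point} by assembling the three case-lemmas, using the observation made after \cref{eq:dot_barrier_rs} that, for a collision-free initial condition, the robot--obstacle pair evolves as an autonomous two-dimensional system in $(R,S)$ and that a collision occurs exactly when $(R(t),S(t))\to(0,0)$ in finite time, equivalently $V_B(t)\to\infty$. Excluding the initial conditions that already coincide with the obstacle (i.e.\ $V_B(0)=\infty$, equivalently $(R(0),S(0))=(0,0)$), the admissible initial conditions $(\xv(0),\dot{\xv}(0))$, with $\dot{\xv}(0)\ne0$ per the standing assumption, split into the three regimes $\{R(0)\ge 0\}$, $\{S(0)<0\}$ and $\{R(0)<0,\ S(0)\ge 0\}$. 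In the first two regimes \cref{lma:point_Rgeq0} and \cref{lma:point_Sless0} directly give a finite bound on $V_B(t)$ for all $t\ge 0$, hence no collision, so only the third regime needs work.

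For $R(0)<0,\ S(0)\ge 0$ I would argue as follows. If $S(0)>0$ and $\epsilon(0)\ne 0$, \cref{lma:point_critical} supplies a finite exit time $\tau>0$ with $R(\tau)=0$ or $S(\tau)=0$ together with the bound $\norm{\xv(t)}\ge \tfrac{|\epsilon(0)|}{\max(c,1)\norm{\dot{\xv}}}>0$ on $[0,\tau]$, so no collision up to $\tau$; this bound also gives $(R(\tau),S(\tau))\ne(0,0)$, so exactly one of $R(\tau),S(\tau)$ vanishes, and inspecting the proof the exit is through $R(\tau)=0$ (with $S(\tau)>0$) if $\epsilon(0)>0$ and through $S(\tau)=0$ (with $R(\tau)<0$) if $\epsilon(0)<0$. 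Restarting the $(R,S)$-analysis at $t=\tau$---legitimate by autonomy of the dynamics and well-posed since $(R(\tau),S(\tau))\ne(0,0)$---we are back in the regime $R\ge 0$, handled by \cref{lma:point_Rgeq0}, or in the edge case $S=0,\ R<0$. That edge case---which also covers the initial condition $S(0)=0,\ R(0)<0$ and is not literally contained in any of the three lemmas---is dealt with directly: \cref{eq:dot_s} gives $\dot S=-\kcf<0$ there, so the trajectory immediately enters $S<0$ (with no collision on the short transient interval, by continuity and $\norm{\xv}>0$), whereupon \cref{lma:point_Sless0} applies. In all sub-cases $V_B$ stays finite for all $t\ge 0$, so no collision occurs when $R(0)<0,\ S(0)>0,\ \epsilon(0)\ne0$.

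For the converse, if $R(0)<0,\ S(0)>0$ and $\epsilon(0)=0$, the final part of \cref{lma:point_critical} yields $\lim_{t\to\tau}\norm{\xv(t)}=0$, i.e.\ a collision. Together with the previous paragraph this shows that a collision occurs if and only if $R(0)<0$, $S(0)>0$ and $\epsilon(0)=0$. For the measure-zero statement, note that on $\{\dot{\xv}\ne 0\}$ the map $(\xv,\dot{\xv})\mapsto \epsilon(0)=(\xv\times\dot{\xv})\cdot\bv+\tfrac{\kcf}{\norm{\dot{\xv}}^2}\,\xv\cdot\dot{\xv}$ is real-analytic and not identically zero, so its zero set---which contains the collision set---has Lebesgue measure zero in the phase space; hence no collision for almost all initial conditions, as claimed.

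The bookkeeping is routine once the lemmas are in place; the points that need care, and where I expect the main (if modest) obstacle to lie, are the concatenation argument at the exit time $\tau$---which relies on the autonomy of the $R$--$S$ system and on the solution remaining well-defined at $\tau$ because $(R(\tau),S(\tau))\ne(0,0)$---and the isolated boundary case $S(0)=0,\ R(0)<0$, which falls outside the hypotheses of all three lemmas and must be reduced by hand to the $S<0$ regime.
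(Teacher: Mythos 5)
Your proposal is correct and follows essentially the same route as the paper, whose proof of this theorem is simply to combine \cref{lma:point_Rgeq0,lma:point_Sless0,lma:point_critical} according to the quadrant of $(R(0),S(0))$. You additionally make explicit two details the paper leaves implicit --- the concatenation at the exit time $\tau$ (valid by autonomy of the $R$--$S$ system and $(R(\tau),S(\tau))\neq(0,0)$) and the boundary case $S(0)=0$, $R(0)<0$ resolved via $\dot S=-\kcf<0$ --- both of which are sound and consistent with the paper's argument.
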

%
\begin{proof}
  Follows by combining cases in \cref{lma:point_Rgeq0,lma:point_Sless0,lma:point_critical} (cf. \cref{fig:rs_dynamics}).
\end{proof}
The derived analysis can also be used constructively in the critical case (\cref{lma:point_critical}) if we are close to the collision ray.
In particular, in case $R<0,S>0$, we can redefine
\begin{equation} \label{eq:scaling_kcf}
  \tilde{k}_\mathrm{CF} = \kcf - \mathrm{sgn}(\epsilon) \left(\epsilon_\mathrm{min} - |\epsilon|\right) \frac{\norm{\dot{\xv}}^2}{|R|} \quad \text{if } |\epsilon| < \epsilon_\mathrm{min},
\end{equation}
where $0<\epsilon_\mathrm{min}$ is a scaling factor that defines the threshold for the distance to the collision ray and should be chosen small to avoid frequent changes of $\kcf$. By using $\tilde{k}_\mathrm{CF}$ instead of $\kcf$ we ensure $|\epsilon| \geq \epsilon_\mathrm{min} > 0$ and therefore collision avoidance (cf. \cref{thm:ca_point}). This property can directly be verified by inserting \cref{eq:scaling_kcf} in \cref{eq:def_epsilon}, which results in $\epsilon=\epsilon_\mathrm{min}\mathrm{sgn}(\epsilon)$.
Moreover, upon inspection of \cref{eq:bound_x_point_eps_less,eq:bound_x_point_eps_greater}, it becomes clear that $\epsilon$ is also related to the minimal robot obstacle distance, albeit establishing a uniform bound is challenging due to the dependence on $\kcf$ in \cref{eq:bound_x_point_eps_less}.
The condition in \cref{eq:scaling_kcf} should be checked when the robot comes very close to an obstacle, i.e., $\norm{\dv} \leq \dmin$ with $0 < \dmin < \dmax$, and when we are in the critical case $R<0,S>0$.
Note that in case $R$ and $S$ are simultaneously close to zero, the robot is already close to the obstacle (cf. \cref{eq:barrier_ab}) and it might not be possible to guarantee a minimal distance with the desired avoidance direction $\bv$.
In this case, \cref{eq:scaling_kcf} can result in $\tilde{k}_\mathrm{CF}<0$, which is equivalent to switching the sign of $\bv$ and therefore changing the desired avoidance direction.
Thus, \cref{eq:scaling_kcf} ensures a uniform bound on $\epsilon$, which enables us to ensure collision avoidance without necessarily restricting the avoidance direction $\bv$ around the obstacle.\\
Note that in the virtual agents framework (\cref{sec:multi-agents}) we always have one initial condition with $S<0$ and one with $S>0$.
Notably, the case $S\leq 0$ is not critical (cf. \cref{lma:point_Sless0}) and hence independent of the modification \eqref{eq:scaling_kcf}, there exists at least one collision free agent.\\
Note that in \cref{thm:ca_point} we assumed that the robot is controlled by the \gls{cf} force only, i.e., no attractive force for goal convergence is applied. Additionally, we assumed a scenario with only one point obstacle as described in \cref{sec:preliminaries}. These limitations will be addressed in the following section.
%
\section{Collision Avoidance for Multiple Obstacles with Attractive Force }
In the following, we extend the results from the previous section, such that the collision avoidance guarantees remain valid in the presence of multiple obstacles (\cref{sec:multi_obstacles}), arbitrarily shaped point clouds (\cref{sec:point_clouds}) and when used in conjunction with an appropriately scaled goal force (\cref{sec:combined_force_analysis}).
This is done by first considering additional bounded disturbance forces $\norm{\zv(t)} \leq \zmax$ (\cref{sec:disturbance}).
Subsequently, we exploit this robustness property by interpreting the forces of additional obstacles or an additional attractive force as disturbances.
Note that when additional disturbances are taken into account, it can no longer be guaranteed that $\norm{\dot{\xv}(t)}$ will remain constant.
However, in this section, we assume that there exist bounds on the robot velocity $\dxmin \leq \norm{\dot{\xv}(t)} \leq \dxmax$, which also yields
\begin{equation}
  \cmin = \frac{\kcf}{\dxmax^2}  \leq c(t) = \frac{\kcf}{\norm{\dot{\xv}(t)}^2} \leq \frac{\kcf}{\dxmin^2} = \cmax. \label{eq:def_cminmax}
\end{equation}
Compliance with these bounds will be discussed in detail later (cf. \cref{lma:maxvel,lma:minvel}).\\
Please note that we analyze the collision avoidance properties of the \gls{cf} algorithm. Therefore, in the following we only investigate robot positions which are close to the obstacle, i.e., we consider a maximum robot obstacle distance $\norm{\dv} = \norm{\xv} \leq \xmax \leq \dmax$.
%
%
\subsection{Collision Avoidance under Disturbances}\label{sec:disturbance}
In the following, we show that the previous lemmas hold even when an additional bounded disturbance $\zv$ with $\norm{\zv} \leq \zmax$ perturbs the circular field force.
Therefore, we include the following changes to the previous equations
\begin{align}
  \Fm_\mathrm{CF,d}             & = \Fm_\mathrm{CF} + \zv                                                                                                         \\
  \dot{R}                       & = \kcf \frac{R S}{R^2+S^2} + \norm{\dot{\xv}}^2 + \xv \cdot \zv \label{eq:dot_r_disturbed}                                      \\
  \dot{S}                       & = -\kcf \frac{R^2}{R^2+S^2} + \left(\xv \times \zv \right) \cdot \bv \label{eq:dot_s_disturbed}                                 \\
  \diff{}{t} \norm{\dot{\xv}}^2 & = 2 \dot{\xv} \cdot \ddot{\xv} = 2 \dot{\xv} \cdot \zv                                                                          \\
  \dot{c}                       & = -2 \kcf \frac{\dot{\xv} \cdot \zv}{\norm{\dot{\xv}}^4} = -2c \frac{\dot{\xv} \cdot \zv}{\norm{\dot{\xv}}^2}. \label{eq:dot_c}
\end{align}
%
%
We start the analysis again with the simplest case, i.e., the robot is moving away from the obstacle.
\begin{lemma}\label{lma:disturbance_Rgeq0}
  Suppose $\zmax \leq \frac{\dxmin^2}{\xmax}$. Then, for any $R(0) \geq 0$, the dynamics in \cref{eq:dot_r_disturbed,eq:dot_s_disturbed} yield $R(t) \geq 0$ and $V_B(t) \leq V_B(0)$  for all $t \geq 0$.
\end{lemma}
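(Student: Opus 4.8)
The plan is to mirror the proof of \cref{lma:point_Rgeq0} and show that the stated disturbance bound is precisely what is needed so that $R$ still cannot become negative. As in the undisturbed case, it suffices to verify that $\dot{R}(t) \geq 0$ whenever $R(t) = 0$; this then establishes $R(t) \geq 0$ for all $t \geq 0$ recursively. Evaluating \cref{eq:dot_r_disturbed} at $R = 0$, the circular-field contribution $\kcf \frac{R S}{R^2 + S^2}$ vanishes, leaving $\dot{R} = \norm{\dot{\xv}}^2 + \xv \cdot \zv$, so the whole question reduces to signing this two-term expression on the set $\{R=0\}$.

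Next I would lower-bound this expression using the standing assumptions of this section. The velocity bound $\norm{\dot{\xv}(t)} \geq \dxmin$ gives $\norm{\dot{\xv}}^2 \geq \dxmin^2$, while Cauchy--Schwarz together with $\norm{\xv} \leq \xmax$ and $\norm{\zv} \leq \zmax$ gives $|\xv \cdot \zv| \leq \xmax \zmax$. The hypothesis $\zmax \leq \dxmin^2/\xmax$ then yields $\xv \cdot \zv \geq -\dxmin^2$, hence $\dot{R} \geq \dxmin^2 - \dxmin^2 = 0$ at $R = 0$, as required. Therefore $R(t) \geq 0$ for all $t \geq 0$.

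For the barrier bound I would use that the identity $\diff{}{t} V_B = -2\frac{\xv \cdot \dot{\xv}}{\norm{\xv}^4} = -2\frac{R}{\norm{\xv}^4}$ (equivalently \cref{eq:dot_barrier_rs}, since $R^2+S^2 = \norm{\xv}^2\norm{\dot{\xv}}^2$ remains a pointwise algebraic identity regardless of $\zv$) is purely geometric and thus unaffected by the disturbance. Since $R(t) \geq 0$, this gives $\dot{V}_B(t) \leq 0$, and hence $V_B(t) \leq V_B(0)$ for all $t \geq 0$; in particular $\norm{\xv(t)}$ stays bounded away from zero, so the trajectory and the auxiliary states remain well defined and no collision occurs.

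I expect the argument to be essentially routine; the only point requiring care is the well-posedness issue already implicit in \cref{lma:point_Rgeq0} — the recursive/invariance argument should be read as applying on the maximal interval on which $\norm{\xv} > 0$, with the monotonicity of $V_B$ then ruling out finite-time blow-up so that this interval is all of $[0,\infty)$. The substantive content is simply the observation that $\zmax \leq \dxmin^2/\xmax$ is exactly the threshold at which the velocity term $\norm{\dot{\xv}}^2$ dominates the worst-case disturbance term $\xv \cdot \zv$ on the boundary $R = 0$.
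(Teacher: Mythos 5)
Your proposal is correct and follows essentially the same argument as the paper's proof: evaluate \cref{eq:dot_r_disturbed} at $R=0$, bound $\xv\cdot\zv \geq -\xmax\zmax \geq -\dxmin^2$ via Cauchy--Schwarz and the hypothesis, conclude $R(t)\geq 0$ recursively, and then use \cref{eq:dot_barrier_rs} to get $\dot{V}_B\leq 0$. The extra remarks on well-posedness are a harmless refinement of what the paper leaves implicit.
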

The proof can be found in \cref{sec:proof_disturbance_Rgeq0}.\\
For the following lemma and theorem, we study $R(t)\leq 0, t\in[0,\tau]$ for some $\tau>0$, which implies
\begin{equation}
  \norm{\xv(t)} \leq \norm{\xv(0)}, \label{eq:bound_norm_x}
\end{equation}
where we used \cref{eq:dot_barrier_rs} with  $\dot{V}_B(t) \geq 0$ and $\norm{\xv}^2=\frac{1}{V_B}$.
%
\begin{lemma}\label{lma:disturbance_Sless0}
  Suppose $\zmax \leq \min\left(\frac{\kcf}{\norm{\xv(0)}}\frac{\dxmin^2}{\dxmax^2},\frac{\dxmin^2}{2 \norm{\xv(0)}}, -\frac{\dxmin S(0)}{4 \norm{\xv(0)}^2}\right)$.
  Then, for any $R(0)<0, S(0) < 0$, there exists a time $\tau>0$ such that $R(\tau)=0$ and for all $t\in[0,\tau]$:  $V_B(t) \leq \frac{4 \dxmax^2}{S(0)^2}$.
\end{lemma}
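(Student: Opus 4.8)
The proof will parallel \cref{lma:point_Sless0}, whose only ingredient was $\dot S\le 0$; the new difficulty is the disturbance term $(\xv\times\zv)\cdot\bv$ in \cref{eq:dot_s_disturbed}, which can make $\dot S>0$. The plan is to work on the maximal interval $[0,\tau)$ on which $R(t)<0$ and $S(t)<0$, i.e.\ on which the trajectory stays in the ``third quadrant''. There $RS>0$, so the circular-field part of \cref{eq:dot_r_disturbed} is nonnegative, and since $\norm{\xv(t)}\le\norm{\xv(0)}$ on $[0,\tau)$ by \cref{eq:bound_norm_x}, we get $\xv\cdot\zv\ge-\norm{\xv(0)}\zmax\ge-\tfrac12\dxmin^2$ from the second bound on $\zmax$; hence $\dot R\ge\dxmin^2-\tfrac12\dxmin^2=\tfrac12\dxmin^2>0$. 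Thus $R$ increases monotonically and reaches $0$ no later than $\tau\le 2|R(0)|/\dxmin^2<\infty$, provided $S$ does not vanish first --- which the next step rules out.

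The heart of the argument is the bound $|S(t)|\ge\tfrac12|S(0)|$ on $[0,\tau]$, obtained by regarding $|S|$ as a function of the monotone variable $R$. From \cref{eq:dot_s_disturbed}, $\diff{|S|}{t}=\kcf\tfrac{R^2}{R^2+S^2}-(\xv\times\zv)\cdot\bv$, so $|S|$ can decrease only when $(\xv\times\zv)\cdot\bv>\kcf\tfrac{R^2}{R^2+S^2}$; since the left side is at most $\norm{\xv(0)}\zmax\le\kcf\dxmin^2/\dxmax^2$ by the first bound on $\zmax$, any such time forces $\tfrac{R^2}{R^2+S^2}<\dxmin^2/\dxmax^2$. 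Combined with $R^2+S^2=\norm{\xv}^2\norm{\dot\xv}^2\le\norm{\xv(0)}^2\dxmax^2$, this gives $|R|<\dxmin\norm{\xv(0)}$. Hence, as $R$ sweeps monotonically from $R(0)$ to $0$, the set of $R$-values on which $|S|$ can shrink is contained in an interval of length at most $\dxmin\norm{\xv(0)}$; on it $\big|\diff{|S|}{R}\big|=\big((\xv\times\zv)\cdot\bv-\kcf\tfrac{R^2}{R^2+S^2}\big)/\dot R\le\norm{\xv(0)}\zmax/(\tfrac12\dxmin^2)$, so the total decrease of $|S|$ over $[0,\tau]$ is at most $\tfrac{2\norm{\xv(0)}^2}{\dxmin}\zmax\le\tfrac12|S(0)|$ by the third bound on $\zmax$. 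Therefore $|S(t)|\ge\tfrac12|S(0)|>0$ on $[0,\tau)$, which also shows $S$ never reaches $0$ there (closing the bootstrap so that $\tau$ is indeed the first zero of $R$) and that no collision occurs, since $R^2+S^2\ge S^2\ge\tfrac14S(0)^2>0$.

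It then only remains to convert this into the barrier bound: by \cref{eq:barrier_ab} and $\norm{\dot\xv(t)}\le\dxmax$,
\begin{equation*}
  V_B(t)=\frac{\norm{\dot\xv}^2}{R^2+S^2}\le\frac{\dxmax^2}{S(t)^2}\le\frac{4\dxmax^2}{S(0)^2},\qquad t\in[0,\tau].
\end{equation*}

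I expect the crux to be the confinement of the ``bad'' region --- where the disturbance can erode $|S|$ --- to $R$-values of length only $\dxmin\norm{\xv(0)}$ instead of the a priori $|R(0)|\le\dxmax\norm{\xv(0)}$; this gain of the factor $\dxmax/\dxmin$ is precisely what lets the third, single-power-of-$\dxmin$ bound on $\zmax$ suffice, and it hinges on coupling the first bound on $\zmax$ with the uniform estimate $R^2+S^2\le\norm{\xv(0)}^2\dxmax^2$. A secondary point requiring care is the simultaneous (bootstrapping) nature of the estimates $\dot R\ge\tfrac12\dxmin^2$, the confinement of the bad region, and $|S|\ge\tfrac12|S(0)|$, which all have to be closed together on the maximal interval where $R,S<0$.
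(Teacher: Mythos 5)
Your proof is correct and follows essentially the same route as the paper's: both hinge on $\dot R\geq\dxmin^2/2$ (second bound on $\zmax$), on confining the region where $S$ can grow to $|R|<\dxmin\norm{\xv(0)}$ (first bound on $\zmax$, via $R^2+S^2=\norm{\xv}^2\norm{\dot\xv}^2$), and on bounding the resulting erosion of $|S|$ by $|S(0)|/2$ (third bound on $\zmax$). The only cosmetic difference is that you integrate $|S|$ against the monotone variable $R$, whereas the paper splits the time interval at the first instant $R(t)\geq-\dxmin\norm{\xv(0)}$ and bounds the time spent thereafter; the two bookkeepings yield identical estimates.
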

The proof can be found in \cref{sec:proof_disturbance_Sless0}.\\
As discussed before, $R\geq 0$ represents the case when the robot has already passed the obstacle and subsequently the guarantees from \cref{lma:disturbance_Rgeq0} apply.\\
%
%
The result in~\cref{lma:disturbance_Sless0} deteriorates for $|S(0)|$ arbitrary small. However, uniform bounds for $V_B(t)$ in the form $V_B(t)\leq k_2V_B(0)$ with $k_2>0$ can also be derived, compare \cref{sec:proof_remark}.\\
%
%
For the analysis of the critical case in the upper left quadrant with disturbances, we again use the definition of $\epsilon$ in \cref{eq:def_epsilon} as the deviation from the collision ray. Note that the derivative changes to
\begin{equation} \label{eq:dot_eps_disturbed}
  \dot{\epsilon}               = \dot{S} + c \dot{R} + \dot{c} R.
\end{equation}
\begin{theorem}\label{lma:disturbance_critical}
  Suppose \begin{equation} \label{eq:bound_cmax}
    \cmax<\begin{cases}
      \frac{\cmin^2+1}{\cmin} & \text{if } \cmin\geq1 \\
      2                       & \text{otherwise}
    \end{cases}.
  \end{equation}
  Then, for any $R(0)<0, S(0) > 0$ and $\epsilon(0)\neq 0$, there exists a time $\tau>0$ and a disturbance bound $z_{\max}>0$, such that $R(\tau)=0$ or $S(\tau)=0$ and for all $t\in[0,\tau]$:  $|\epsilon(t)|\geq\frac{|\epsilon(0)|}{2}$.
  Moreover, $\norm{\xv(t)}\geq \frac{|\epsilon(0)|}{2 \dxmax \max(\cmax,1)}$, $t\in[0,\tau]$, i.e., no collision occurs.
\end{theorem}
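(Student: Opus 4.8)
The plan is to mimic the undisturbed analysis in \cref{lma:point_critical}, treating the disturbance terms in \cref{eq:dot_r_disturbed,eq:dot_s_disturbed,eq:dot_c} as small perturbations whose effect on $\dot\epsilon$ can be absorbed provided $z_{\max}$ is chosen sufficiently small relative to $|\epsilon(0)|$, $\dxmin$, $\dxmax$, and $\xmax$. First I would compute $\dot\epsilon$ explicitly from \cref{eq:dot_eps_disturbed} by substituting \cref{eq:dot_r_disturbed,eq:dot_s_disturbed,eq:dot_c}. The nominal part reproduces $\kcf \frac{S}{R^2+S^2}\epsilon$ exactly as in \cref{eq:dot_eps}, and the remainder collects the disturbance contributions $\xv\cdot\zv$, $(\xv\times\zv)\cdot\bv$, and $\dot c\, R$; each of these is bounded in magnitude by a constant times $\zmax \xmax$ (using $\norm{\xv}\leq\xmax$, $\norm{\dot\xv}\leq\dxmax$, and the bound on $\dot c$ from \cref{eq:dot_c}, noting $|R|\le\norm\xv\norm{\dot\xv}\le\xmax\dxmax$). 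So we obtain $\dot\epsilon = \kcf \frac{S}{R^2+S^2}\epsilon + g(t)$ with $|g(t)|\le C\,\zmax$ for an explicit constant $C$ depending only on $\xmax,\dxmax,\dxmin,\kcf$.

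The second step is the $\epsilon$-invariance argument. As long as $S(t)>0$ and $|\epsilon(t)|\geq |\epsilon(0)|/2$, the nominal term has the sign of $\epsilon$ and magnitude at least $\kcf \frac{S}{R^2+S^2}\frac{|\epsilon(0)|}{2}$; but this lower bound degenerates as $S\to 0$, so the cleaner route is to use $\diff{}{t}|\epsilon| \ge -|g(t)| \ge -C\zmax$ and argue that $|\epsilon|$ cannot drop from $|\epsilon(0)|$ to $|\epsilon(0)|/2$ before the (uniformly bounded) exit time $\tau$. This forces the coupling: I would first establish an upper bound $\tau_{\max}$ on the exit time $\tau$ (the time to reach $R=0$ or $S=0$) that holds \emph{as long as} $|\epsilon|\geq|\epsilon(0)|/2$ and $S>0$, exactly as in Part~I of \cref{lma:point_critical} but with the disturbed $\dot R$, $\dot S$ bounds — here \cref{eq:bound_cmax} is precisely what guarantees $\dot R$ stays bounded away from zero in the $\epsilon(0)>0$ sub-case (the condition mirrors \cref{eq:bound_dotR_above_cgeq1} and \cref{eq:bound_dotR_above_cless1} with $c$ replaced by the worst-case $\cmax$), and the disturbance only shrinks these rates slightly. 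Then choosing $\zmax \leq \frac{|\epsilon(0)|}{2 C \tau_{\max}}$ guarantees $|\epsilon(t)|\geq|\epsilon(0)|/2$ throughout $[0,\tau_{\max}]\supseteq[0,\tau]$, closing the recursion (a standard continuity/bootstrap argument: the set of $t$ on which $|\epsilon|\ge|\epsilon(0)|/2$ is closed, nonempty, and shown to be open relative to $[0,\tau_{\max}]$).

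The final step is the distance bound. With $|\epsilon(t)|\geq|\epsilon(0)|/2$ secured and $S>0$, $R<0$ on $[0,\tau]$: in the case $\epsilon(0)>0$ one has $S(t)\geq\epsilon(t)\geq|\epsilon(0)|/2$, and in the case $\epsilon(0)<0$ one has $|c R(t)|\geq|\epsilon(t)|-S(t)$... more carefully, $\epsilon<0$ and $S\ge 0$ give $cR \le \epsilon \le -|\epsilon(0)|/2$ so $|R|\ge\frac{|\epsilon(0)|}{2c}\ge\frac{|\epsilon(0)|}{2\cmax}$; either way $\max(|R(t)|,|S(t)|)\geq \frac{|\epsilon(0)|}{2\max(\cmax,1)}$, and since $\norm{\xv}^2 = \frac{R^2+S^2}{\norm{\dot\xv}^2}\ge \frac{\max(R^2,S^2)}{\dxmax^2}$ by \cref{eq:barrier_ab}, we conclude $\norm{\xv(t)}\geq \frac{|\epsilon(0)|}{2\dxmax\max(\cmax,1)}$. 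I expect the main obstacle to be the bootstrap step: one must verify that the exit-time bound $\tau_{\max}$ can be obtained using only the a priori hypothesis $|\epsilon|\ge|\epsilon(0)|/2$ (not the stronger $|\epsilon|\ge|\epsilon(0)|$ available in the undisturbed proof), so that the choice of $\zmax$ in terms of $\tau_{\max}$ is not circular — this requires carefully ordering the quantifiers ($\epsilon(0)$ fixed first, then $\tau_{\max}$, then $\zmax$) and checking that \cref{eq:bound_cmax} still yields a strictly positive lower bound on $|\dot R|$ after the disturbance perturbation is subtracted.
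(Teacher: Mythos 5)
Your proposal follows essentially the same route as the paper's proof: Part I bounds the exit time $\tau_{\max}$ via the disturbed $\dot R$, $\dot S$ estimates (with condition \eqref{eq:bound_cmax} ensuring a positive lower bound on $\dot R$ when $\epsilon(0)>0$), Part II drops the sign-favorable nominal term in $\dot\epsilon$ and chooses $\zmax$ proportional to $|\epsilon(0)|/\tau_{\max}$ so that $|\epsilon|$ cannot decay below $|\epsilon(0)|/2$, and Part III converts $|\epsilon(t)|\geq|\epsilon(0)|/2$ into the stated distance bound exactly as you describe. The bootstrap/quantifier-ordering concern you flag is handled in the paper by the same recursive argument you outline, so the plan is sound.
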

The proof can be found in \cref{sec:proof_disturbance_critical}.\\
Similar to the undisturbed case, \cref{lma:disturbance_Rgeq0,lma:disturbance_Sless0,lma:disturbance_critical} cover all possible initial conditions and therefore guarantee collision avoidance except for a small set of initial conditions as discussed in the following.
Note that given a uniform bound for $\frac{\dxmax}{\dxmin}$, we can choose a sufficiently small constant $\kcf>0$ based on \cref{eq:def_cminmax} such that condition \eqref{eq:bound_cmax} in \cref{lma:disturbance_critical} holds.
As discussed in the beginning of this section, the bounds on the disturbance depend on the initial conditions. In both cases of \cref{lma:disturbance_critical}, we can observe a dependency on $\epsilon(0)$ (cf. \cref{eq:bound_z_critical_dist_below,eq:bound_z_critical_dist_above1,eq:bound_z_critical_dist_above2} in \cref{sec:proof_disturbance_critical}), which can be intuitively described as the distance to the collision ray $S = cR$ (cf. \cref{lma:point_critical}).\\
As shown in \cref{lma:disturbance_Sless0} and in the proof in \cref{sec:proof_disturbance_critical}, all disturbance bounds $\zmax$ are scaled with $\frac{1}{\norm{\xv(0)}}$ or $\frac{|\epsilon(0)|}{\norm{\xv(0)}^2}$. Therefore, as we approach an obstacle (and hence get $\norm{\xv(0)} \rightarrow 0$ in the proofs of \cref{lma:disturbance_Sless0,lma:disturbance_critical}), the maximal allowed disturbance $z_{\max}$ increases (as long as we are not on the collision ray, i.e., $\epsilon(0)\neq 0$.).
A similar reasoning can be used for the disturbance bound $\zmax$ in \cref{lma:disturbance_Rgeq0}, which is scaled with $\frac{1}{\xmax}$. We can choose $\xmax \rightarrow 0$, if the robot starts close to the obstacle, which increases the maximal allowed disturbance $z_{\max}$. Note that in this case the robot is moving away from the obstacle, which makes this case less critical anyways.
The only exception is the bound in \cref{lma:disturbance_Sless0}, when $\zmax = -\frac{\dxmin S(0)}{4 \norm{\xv(0)}^2}$, where we have the additional dependency on $S(0)$. However, as noted in the discussion after \cref{lma:disturbance_Sless0}, we can also use a different proof resulting in $\zmax \leq \min\left(\frac{\kcf}{2 \norm{\xv(0)} \left(\tilde{c}^2 + 1\right)}, \frac{\dxmax^2 + \kcf \frac{\tilde{c}}{\tilde{c}^2 + 1}}{2 \norm{\xv(0)}}\right)$.
%
\subsection{Multiple Obstacles} \label{sec:multi_obstacles}
In a next step, we extend our problem to multiple point-like obstacles.
Note that the velocity of the robot remains constant under \gls{cf} forces from multiple obstacles, i.e., $\dxmax = \dxmin, \cmax = \cmin$ (cf. \cref{lma:constant_vel}).
We interpret the \gls{cf} force from one or multiple obstacles as a disturbance to the \gls{cf} force of the closest obstacle $i$.
From the previous derivation, we know that there will be no obstacle collision if we stay away from the collision ray in the $RS$ plane ($|\epsilon(t)|>0$), which in turn holds for initial conditions away from the collision ray and for sufficiently small disturbances.
Furthermore, the \gls{cf} force $\Fm_\mathrm{CF}$ is scaled by the reciprocal of the distance between robot and obstacle (cf. \cref{eq:cf_force,eq:cf_b_field}).
We make the reasonable assumption that the obstacles are not arbitrarily close to each other, i.e., there exists a uniform lower bound on the distance between any two obstacles, and hence close to any obstacle the disturbance $z$ due to the \gls{cf} forces of other obstacles is uniformly bounded by some factor $z_{\max}$.
Additionally, as already noted in the discussion in \cref{sec:disturbance}, we have $\norm{\xv(0)} \rightarrow 0$ as we approach an obstacle (or choose $\xmax \rightarrow 0$ in case of \cref{lma:disturbance_Rgeq0}).
Considering the discussion after Theorem~\ref{lma:disturbance_critical}, we can avoid collisions for almost all initial conditions ($\epsilon(0)\neq 0$) if $z_{\max}$ is upper bounded by a factor $\frac{1}{\norm{\xv(0)}}$ or $\frac{|\epsilon(0)|}{\norm{\xv(0)}^2}$.
Hence, as $\norm{\xv(0)}\rightarrow 0$ (we consider points close to the obstacle), the set of initial conditions for which a collision is possible approaches measure zero.
Thus, the collision avoidance property of the \gls{cf} force shown in the previous lemmas essentially remains valid in the presence of multiple point-like obstacles (for most initial conditions).
Note that the constructive changes proposed in \cref{eq:scaling_kcf} can also be used in the vicinity of multiple obstacles, where $\frac{\dmin}{2}$ should be smaller than the distances between adjacent obstacles (ensuring that \cref{eq:scaling_kcf} is only active for one obstacle at a time).
%
%
\subsection{Point Cloud Obstacles} \label{sec:point_clouds}
An additional important consideration are obstacles that are represented by a point cloud.
In this case, we do not only require avoidance of the point obstacles, but we also need to ensure that the robot does not pass between two adjacent points (as the ``real'' obstacle corresponds also to the space between the point cloud points).
This follows naturally if the points are close to each other and we can ensure a lower bound on the distance to the obstacle.
To this end, the magnetic field vectors $\bv$ of the individual point obstacles within a point cloud are defined in the same direction (cf. \cref{eq:cf_current}), i.e., the induced \gls{cf} forces of the obstacle points point in a similar direction.
This is also consistent with the experimental results from \cite{HaddadinBelAlb2011}, in which the authors observed that different magnetic field vectors of surfaces of the same obstacle lead to oscillations.
In \cref{sec:simulations}, we illustrate the effect of the \gls{cf} force from point cloud obstacles by considering the critical case (\cref{lma:disturbance_critical}).
Therein, we also demonstrate empirically how the previously derived results ensure collision avoidance for most initial conditions.
%
%
\subsection{Attractive Potential Field Force} \label{sec:combined_force_analysis}
In this section, we analyze the combination of the \gls{cf} obstacle avoidance force with an additional attractive potential field, which leads to the following control law when using point mass dynamics for the robot
\begin{equation} \label{eq:combined_dynamics}
  \ddot{\xv} = \Fm_\mathrm{CF} + k_\mathrm{VLC} \Fm_\mathrm{VLC}.
\end{equation}
The attractive force should guide the robot to its goal position $\xv_\mathrm{g}\in\mathbb{R}^3$ and needs to ensure bounds on the robot velocity $\dxmin \leq \norm{\dot{\xv}} \leq \dxmax$ to guarantee the validity of the assumptions in \cref{lma:disturbance_Rgeq0,lma:disturbance_Sless0,lma:disturbance_critical}.
As in \cite{BeckerLilMulHad2021}, $\norm{\dot{\xv}} \leq \dxmax$ is achieved with the \gls{vlc} from \cite{Khatib1986}:
\begin{equation}
  \Fm_\mathrm{VLC} = -k_\mathrm{v}(\dot{\xv} - \nu \vv_\mathrm{d}),\text{ with } \vv_\mathrm{d}   = \frac{k_\mathrm{p}}{k_\mathrm{v}}(\xv_\mathrm{g} -\xv).  \label{eq:vlc_force}
\end{equation}
Here, $\vv_\mathrm{d}$ is an artificial desired velocity with the position gain $k_\mathrm{p}>0$ and the velocity gain $k_\mathrm{v}>0$.
The factor
\begin{equation} \label{eq:vlc_nu}
  \nu = \mathrm{min} \left( 1, \frac{\dxmax}{\norm{\vv_\mathrm{d}}} \right)
\end{equation}
ensures that the velocity magnitude does not exceed a specified limit $\dxmax$ as shown in the following \cref{lma:maxvel}.
%
\begin{proposition}\label{lma:maxvel}
  Suppose that $\norm{\dot{\xv}(0)}\leq \dxmax$, then the dynamics from \cref{eq:combined_dynamics} and \cref{eq:cf_current,eq:cf_b_field,eq:cf_force,eq:cf_force_sum,eq:vlc_force,eq:vlc_nu} yield $\norm{\dot{\xv}(t)} \leq \dxmax$ for all $t \geq 0$.
\end{proposition}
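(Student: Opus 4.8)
The plan is to show that the sublevel set $\{\dot{\xv}:\norm{\dot{\xv}}\leq\dxmax\}$ is forward invariant under the closed-loop dynamics \eqref{eq:combined_dynamics}, by bounding the time derivative of $\norm{\dot{\xv}}^2$ along trajectories. First I would compute
\begin{equation*}
  \diff{}{t}\frac{\norm{\dot{\xv}}^2}{2} = \dot{\xv}\cdot\ddot{\xv} = \dot{\xv}\cdot\Fm_\mathrm{CF} + k_\mathrm{VLC}\,\dot{\xv}\cdot\Fm_\mathrm{VLC}.
\end{equation*}
By the same perpendicularity argument as in \cref{lma:constant_vel} (the \gls{cf} force, including the superposition \eqref{eq:cf_force_sum}, always acts orthogonal to $\dot{\xv}$), the first term vanishes, so only the attractive term contributes to the change of the velocity magnitude.

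Second, I would estimate the attractive term. Substituting \eqref{eq:vlc_force} and using the Cauchy--Schwarz inequality,
\begin{equation*}
  \dot{\xv}\cdot\Fm_\mathrm{VLC} = -k_\mathrm{v}\left(\norm{\dot{\xv}}^2 - \nu\,\dot{\xv}\cdot\vv_\mathrm{d}\right) \leq -k_\mathrm{v}\norm{\dot{\xv}}\left(\norm{\dot{\xv}} - \norm{\nu\vv_\mathrm{d}}\right).
\end{equation*}
The key point is that the saturation factor \eqref{eq:vlc_nu} enforces $\norm{\nu\vv_\mathrm{d}} = \nu\norm{\vv_\mathrm{d}} \leq \dxmax$ by the very definition $\nu = \min(1,\dxmax/\norm{\vv_\mathrm{d}})$ (the degenerate case $\norm{\vv_\mathrm{d}}=0$ being trivial since then $\vv_\mathrm{d}=0$). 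Combining the two displays,
\begin{equation*}
  \diff{}{t}\norm{\dot{\xv}}^2 \leq -2k_\mathrm{VLC}k_\mathrm{v}\norm{\dot{\xv}}\left(\norm{\dot{\xv}} - \dxmax\right),
\end{equation*}
which is nonpositive whenever $\norm{\dot{\xv}}\geq\dxmax$, and in particular equals $0$ on the boundary $\norm{\dot{\xv}}=\dxmax$.

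Third, I would conclude forward invariance. Setting $y(t):=\norm{\dot{\xv}(t)}^2$, we have $y(0)\leq\dxmax^2$ and $\dot{y}(t)\leq 0$ whenever $y(t)\geq\dxmax^2$; a standard comparison/contradiction argument (if $y$ ever exceeded $\dxmax^2$, look at the last time instant at which $y=\dxmax^2$ and derive a contradiction with $\dot y\leq0$ there) then yields $y(t)\leq\dxmax^2$, i.e.\ $\norm{\dot{\xv}(t)}\leq\dxmax$, for all $t\geq 0$. The only mild technicality to mention is that $\nu$ is merely piecewise smooth (it has a kink where $\norm{\vv_\mathrm{d}}=\dxmax$) and $\Fm_\mathrm{CF}$ is discontinuous at the obstacle; neither affects the pointwise estimate above, so the invariance argument still goes through along (absolutely continuous) Carath\'eodory solutions. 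I do not expect any substantive obstacle here: the proposition is essentially a direct corollary of \cref{lma:constant_vel} together with the velocity-saturation design of the \gls{vlc} force.
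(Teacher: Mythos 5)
Your proposal is correct and follows essentially the same route as the paper: cancel the \gls{cf} term via the orthogonality argument of \cref{lma:constant_vel}, then use the saturation built into $\nu$ to show the \gls{vlc} term is nonpositive on the boundary $\norm{\dot{\xv}}=\dxmax$. The only cosmetic difference is that you merge the paper's two-case split ($\norm{\vv_\mathrm{d}}<\dxmax$ vs.\ $\norm{\vv_\mathrm{d}}\geq\dxmax$) into the single observation $\nu\norm{\vv_\mathrm{d}}\leq\dxmax$, which is a slight streamlining of the same argument.
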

\begin{proof}
  Given that $\norm{\dot{\xv}(0)} \leq \dxmax$, it suffices to show that $\norm{\dot{\xv}}$ is non-increasing if $\norm{\dot{\xv}}=\dxmax$.
  Hence, we consider $\norm{\dot{\xv}}=\dxmax$ and a case distinction based on $\vv_\mathrm{d}(\xv)$ in \cref{eq:vlc_force}:\\
  \emph{Case i)}: $\norm{\vv_\mathrm{d}} < \dxmax$: Note that $\norm{\vv_\mathrm{d}} < \dxmax$ yields $\nu = 1$ (cf. \cref{eq:vlc_nu}).
  Using \cref{eq:vlc_force} and the result from \cref{lma:constant_vel}, i.e., $\dot{\xv} \cdot \Fm_\mathrm{CF} = 0$, we get
  \begin{align*}
    \diff{}{t}\frac{\norm{\dot{\xv}}^2}{2} & = \dot{\xv}\cdot \ddot{\xv}= \dot{\xv}\cdot \left(\Fm_\mathrm{CF} + k_\mathrm{VLC} \Fm_\mathrm{VLC}\right)                             \\
                                           & = k_\mathrm{VLC} \dot{\xv}\cdot \Fm_\mathrm{VLC} = - k_\mathrm{VLC} k_\mathrm{v} \dot{\xv}\cdot\left(\dot{\xv} - \vv_\mathrm{d}\right) \\
                                           & \leq k_\mathrm{VLC} k_\mathrm{v}\left(\norm{\dot{\xv}} \norm{\vv_\mathrm{d}}-\norm{\dot{\xv}}^2\right)                                 \\
                                           & = k_\mathrm{VLC} k_\mathrm{v}\dxmax\left(\norm{\vv_\mathrm{d}}-\dxmax\right)
    <0.
  \end{align*}
  \emph{Case ii)}: $\norm{\vv_\mathrm{d}} \geq \dxmax$: In this case we have $\nu=\frac{\dxmax}{\norm{\vv_\mathrm{d}}}$ (cf. \cref{eq:vlc_nu}) and the derivative is given by
  \begin{align*}
    \diff{}{t}\frac{\norm{\dot{\xv}}^2}{2} & = k_\mathrm{VLC} \dot{\xv}\cdot \Fm_\mathrm{VLC}                                                                                                             \\
                                           & = k_\mathrm{VLC} k_\mathrm{v} \dot{\xv}\cdot\left( \frac{\dxmax}{\norm{\vv_\mathrm{d}}}\vv_\mathrm{d} - \dot{\xv}\right)                                     \\
                                           & \leq k_\mathrm{VLC} k_\mathrm{v} \left( \frac{\dxmax}{\norm{\vv_\mathrm{d}}} \norm{\dot{\xv}} \norm{\vv_\mathrm{d}} - \norm{\dot{\xv}}^2\right)= 0. \qedhere
  \end{align*}
\end{proof}
In order to further ensure a minimum robot velocity, we define the goal force scaling factor as follows
\begin{equation}\label{eq:k_vlc}
  k_\mathrm{VLC} = \begin{cases}
    0 & \text{if }\dot{\xv} \cdot \Fm_\mathrm{VLC} \leq 0 \land \norm{\dot{\xv}} \leq \dxmin \\
      & \land \norm{\xv_\mathrm{g} - \xv} > \xi                                              \\
    1 & \text{otherwise}
  \end{cases}.
\end{equation}
This choice of the scaling factor ensures that the robot velocity does not drop below a minimum velocity  ($\norm{\dot{\xv}} \leq \dxmin$), unless the robot is close to the goal ($\norm{\xv_\mathrm{g} - \xv} \leq \xi$).
Note that this definition of $k_\mathrm{VLC}$ leads to a discontinuous steering force that could be avoided by implementing a smooth approximation of \cref{eq:k_vlc}. However, for practical application, the \gls{cfp} planner can also be used as a reference input to an appropriate velocity controller so a redefinition is not necessarily required (cf. \cite{BeckerLilMulHad2021,BeckerCasHatLilHadMul2023}).
%
\begin{proposition}\label{lma:minvel}
  Suppose that $\norm{\xv_\mathrm{g} - \xv(t)} \geq \xi$ for all $t\in[0,\tau]$ with some $\tau>0$ and $\norm{\dot{\xv}(0)} >\dxmin $. Then, the dynamics from \cref{eq:combined_dynamics} and $k_\mathrm{VLC}$ from \cref{eq:k_vlc} yield $\norm{\dot{\xv}(t)} \geq \dxmin$ for all $t\in[0,\tau]$.
\end{proposition}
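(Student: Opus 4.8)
The plan is to argue exactly as in the proof of Proposition~\ref{lma:maxvel}, but now lower-bounding the speed. Since $\norm{\dot{\xv}(0)} > \dxmin$, it suffices to show recursively that $\norm{\dot{\xv}}$ cannot decrease below $\dxmin$ on $[0,\tau]$, i.e.\ that $\diff{}{t}\frac{\norm{\dot{\xv}}^2}{2} = \dot{\xv}\cdot\ddot{\xv} \geq 0$ at every instant $t\in[0,\tau]$ at which $\norm{\dot{\xv}(t)} = \dxmin$. Using the dynamics \cref{eq:combined_dynamics} together with $\dot{\xv}\cdot\Fm_\mathrm{CF} = 0$ from \cref{lma:constant_vel}, this reduces to $\diff{}{t}\frac{\norm{\dot{\xv}}^2}{2} = k_\mathrm{VLC}\,\dot{\xv}\cdot\Fm_\mathrm{VLC}$, so that everything comes down to controlling the sign of $k_\mathrm{VLC}\,\dot{\xv}\cdot\Fm_\mathrm{VLC}$ via the switching rule \cref{eq:k_vlc}.

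First I would fix such a time $t$, so that $\norm{\dot{\xv}(t)} = \dxmin$ and, by hypothesis, $\norm{\xv_\mathrm{g}-\xv(t)} \geq \xi$, and distinguish the two branches of \cref{eq:k_vlc}. If $\dot{\xv}\cdot\Fm_\mathrm{VLC} > 0$, then $k_\mathrm{VLC} = 1$ and $\diff{}{t}\frac{\norm{\dot{\xv}}^2}{2} = \dot{\xv}\cdot\Fm_\mathrm{VLC} > 0$. If instead $\dot{\xv}\cdot\Fm_\mathrm{VLC} \leq 0$, then (since $\norm{\dot{\xv}} = \dxmin \leq \dxmin$ and $\norm{\xv_\mathrm{g}-\xv} > \xi$) all three conditions triggering $k_\mathrm{VLC} = 0$ hold, hence $\ddot{\xv} = \Fm_\mathrm{CF}$ and $\diff{}{t}\frac{\norm{\dot{\xv}}^2}{2} = \dot{\xv}\cdot\Fm_\mathrm{CF} = 0$. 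In both cases $\diff{}{t}\norm{\dot{\xv}}^2 \geq 0$ whenever $\norm{\dot{\xv}} = \dxmin$, which yields $\norm{\dot{\xv}(t)} \geq \dxmin$ on $[0,\tau]$.

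The main delicate point is that the steering force is discontinuous in the state because $k_\mathrm{VLC}$ switches, so the word ``recursively'' above needs to be made precise: one must exclude that $\norm{\dot{\xv}}$ slips below $\dxmin$ through an accumulation of switches. I would handle this by a contradiction argument on the first time $\norm{\dot{\xv}}$ would fall below $\dxmin$ combined with the sign computation above, or, more robustly, by casting it as invariance of the closed set $\{\norm{\dot{\xv}}\geq\dxmin\}$ under the (Filippov) solutions of the discontinuous dynamics. A minor secondary subtlety is the boundary $\norm{\xv_\mathrm{g}-\xv} = \xi$, where the third condition in \cref{eq:k_vlc} fails so $k_\mathrm{VLC} = 1$ and the clean cancellation of the attractive term is lost; this can be absorbed by noting that the hypothesis $\norm{\xv_\mathrm{g}-\xv(t)}\geq\xi$ is used only through the open condition $\norm{\xv_\mathrm{g}-\xv} > \xi$ away from the goal, together with a continuity argument at the boundary. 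I expect the switching/discontinuity of $k_\mathrm{VLC}$ to be the only real obstacle; the remainder is the same elementary perpendicularity identity already exploited in \cref{lma:constant_vel,lma:maxvel}.
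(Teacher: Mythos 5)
Your proof takes essentially the same approach as the paper: the paper's argument is exactly this case distinction on the sign of $\dot{\xv}\cdot\Fm_\mathrm{VLC}$ at an instant where $\norm{\dot{\xv}}=\dxmin$, using $\dot{\xv}\cdot\Fm_\mathrm{CF}=0$ to conclude the derivative of $\norm{\dot{\xv}}^2$ is nonnegative there. The two subtleties you flag (making the ``recursive'' invariance argument rigorous despite the switching discontinuity, and the boundary case $\norm{\xv_\mathrm{g}-\xv}=\xi$, where \cref{eq:k_vlc} yields $k_\mathrm{VLC}=1$ even though the hypothesis only guarantees $\geq\xi$) are glossed over in the paper's own two-line proof, so your treatment is, if anything, the more careful one.
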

\begin{proof}
  To show that $\norm{\dot{\xv}(0)} >\dxmin$ holds recursively it suffices to show that $\norm{\dot{\xv}}$ is non-decreasing if $\norm{\dot{\xv}}=\dxmin$.
  Thus, suppose that $\dot{\xv} \cdot \Fm_\mathrm{VLC} \leq 0$ and $\norm{\dot{\xv}}=\dxmin$, then $k_\mathrm{VLC} = 0$ and therefore $\diff{}{t}\frac{\norm{\dot{\xv}}^2}{2} = k_\mathrm{VLC} \dot{\xv}\cdot \Fm_\mathrm{VLC} = 0$.
  Note that $\dot{\xv}\cdot \Fm_\mathrm{VLC} > 0$ implies $\diff{}{t}\frac{\norm{\dot{\xv}}^2}{2}>0$.
\end{proof}
\cref{lma:minvel} ensures that a minimum velocity is kept as long as the robot is outside a ball of radius $\xi$ around the goal position. If $\xi$ is suitably defined (small enough) and assuming that there are no obstacles arbitrarily close to the goal $\xv_\mathrm{g}$, the \gls{cf} force becomes inactive and hence a lower bound on the velocity is no longer needed.\\
The results from \cref{lma:maxvel,lma:minvel} and \cref{sec:disturbance} allow us to use the same argument as in \cref{sec:multi_obstacles}.
The \gls{vlc} force is interpreted as a disturbance to the \gls{cf} force of the closest obstacle, which grows the closer the robot moves to the obstacle, while the \gls{vlc} force is uniformly bounded given that $\norm{\dot{\xv}}$ is not arbitrarily large.
%
%
\section{Goal Convergence} \label{sec:goal_convergence}
No collisions are a prerequisite to facilitate goal convergence and thus for the following theorem.
Using the results from \cref{lma:disturbance_Rgeq0,lma:disturbance_Sless0,lma:disturbance_critical} and the argumentation from \cref{sec:multi_obstacles,sec:point_clouds}, we can conclude that for almost all initial conditions, no robot obstacle collision occurs.
The following theorem studies the global convergence to the goal position of a robot, that is controlled by the combined steering force resulting in the dynamics from \cref{eq:combined_dynamics}.
\begin{theorem} \label{lma:goal_convergence}
  Suppose there exists a time $\tau>0$, such that $k_\mathrm{VLC}=1$ for all $t \geq \tau$ with $k_\mathrm{VLC}$ from \cref{eq:k_vlc}. Then, the equilibrium $\xv = \xv_\mathrm{g}$, $\dot{\xv}= 0$ is globally attractive for almost\footnote{For almost all initial conditions, other than those as specified in \cref{lma:disturbance_critical} assuming $\norm{F_\mathrm{VLC}} < \zmax$.} all initial conditions.
\end{theorem}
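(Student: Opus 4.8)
The plan is to run an energy-type Lyapunov argument for the combined dynamics \eqref{eq:combined_dynamics} on the interval $[\tau,\infty)$ on which $k_\mathrm{VLC}=1$, using that the \gls{cf} force does no work, and to close the argument with an invariance/Barbalat step after the collision-avoidance results have guaranteed that the trajectory is well defined and stays away from obstacles. Concretely, I would first note that for almost all initial conditions \cref{lma:disturbance_Rgeq0,lma:disturbance_Sless0,lma:disturbance_critical} — applied with $k_\mathrm{VLC}\Fm_\mathrm{VLC}$ in the role of the bounded disturbance $\zv$, exactly as in \cref{sec:disturbance,sec:multi_obstacles,sec:combined_force_analysis}, which is admissible because $\norm{\Fm_\mathrm{VLC}}$ is uniformly bounded on bounded sets while $\Fm_\mathrm{CF}$ grows near an obstacle — imply that no collision occurs, the robot keeps a strictly positive distance from every obstacle point, and hence $\ddot{\xv}$ stays bounded along the trajectory. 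The excluded initial conditions are those flagged in the footnote (the collision-ray configurations of \cref{lma:disturbance_critical} with $\norm{\Fm_\mathrm{VLC}}<\zmax$), a set of measure zero.

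Next I would introduce the Lyapunov candidate $\mathcal{V}(\xv,\dot{\xv}) = \tfrac12\norm{\dot{\xv}}^2 + P(\xv)$, where $P$ is the rotationally symmetric potential centred at $\xv_\mathrm{g}$ whose gradient reproduces the saturated attractive force, i.e.\ $\nabla P(\xv) = -k_\mathrm{v}\,\nu(\xv)\,\vv_\mathrm{d}(\xv) = -k_\mathrm{p}\,\nu(\xv)(\xv_\mathrm{g}-\xv)$ with $\nu$ from \cref{eq:vlc_nu}. Writing $r=\norm{\xv-\xv_\mathrm{g}}$, this gives $P=\tfrac{k_\mathrm{p}}{2}r^2$ for $r\le r^\ast:=\tfrac{k_\mathrm{v}}{k_\mathrm{p}}\dxmax$ and $P$ affine in $r$ beyond $r^\ast$; the two pieces agree in value and slope at $r^\ast$, so $P\in C^1$, is positive definite with respect to $\xv_\mathrm{g}$, and is radially unbounded, whence $\mathcal{V}$ has compact sublevel sets. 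The key computation is then
\begin{align*}
  \dot{\mathcal{V}} &= \dot{\xv}\cdot\ddot{\xv} + \nabla P(\xv)\cdot\dot{\xv}
   = \dot{\xv}\cdot\bigl(\Fm_\mathrm{CF}+\Fm_\mathrm{VLC}\bigr) - k_\mathrm{v}\nu\vv_\mathrm{d}\cdot\dot{\xv}\\
  &= -k_\mathrm{v}\norm{\dot{\xv}}^2 \le 0,
\end{align*}
where $\dot{\xv}\cdot\Fm_\mathrm{CF}=0$ by (the argument of) \cref{lma:constant_vel}, and the cross term $k_\mathrm{v}\nu\vv_\mathrm{d}\cdot\dot{\xv}$ inside $\dot{\xv}\cdot\Fm_\mathrm{VLC}$ is cancelled exactly by $\nabla P\cdot\dot{\xv}$. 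Thus $\mathcal{V}$ is non-increasing, the trajectory is confined to the compact set $\{\mathcal{V}\le\mathcal{V}(\xv(\tau),\dot{\xv}(\tau))\}$ (never touching the obstacle points), $\mathcal{V}(t)$ converges, and $\int_\tau^{\infty}\norm{\dot{\xv}(t)}^2\,\mathrm{d}t<\infty$.

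To finish, since $\ddot{\xv}$ and $\dot{\xv}$ are bounded, $\norm{\dot{\xv}}^2$ is uniformly continuous, so Barbalat's lemma yields $\dot{\xv}(t)\to0$; consequently $P(\xv(t))$ converges and, $P$ being a proper strictly increasing function of $r$, $r(t)\to r_\infty\ge0$. It remains to exclude $r_\infty>0$. If $r_\infty>0$, then $\norm{\nu\vv_\mathrm{d}(\xv(t))}$ is bounded below by some $\delta_0>0$, hence so is $\norm{\Fm_\mathrm{VLC}(\xv(t),\dot{\xv}(t))}=k_\mathrm{v}\norm{\nu\vv_\mathrm{d}-\dot{\xv}}$ for large $t$; whenever the robot is outside the $\dmax$-neighbourhood of every obstacle we have $\ddot{\xv}=\Fm_\mathrm{VLC}$, so the velocity increment $\dot{\xv}(t+1)-\dot{\xv}(t)=\int_t^{t+1}\ddot{\xv}\,\mathrm{d}s$ would tend to $0$ while its right-hand side stays bounded away from $0$ ($\xv$ varies slowly once $\dot{\xv}\to0$) — a contradiction. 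Inside a $\dmax$-neighbourhood the \gls{cf} force is active, but by \cref{lma:disturbance_Rgeq0,lma:disturbance_Sless0,lma:disturbance_critical} the robot leaves it again in finite time without colliding for all entry conditions off the obstacle's collision ray; invoking the standing assumption that no obstacle lies arbitrarily close to $\xv_\mathrm{g}$ (cf.\ the discussion after \cref{lma:minvel}) together with the monotone decrease of $\mathcal{V}$, these deflections are only transient, so eventually $\ddot{\xv}=\Fm_\mathrm{VLC}$ identically, forcing $r_\infty=0$, i.e.\ $\xv(t)\to\xv_\mathrm{g}$ and $\dot{\xv}(t)\to0$.

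The step I expect to be the main obstacle is exactly the last one. The \gls{cf} force is degree-zero homogeneous in $\dot{\xv}$ and hence not even continuous at $\dot{\xv}=0$, so a textbook LaSalle argument does not apply verbatim; one must argue carefully that the robot cannot "hover" indefinitely near an obstacle in a configuration where the limiting \gls{cf} force balances $\Fm_\mathrm{VLC}$. Such configurations lie on the obstacles' collision rays and form precisely the measure-zero set excised in the footnote (with the accompanying condition $\norm{\Fm_\mathrm{VLC}}<\zmax$). Turning the quantitative facts "leaves each $\dmax$-neighbourhood in finite time, only transiently relevant deflections, $\mathcal{V}$ strictly decreasing off the collision ray" into a clean upgrade of "almost all initial conditions $\Rightarrow$ no collision" to "almost all initial conditions $\Rightarrow$ $\xv\to\xv_\mathrm{g}$" is where the real work sits; everything upstream (the Lyapunov inequality and the boundedness/compactness bookkeeping) is routine given the earlier lemmas.
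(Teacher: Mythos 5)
Your proposal is correct and follows essentially the same route as the paper: the identical energy-type Lyapunov function with the Huber-loss potential (your $P$ is exactly the paper's $U$), the same cancellation yielding $\dot{\mathcal{V}}=-k_\mathrm{v}\norm{\dot{\xv}}^2$, and an invariance-type conclusion. The only difference is in the endgame — the paper invokes the Krasovskii--LaSalle invariance principle directly (silently treating $\Fm_\mathrm{CF}$ as vanishing on $\{\dot{\xv}=0\}$), whereas you substitute Barbalat's lemma plus an explicit contradiction argument for $r_\infty=0$; the discontinuity of $\Fm_\mathrm{CF}$ at $\dot{\xv}=0$ that you flag is a legitimate concern and is precisely the point the paper glosses over.
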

\begin{proof}
  First, we define an energy-inspired Lyapunov function, which is an extended version of the function used in \cite{SinghSteWen1996}
  \begin{equation}\label{eq:lyapunov}
    V(\xv, \dot{\xv}) = \frac{1}{2}\dot{\xv}^T \dot{\xv} + U(\xv),
  \end{equation}
  where $U(\xv)$ is a potential function given by a Huber loss
  \begin{align*}
    U & (\xv)=                      \\
      & \begin{cases}
      \frac{1}{2}k_\mathrm{p} \norm{\xv - \xv_\mathrm{g}}^2                                              & \text{if } \norm{\xv-\xv_{\mathrm{g}} } < \frac{k_{\mathrm{v}}\dxmax}{k_{\mathrm{p}}} \\%
      k_{\mathrm{v}}\dxmax\norm{\xv-\xv_{\mathrm{g}}}-\dfrac{k_{\mathrm{v}}^2\dxmax^2}{2 k_{\mathrm{p}}} & \text{otherwise}
    \end{cases}.
  \end{align*}
  Note that the attractive part of our goal force corresponds to the negative gradient of this potential field, i.e.,
  \begin{align}
    \nabla U(\xv)                                       & =\begin{cases}
      k_{\mathrm{p}}(\xv-\xv_{\mathrm{g}})                                         & \text{if }\norm{\xv-\xv_{\mathrm{g}} } < \frac{k_{\mathrm{v}}}{k_{\mathrm{p}}}\dxmax \\
      k_\mathrm{v} \dxmax \frac{\xv - \xv_\mathrm{g}}{\norm{\xv - \xv_\mathrm{g}}} & \text{otherwise}
    \end{cases}                  \label{eq:grad_U} \\
    \overset{\eqref{eq:vlc_force}-\eqref{eq:vlc_nu}}{=} & - \Fm_\mathrm{VLC} - k_\mathrm{v} \dot{\xv}. \nonumber
  \end{align}
  For all $t\geq \tau$, i.e., $k_\mathrm{VLC} = 1$, the combined dynamics in \cref{eq:combined_dynamics} yield
  \begin{equation*}
    \ddot{\xv} = \Fm_\mathrm{CF} + k_\mathrm{VLC} \Fm_\mathrm{VLC} = \Fm_\mathrm{CF} - \nabla U(\xv)- k_v \dot{\xv}.
  \end{equation*}
  Thus, the derivative of the Lyapunov function becomes
  \begin{align*}
    \diff{}{t}V(\xv, \dot{\xv}) & = \dot{\xv}^T \ddot{\xv} + \nabla U(\xv)^T \dot{\xv}                                                     \\
                                & = \dot{\xv}^T \left( \Fm_\mathrm{CF}  - \nabla U(\xv)- k_v \dot{\xv} \right) + \nabla U(\xv)^T \dot{\xv} \\
                                & = - k_v \dot{\xv}^T \dot{\xv},
  \end{align*}
  where for the last equality we used the fact that $\dot{\xv}^T \Fm_\mathrm{CF} = 0$ (cf. \cref{eq:dotx_Fcf_0}).
  The invariance principle from Krasovskii and LaSalle ensures that we converge to an equilibrium, i.e., $\dot{\xv}=0$.
  Note that $\dot{\xv}\equiv 0$ implies $\ddot{\xv}=-\nabla U\stackrel{!}{=}0$, which only holds for $\xv = \xv_\mathrm{g}$ using \cref{eq:grad_U}.
  This shows that the equilibrium $\xv = \xv_\mathrm{g}$, $\dot{\xv}= 0$ is the largest invariant subset of $\{\xv, \dot{\xv} \, | \, \dot{V}(\xv, \dot{\xv}) = 0 \}$. Together with the fact that $V$ is radially unbounded, this ensures that $\xv = \xv_\mathrm{g}$, $\dot{\xv}= 0$ is globally attractive for almost all initial conditions.
\end{proof}
Intuitively, the previous theorem states that the robot will converge to a goal position $\xv_\mathrm{g}$, when there exists a time after which the attractive force is always active.
The deactivation of the attractive force only happens if it would decrease the robot velocity below a specified minimum ($\norm{\dot{\xv}} \leq \dxmin$, $\dot{\xv} \cdot \Fm_\mathrm{VLC} \leq 0$).
In these cases, the \gls{cf} force will guide the robot along the surface of the obstacle (cf. \cite{AtakaLamAlt2018,HaddadinBelAlb2011}).
Assuming obstacles that are not infinitely large or blocking all traversable paths to the goal position, at some point the robot will move again in the direction of the goal position, which will reactivate the attractive force and the robot will eventually leave the influence of the obstacle.
Therefore, we can presume that it is not possible for $k_\mathrm{VLC}=0$ to hold permanently.
Nonetheless, there exists scenarios in which the robot is trapped in a limit cycle around obstacles and the attractive force is alternately switched on and off.
Limit cycles are a known drawback of the \gls{cf} motion planner that was also already reported in \cite{HaddadinBelAlb2011} and it is possible to construct scenarios, e.g., elaborate maze-like environments, in which goal convergence is not achieved.
Thus, we cannot a-priori guarantee that a time $\tau < \infty$ as required in \cref{lma:goal_convergence} always exists.
Nevertheless, with a suitable choice of $\kcf$, $k_\mathrm{p}$ and $k_\mathrm{v}$, goal convergence is achieved even in complex environments as shown in our simulation example in the following.
%
%
\section{Simulation} \label{sec:simulations}
In this section, we demonstrate collision avoidance and goal convergence of the \gls{cfp} planner in two complex environments with multiple nonconvex point cloud obstacles. The first environment in \cref{subfig:sim_2d} highlights the theoretical results of this paper. In particular, we show multiple simulated trajectories and situations with the critical conditions from \cref{lma:disturbance_critical}, i.e., $R<0$, $S>0$.
The second environment in \cref{subfig:sim_3d_top} is used to demonstrate the practical capabilities of the planner beyond the theoretical guarantees in a \textit{3D} environment with multiple \textit{dynamic} obstacles and \textit{noisy} sensor measurements.
Additionally, we compare the \gls{cfp} planner against the \gls{apf} planner from \cite{Khatib1986} in both environments and provide evaluations of several performance criteria in \cref{tab:performance}.
Additional details regarding the simulations and videos of all simulations are provided online in \cite{datarepo2023}.\\
\begin{figure*}%
  \centering
  \begin{subfigure}{.49\textwidth}
    \centering
    \includegraphics[height=0.24\textheight]{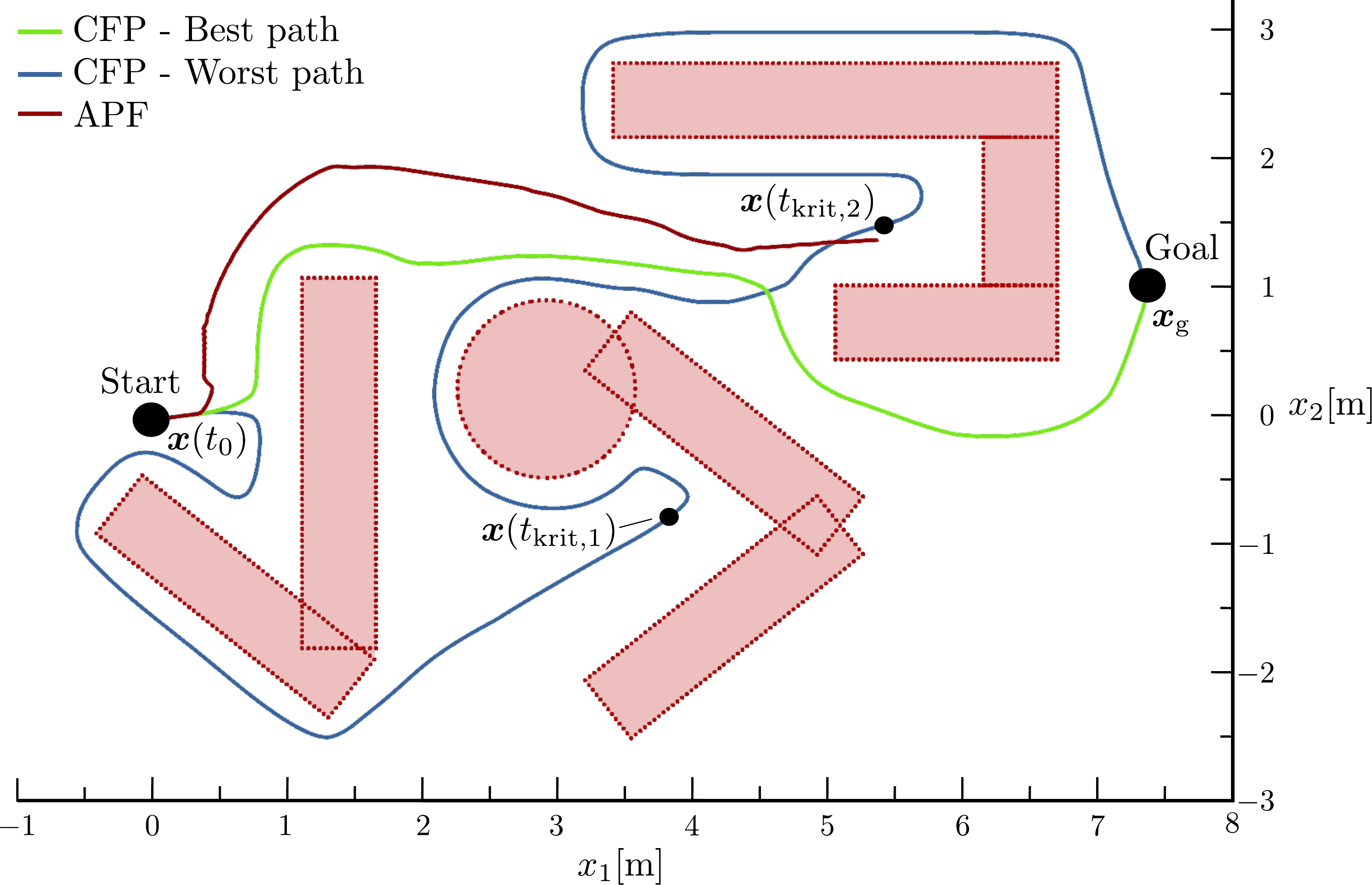}%
    \caption{2D simulation}%
    \label{subfig:sim_2d}%
  \end{subfigure}\hfill%
  \begin{subfigure}{.49\textwidth}
    \centering
    \includegraphics[height=0.24\textheight]{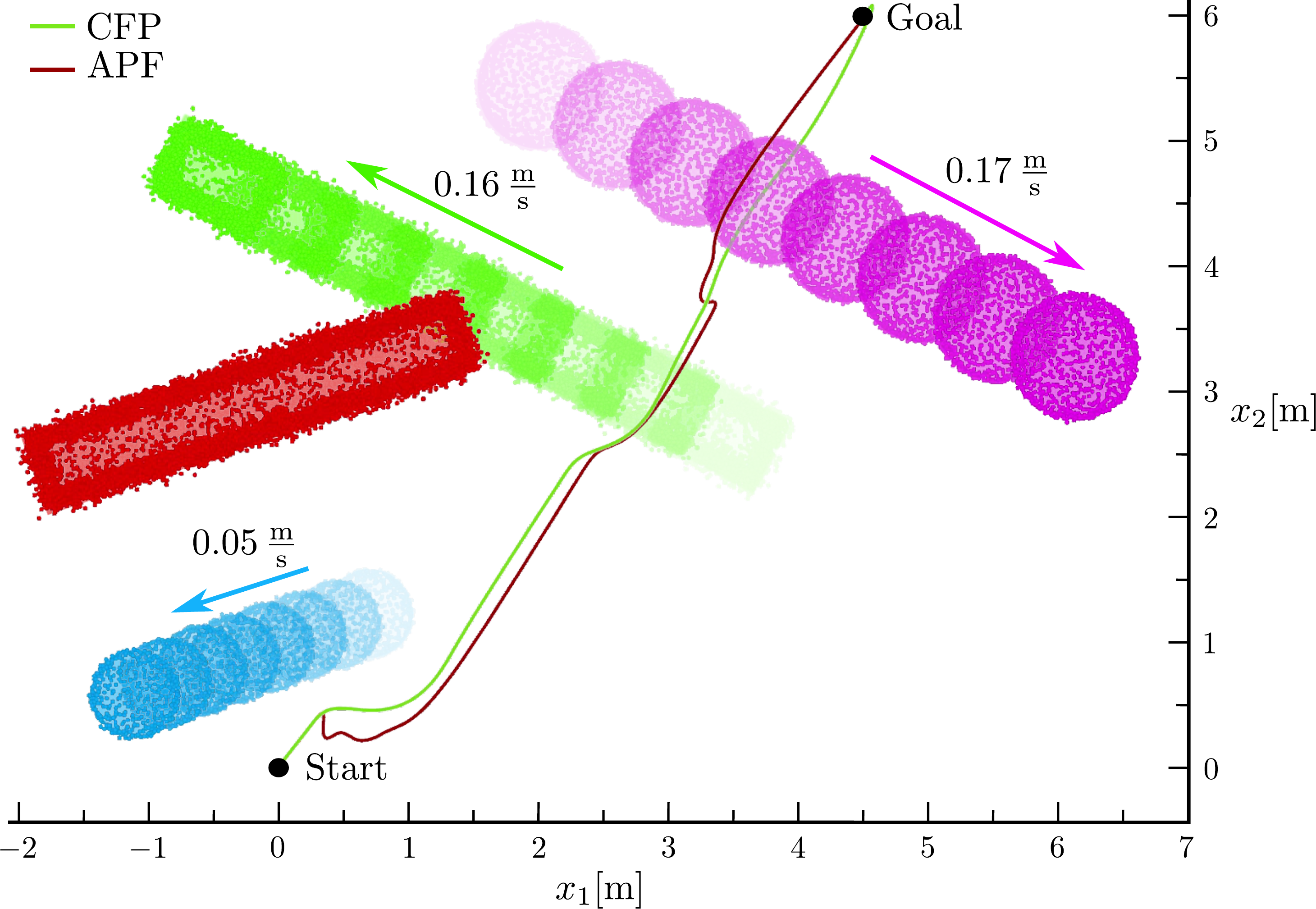}%
    \caption{3D simulation - top view}%
    \label{subfig:sim_3d_top}%
  \end{subfigure}
  \caption{Exemplary paths in two challenging environments using the \gls{cfp} and the \gls{apf} the motion planner. The most critical path of the \gls{cfp} is depicted in blue. The best paths, chosen by the virtual agents framework, are depicted in green and the paths of the \gls{apf} planner are red. Points of interest of the robot trajectory are shown in black. The left figure shows the results of the static \gls{2d} simulation. The right figure depicts the top view of the simulation in a \gls{3d} environment with dynamic obstacles and noisy point cloud data.}
  \label{fig:simulations}
\end{figure*}
All simulations were conducted 10 times due to the random noise in the \gls{3d} environment and to provide meaningful average computation times.
We discretized the simulation with an Euler using a frequency of $\SI{1}{\kilo\hertz}$ and used a computer with an Intel Core i9-9880H CPU, 2.30\,GHz and 16\,GB of memory, a C++ implementation and the Robot Operating System (ROS) \cite{QuigleyConGerFau2009}.\\
\Cref{subfig:sim_2d} shows the simulation of the \gls{cfp} motion planner in a complex \gls{2d} environment with multiple nonconvex point cloud obstacles.
In total, the virtual agents framework (cf. \cref{sec:multi-agents}) simulated 8 different trajectories, all of which successfully reached the goal without any collision.
In the figure, only the optimal robot trajectory (green) and the simulated path with most critical situations (blue), i.e., in which the robot encounters the critical conditions from \cref{lma:disturbance_critical} multiple times, are shown.
Other possible paths that were simulated by the virtual agents framework can be seen in the videos \cite{datarepo2023}.
The choice of the best trajectory was determined in both scenarios by the same cost function, using path length and minimal obstacle distance as the main evaluation criteria as described in detail in \cite{BeckerLilMulHad2021}.
The occurrence of the critical conditions from \cref{lma:disturbance_critical}, i.e., $R<0$, $S>0$, are highlighted in \cref{subfig:sim_2d} in the blue trajectory.
Here, the robot velocity points in a direction opposing the intended movement around the obstacle, which is defined by the magnetic field vector. Therefore, the \gls{cf} force first needs to change the direction of the robot to evade the obstacle in the desired way. Note that $R$ and $S$ are always defined by the closest obstacle point.
The additional scaling of $\kcf$ from \cref{eq:scaling_kcf} was never invoked because the minimal robot obstacle distance was always greater than $\dmin = \SI{0.1}{\meter}$.
In contrast, the \gls{apf} planner is not able to reach the goal in this scenario because it got stuck in a local minimum in front of the last obstacle.\\
The \gls{3d} environment is particularly challenging because the velocity of the green obstacle was defined such that it forms a barrier with the red obstacle just when the robot would try to pass between them. As can be seen in the video, the virtual agents framework first chose a trajectory in between the red and the green obstacle and only changed to a trajectory passing the green obstacle on the right side when the passage between them became to small.
In this scenario, the maximum number of simultaneous virtual agents was 22.
The \gls{apf} planner was able to reach the goal with only slightly longer path length than the \gls{cfp}. The small difference results from the initial oscillating behavior of the \gls{apf} planner when the robot is repelled from the first obstacle, which delayed the robot so that the green obstacle only marginally influenced the trajectory of the robot and made the avoidance behavior much easier.
Nevertheless, the resulting trajectory of the \gls{cfp} is shorter and takes less time than the \gls{apf} trajectory.
\begin{table}
  \centering
  \caption{Performance Evaluation}
  \begin{subtable}{\columnwidth}
    \captionsetup{font=scriptsize}
    \centering
    \caption{Planner Comparison}\label{tab:plan_comp}
    \vspace{-1ex}
    \resizebox{0.98\columnwidth}{!}{%
      \begin{tabular}{llcccc} \toprule
        {Env.}              & {Planner} & {Length [m]}  & {Duration [s]} & {Min. Dist. [m]} & {Comp. Time [\si{\micro\second}]} \\ \midrule

        \multirow{2}{*}{2D} & \gls{apf} & Failed        & Failed         & 0.29             & \num{24.87(1746)}                 \\
                            & \gls{cfp} & 8.50          & 30.0           & 0.18             & \num{33.33(3270)}                 \\ \midrule

        \multirow{2}{*}{3D} & \gls{apf} & \num{8.52(1)} & \num{27.5(1)}  & \num{0.45(2)}    & \num{46.83(17425)}                \\
                            & \gls{cfp} & \num{8.43(7)} & \num{24.7(2)}  & \num{0.24(3)}    & \num{56.13(21275)}                \\ \bottomrule
      \end{tabular}%
    }
  \end{subtable}
  \begin{subtable}{\columnwidth}
    \captionsetup{font=scriptsize}
    \centering
    \vspace{2ex}
    \caption{Computation time virtual agents}\label{tab:pred_time}
    \vspace{-1ex}
    \resizebox{0.85\columnwidth}{!}{%
      \begin{tabular}{llcc}\toprule
        \multirow{2}{*}{Env.} & \multirow{2}{*}{Virtual Agent} & \multicolumn{2}{c}{Prediction Time [ms]}                              \\
                              &                                & \SI{1}{ms} Discretization                & \SI{10}{ms} Discretization \\ \midrule
        \multirow{2}{*}{2D}   & Best agent                     & \num{131.20(618)}                        & \num{15.2(35)}             \\
                              & Avg. agent                     & \num{253.23(5564)}                       & \num{30.5(123)}            \\ \midrule
        \multirow{2}{*}{3D}   & Best agent                     & \num{109.5(71)}                          & \num{11.3(25)}             \\
                              & Avg. agent                     & \num{120.9(1197)}                        & \num{14.0(203)}            \\ \bottomrule
      \end{tabular}%
    }
  \end{subtable}
  \label{tab:performance}
\end{table}
More quantitative performance criteria are shown in \cref{tab:performance}, including the minimal robot obstacle distance along the whole trajectory and the calculation time of a new control signal, which is marginally longer for the \gls{cfp} but still remains well below the desired sampling time of $\SI{1}{\milli\second}$.
Additionally, \cref{tab:pred_time} shows the calculation times of the predicted trajectories of the best agent and the average computation time of all agents with two different discretization times $\SI{1}{\milli\second}$ and $\SI{10}{\milli\second}$.
Note that it is possible to adapt the discretization to the respective task and environment without compromising the resulting avoidance behavior, i.e., we can use a coarser discretization for the prediction in order to further reduce the computation time, if necessary. This is relevant, for instance, in particularly complex environments, as we demonstrate in an additional video in the accompanying repository where we used a discretization of \SI{500}{\milli\second}.
Note that the longer prediction time in the \gls{2d} scenario results from the fact that a lower maximum velocity was used and the resulting trajectories are also longer on average, which in turn results in more average trajectory points ($\num{58496(18543)}$ trajectory points in the \gls{2d} scenario versus $\num{14964(9847)}$ trajectory points in the \gls{3d} scenario with \SI{1}{\kilo\hertz} frequency).\\
Overall, we see that the \gls{cfp} planner can cope with complex nonconvex environments and achieves simultaneous goal converge and obstacle avoidance, as expected from the derived theory.
%
%
\section{Conclusion}
We presented a rigorous mathematical analysis of the complete motion planning algorithm in \cite{BeckerLilMulHad2021}, resulting in formal guarantees for collision avoidance and goal convergence in planar environments.
In contrast to previous approaches, we did not consider isolated \gls{cf} forces, but studied the entire motion planner consisting of \gls{cf} forces for collision avoidance and attractive \gls{vlc} forces for goal convergence.
The extension to bounded disturbances enabled us to also guarantee collision avoidance in environments with multiple point obstacles.
Additionally, these considerations were qualitatively extended to point cloud obstacles.
In our analysis, we also consistently considered the unique ability of the \gls{cfp} planner to take different paths around an obstacle (compared to other \gls{cf} approaches).
We found tight conditions (set of measure zero) under which a collision is possible.
Since we applied the \gls{cf}-planner in a virtual agents framework, we wanted to ensure collision avoidance for all predicted trajectories.
This was done by verifying the collision conditions whenever the robot comes into the vicinity of an obstacle and adjusting the scaling factors where necessary (cf. \cref{eq:scaling_kcf}).
Furthermore, we extended the goal convergence analyses of previous approaches by replacing the simple damped attractive potential field with the \gls{vlc} force and ensured uniformly lower and upper bounded velocities of the robot.
A caveat of our analysis is that the reasoning for collision avoidance of point cloud obstacles is only of qualitative nature. This continues to be our focus in current research. Moreover, as discussed in \cref{fn:3d_analysis}, the presented analysis is also valid in a 3D setting for special choices of the magnetic field vector. We plan to derive guarantees for collision avoidance in 3D environments for general choices of  the magnetic field vector, for which a suitably adapted auxiliary system will be required.
%
%
\appendix
\subsection{Auxiliary Lemma}
\begin{lemma}\label{lma:bound_ab-term}
  Given constants $R<0$, $S>0$, $c\geq c_{\min}>0$, with $S+cR>0$, we have
  \begin{align}\label{eq:bound_ab-term}
    \dfrac{RS}{R^2+S^2}\geq \begin{cases}
      - \frac{\cmin}{\cmin^2+1} & \text{if } \cmin\geq 1 \\
      - 0.5                     & \text{if } \cmin < 1.
    \end{cases}
  \end{align}
\end{lemma}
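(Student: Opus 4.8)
The plan is to collapse this two-variable inequality into a one-variable calculus problem via the substitution $t = S/(-R) = S/|R|$, which is well-defined and strictly positive since $R<0$ and $S>0$. Dividing numerator and denominator of $RS/(R^2+S^2)$ by $R^2$ gives $RS/(R^2+S^2) = -t/(1+t^2) =: f(t)$, and the hypothesis $S+cR>0$ translates into $S/(-R) > c$, i.e. $t > c \geq \cmin$. Hence it suffices to lower bound $f(t)$ over the feasible set $t \in (\cmin,\infty)$.

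Next I would analyze $f(t) = -t/(1+t^2)$ on $(0,\infty)$. A direct differentiation yields $f'(t) = (t^2-1)/(1+t^2)^2$, so $f$ is strictly decreasing on $(0,1)$ and strictly increasing on $(1,\infty)$, with a unique global minimum $f(1) = -\tfrac12$. In particular $f(t) \geq -\tfrac12$ for every $t>0$, which immediately settles the case $\cmin<1$, since then $t=1$ lies in the feasible interval $(\cmin,\infty)$ and no sharper bound is available.

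For the case $\cmin \geq 1$, I would invoke monotonicity: on $[1,\infty)$ the function $f$ is increasing, and since $t > c \geq \cmin \geq 1$ we obtain $f(t) \geq f(c) \geq f(\cmin) = -\cmin/(\cmin^2+1)$, which is exactly the claimed bound. I do not anticipate a genuine obstacle here; the only points meriting care are getting the sign of $f'$ right and checking that the minimizer $t=1$ belongs to the feasible region precisely when $\cmin<1$, so that the two cases in the statement are the sharp ones. This lemma is then applied in \cref{lma:point_critical} (and its disturbed counterpart) to lower bound the term $RS/(R^2+S^2)$ appearing in $\dot R$ within the critical quadrant.
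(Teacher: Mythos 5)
Your argument is correct and complete: the reduction to the single variable $t=S/(-R)>0$, the translation of $S+cR>0$ into $t>c\geq\cmin$, and the elementary analysis of $f(t)=-t/(1+t^2)$ (global minimum $-1/2$ at $t=1$, increasing on $[1,\infty)$) together give exactly the two claimed bounds, with the case split matching whether the minimizer $t=1$ lies in the feasible region. The paper itself defers the proof of this lemma to its external repository, so no line-by-line comparison is possible here, but your one-variable calculus route is the natural argument and I find no gap in it; your remark on where the lemma is invoked (bounding the first term of $\dot R$ in the critical quadrant) also matches its use in the paper.
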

The proof can be found in the repository \cite{datarepo2023}.
%
%
\subsection{Modified Bounds for Lemma 3 and Lemma 6}\label{sec:proof_remark}
In the following, we extend the results in \cref{lma:point_Sless0,lma:disturbance_Sless0} to provide uniform bounds also in case $|S(0)|$ small.
We directly consider the setting with disturbances from \cref{lma:disturbance_Sless0}, which contains the setting in \cref{lma:point_Sless0} as a special case.\\
\emph{Case a)}: First consider $|S(0)| \geq \tilde{c}|R(0)|$ with some $\tilde{c} > 0$, then \cref{lma:disturbance_Sless0} ensures a uniform linear bound for $V_B$, i.e.,
\begin{align*}
  V_B(t) & \leq \frac{4\dxmax^2}{S(0)^2} = \frac{8\dxmax^2}{S(0)^2+S(0)^2}                  \nonumber                                                   \\
         & \leq \frac{8\dxmax^2}{S(0)^2+\tilde{c}^2 R(0)^2} \leq \frac{8\dxmax^2}{\max\left(1,\tilde{c}^2 \right) \left(S(0)^2+R(0)^2\right)} \nonumber \\
         & \leq V_B(0)\frac{8\dxmax^2}{\dxmin^2 \max\left(1,\tilde{c}^2 \right)} =V_B(0)\frac{8\dxmax^2}{\dxmin^2\max\left(1,\tilde{c}^2 \right)}.
\end{align*}
\emph{Case b)}: Suppose without loss of generality that $|S(t)| < \tilde{c}|R(t)|$ for all $t\geq0$ (otherwise the bounds of \emph{Case a)} can still be applied with $|S(t)|\geq|S(0)|$).
Using \cref{eq:dot_s_disturbed}, we obtain
\begin{align}\label{eq:dot_s_disturbed_remark}
  \dot{S}(t) & \leq - \kcf\frac{R(t)^2}{R(t)^2+\tilde{c}^2 R(t)^2} + \norm{\xv(0)} \zmax \nonumber \\
             & = -\frac{\kcf}{\tilde{c}^2 + 1}+ \norm{\xv(0)} \zmax.
\end{align}
In the critical case, $R$ moves arbitrarily close to $0$ while $S$ stays arbitrarily close to $0$, i.e., there exists a time $\tau>0$ such that $R(\tau)=\frac{R(0)}{2}$ (otherwise from \cref{eq:barrier_ab} we directly have $V_B(t)\leq 4 \frac{\dxmax^2}{\dxmin^2} V_B(0)$). Suppose the disturbance bound satisfies $\zmax \leq \frac{\kcf}{2 \norm{\xv(0)} \left(\tilde{c}^2 + 1\right)}$, then integration of \cref{eq:dot_s_disturbed_remark} yields $S(t)\leq S(\tau)\leq S(0)-\frac{\kcf}{2\left(\tilde{c}^2 + 1\right)}\tau \leq-\frac{\kcf}{2\left(\tilde{c}^2 + 1\right)}\tau$ for all $t\geq \tau$.
Finally, suppose the disturbance bound additionally satisfies $\zmax \leq \frac{\dxmax^2 + \kcf \frac{\tilde{c}}{\tilde{c}^2 + 1}}{2 \norm{\xv(0)}}$. Then, by showing a uniform bound $\frac{1}{\tau^2} \leq k_1 V_B(0)$, $k_1>0$ and using $|R(t)|\geq\frac{|R(0)|}{2}$, $t\in[0,\tau]$, one can show a uniform bound on $V_B(t)$, i.e., $V_B(t)\leq k_2V_B(0)$ with some $k_2>0$.
%
%
\subsection{Proof of Lemma 5} \label{sec:proof_disturbance_Rgeq0}
\begin{proof}
  In case $R=0$, $\zmax \leq \frac{\dxmin^2}{\xmax}$ implies
  \begin{align*}
    \dot{R} & = \norm{\dot{\xv}}^2  + \xv \cdot \zv \geq \norm{\dot{\xv}}^2 - \norm{\xv} \norm{\zv} \\
            & \geq \dxmin^2 - \xmax \zmax \geq 0.
  \end{align*}
  Hence, we have $R(t) \geq 0$ for all $t \geq 0$.
  Furthermore, using \cref{eq:dot_barrier_rs}, we have $\dot{V}_B(t) \leq 0$, which yields
  \begin{equation*}
    V_B(t) \leq V_B(0),\quad t\geq 0.\quad \qedhere
  \end{equation*}
\end{proof}
\subsection{Proof of Lemma 6}\label{sec:proof_disturbance_Sless0}
\begin{proof}
  \textbf{Part I.}
  Define
  \begin{equation}
    \tau_1:= \inf_{t\geq 0,R(t)\geq-\delta \lor S(t)=0} t \label{eq:def_tau_1_lemma_6}
  \end{equation}
  with $\delta = \norm{\xv(0)}\dxmin$.
  Consider $t \in [0,\tau_1]$, which implies $R(t)< -\delta$ and that Inequality \eqref{eq:bound_norm_x} holds.\\
  In the following, we first show that $\tau_1$ is finite and then derive an upper bound for $S(t)$.\\
  Consider the disturbance bound
  \begin{equation}
    \zmax \leq \frac{\dxmin^2}{2 \norm{\xv(0)}}. \label{eq:bound_z_dot_r}
  \end{equation}
  Then, \cref{eq:dot_r_disturbed} yields
  \begin{align}
    \dot{R}(t) & = \kcf \frac{R S}{R^2+S^2} + \norm{\dot{\xv}}^2 + \xv \cdot \zv \nonumber                                                        \\
               & \geq \dxmin^2 - \norm{\xv(0)}\zmax \overset{\eqref{eq:bound_z_dot_r}}{\geq} \frac{\dxmin^2}{2} > 0 \label{eq:bound_dotR_noncrit}
  \end{align}
  for all $t\in [0, \tau_1]$.\\
  Suppose for contradiction that $\tau_1 > \tau_{1,\max}$ with
  \begin{equation}
    \tau_{1,\max} = -\frac{2 \norm{\xv(0)}}{\dxmin} -\frac{2 R(0)}{\dxmin^2}. \label{eq:def_tau1max_lemma_6}
  \end{equation}
  Then, integration of \cref{eq:bound_dotR_noncrit} yields
  \begin{align*}
    R(\tau_1) & > R(0) + \frac{\dxmin^2}{2} \tau_1                                                                                                               \\
              & \overset{\eqref{eq:def_tau1max_lemma_6}}{\geq} R(0) + \frac{\dxmin^2}{2} \left( -\frac{2 \norm{\xv(0)}}{\dxmin} -\frac{2 R(0)}{\dxmin^2} \right) \\
              & = -\norm{\xv(0)}\dxmin = -\delta,
  \end{align*}
  which contradicts $R(t)\leq -\delta$, $t\in[0, \tau_1]$. Hence, by contradiction $\tau_1$ exists with the upper bound $\tau_1 \leq \tau_{1,\max}$.\\
  Consider the additional disturbance bound
  \begin{equation}
    \zmax \leq \frac{\kcf}{\norm{\xv(0)}}\frac{\dxmin^2}{\dxmax^2}. \label{eq:bound_z_dot_s}
  \end{equation}
  Given $R(t) \leq - \delta$ and recalling that $R^2+S^2 = \norm{\xv}^2\norm{\dot{\xv}}^2$, \cref{eq:dot_s_disturbed} yields
  \begin{align*}
    \dot{S}(t) & \leq  -\kcf \frac{\delta^2}{\norm{\xv}^2\dxmax^2} + \norm{\xv} \zmax                                                                                                  \\
               & \overset{\eqref{eq:bound_norm_x}}{\leq} -\kcf \frac{\norm{\xv(0)}^2\dxmin^2}{\norm{\xv(0)}^2\dxmax^2} + \norm{\xv(0)}\zmax \overset{\eqref{eq:bound_z_dot_s}}{\leq} 0
  \end{align*}
  and therefore
  \begin{equation}
    S(t) \leq S(0) \quad \forall t\in[0,\tau_1]. \label{eq:bound_s_noncrit1}
  \end{equation}
  \textbf{Part II.}
  In the following, we study the time interval $t\in[\tau_1,\tau]$, with $\tau = \tau_1+\tau_2$, where
  \begin{equation}
    \tau_2:= \inf_{t\geq \tau_1,R(t)=0 \lor S(t)=0} t -\tau_1. \label{eq:def_tau_2_lemma_6}
  \end{equation}
  We first show that we leave the lower left quadrant in finite time and then deduce the upper bound for $S$ (and consequently $V_B$) until the time when we leave the quadrant.\\
  Also note that \cref{eq:def_tau_1_lemma_6,eq:def_tau_2_lemma_6} imply $R(t)\in[-\delta,0]$ for all $t\in[\tau_1,\tau]$ and suppose for contradiction that $\tau > \tau_1 + \tau_{2,\max}$ with
  \begin{equation}
    \tau_{2,\max} = \frac{2 \norm{\xv(0)}}{\dxmin}. \label{eq:def_delta_t_lemma_6}
  \end{equation}
  Then, integration of \cref{eq:bound_dotR_noncrit} yields
  \begin{align*}
    R(\tau) & > R(\tau_1) + \frac{\dxmin^2}{2} \tau_{2,\max} \geq -\delta + \frac{\dxmin^2}{2} \frac{2 \norm{\xv(0)}}{\dxmin} \\
            & \geq -\norm{\xv(0)}\dxmin + \norm{\xv(0)}\dxmin = 0,
  \end{align*}
  which contradicts $R(t)\leq 0$, $t\in[\tau_1,\tau]$. Hence, by contradiction we leave the lower left quadrant with $\tau \leq \tau_1 + \tau_{2,\max}$.\\
  Moreover, \cref{eq:dot_s_disturbed} yields
  \begin{equation}
    \dot{S}(t) = -\kcf \frac{R^2}{R^2+S^2} + \left(\xv \times \zv \right) \cdot \bv \overset{\eqref{eq:bound_norm_x}}{\leq} \norm{\xv(0)}\zmax \label{eq:bound_dotS_noncrit}
  \end{equation}
  for all $t\in[\tau_1,\tau]$.
  Consider the additional disturbance bound
  \begin{equation}
    \zmax \leq -\frac{\dxmin S(0)}{4 \norm{\xv(0)}^2}. \label{eq:bound_z_s_lemma_6}
  \end{equation}
  Then, integration of \cref{eq:bound_dotS_noncrit} yields
  \begin{align}
    S(t) & \leq S(\tau_1) + \norm{\xv(0)} \zmax \tau_{2,\max}                               \nonumber                                                                                      \\
         & \overset{\eqref{eq:def_delta_t_lemma_6},\eqref{eq:bound_z_s_lemma_6}}{\leq} S(0) - \norm{\xv(0)} \frac{\dxmin S(0)}{4 \norm{\xv(0)}^2} \frac{2 \norm{\xv(0)}}{\dxmin} \nonumber \\
         & \leq \frac{S(0)}{2} < 0 \label{eq:bound_s_noncrit2}
  \end{align}
  for all $t\in[\tau_1,\tau]$,  which also implies
  \begin{equation*}
    V_B(t) \overset{\eqref{eq:barrier_ab},\eqref{eq:bound_s_noncrit1},\eqref{eq:bound_s_noncrit2}}{\leq} \frac{4\dxmax^2}{S(0)^2}
  \end{equation*}
  for all $t\in[0,\tau]$.
  We would like to point out that we used $S(t)<0$ for all $t\in[0,\tau]$ in \cref{eq:bound_dotR_noncrit}, which holds using \cref{eq:bound_s_noncrit1,eq:bound_s_noncrit2}.
\end{proof}
\subsection{Proof of Theorem 2}\label{sec:proof_disturbance_critical}
\begin{proof}
  Analogously to \cref{lma:point_critical}, the following proof is split into three parts. First, we show that we leave the critical quadrant in finite time, i.e., there exists a constant $\tau_{\max}>0$, such that \begin{equation}
    \tau_{\max} \geq  \tau:= \inf_{t\geq 0, S(t)=0 \lor R(t)=0} t.
  \end{equation}
  Then, we show $|\epsilon(t)|\geq \frac{|\epsilon(0)|}{2}$, $t\in[0,\tau]$. Finally, we prove a lower bound on $\norm{\xv}$ for $\epsilon(0)\neq 0$.\\
  %
  \textbf{Part I.}
  In the following we distinguish two cases: $\epsilon(0)<0$, $\epsilon(0)>0$.
  We show that for both cases, there exists a time $\tau \leq \tau_\mathrm{max}$ such that $R(\tau)=0$ or $S(\tau)=0$.\\
  We would like to point out that in the following, we repeatedly use
  \begin{equation} \label{eq:eps_geq_eps0_disturbed}
    |\epsilon(t)| \geq \frac{|\epsilon(0)|}{2} > 0,
  \end{equation}
  for all $t\in[0,\tau]$, which will be established in Part~II.\\
  %
  \emph{Case i)}: $\epsilon(0) <0$:
  Note that \cref{eq:eps_geq_eps0_disturbed} and $\epsilon(0) <0$ imply $\epsilon(t) <0$ and hence $R(t)<0$ for all $t\in [0,\tau]$.\\
  Then, \cref{eq:eps_geq_eps0_disturbed} yields
  \begin{align*}
    \epsilon(t) & = S(t) + c(t)R(t) < 0,         \\
    S(t)^2      & < c^2R(t)^2 \leq \cmax^2R(t)^2
  \end{align*}
  for all $t \in [0,\tau]$.
  Together with \cref{eq:bound_norm_x}, we can rearrange \cref{eq:dot_s_disturbed} to get
  \begin{align}
    \dot{S} & < - \frac{\kcf}{1+\cmax^2} + \norm{\xv(0)} \zmax. \label{eq:bound_dotS_below_disturbed}
  \end{align}
  Consider the disturbance bound
  \begin{equation} \label{eq:bound_z_below_disturbed}
    \zmax \leq \frac{\kcf}{2 \norm{\xv(0)} \left(1+\cmax^2\right)}.
  \end{equation}
  For contradiction, suppose that $\tau>\tau_{\max,1d}$ with
  \begin{equation}
    \tau_{\max,1d} = \frac{2 S(0) \left(1+\cmax^2\right)}{\kcf}. \label{eq:tmax_below_disturbed}
  \end{equation}
  Then, the integration of \cref{eq:bound_dotS_below_disturbed} yields
  \begin{align*}
    S(\tau_{\max,1d}) & < S(0) + \left(- \frac{\kcf}{1+\cmax^2} + \norm{\xv(0)} \zmax \right)\tau_{\max,1d}           \\
                      & \leq S(0) - \frac{\kcf}{2\left(1+\cmax^2\right)} \tau_{\max,1d}                               \\
                      & = S(0) - \frac{\kcf}{2\left(1+\cmax^2\right)} \frac{2 S(0) \left(1+\cmax^2\right)}{\kcf} = 0,
  \end{align*}
  which contradicts $S(t)> 0$, $t\in[0,\tau)$. Hence, by contradiction we leave the upper left quadrant with $\tau<\tau_{\max,1d}$.\\
  %
  \emph{Case ii)}: $\epsilon(0) >0$:
  Note that \cref{eq:eps_geq_eps0_disturbed} and $\epsilon(0) >0$ imply $\epsilon(t) >0$ and hence $S(t)>0$ for all $t\in [0,\tau]$.\\
  \emph{Case ii) a)}: $\epsilon(0) >0$, $\cmin\geq 1$: Applying \cref{eq:bound_ab-term} from \cref{lma:bound_ab-term} to the first term of \cref{eq:dot_r_disturbed} and using \cref{eq:bound_norm_x} yields
  \begin{align}
    \dot{R}(t) & \geq   - \kcf\frac{\cmin}{\cmin^2+1} + \frac{\kcf}{\cmax} - \norm{\xv(0)} \zmax \nonumber                                     \\
               & = \kcf \frac{\cmin^2-\cmin\cmax+1}{\cmax\left(\cmin^2+1\right)} - \norm{\xv(0)} \zmax. \label{eq:bound_dotR_above_disturbed2}
  \end{align}
  Consider the disturbance bound
  \begin{equation} \label{eq:bound_z_above_disturbed2}
    \zmax \leq \kcf \frac{\cmin^2-\cmin\cmax+1}{2\norm{\xv(0)}\cmax\left(\cmin^2+1\right)},
  \end{equation}
  where the right hand side is positive due to \cref{eq:bound_cmax}.
  For contradiction, suppose that $\tau>\tau_{\max,2ad}$ with
  \begin{equation}
    \tau_{\max,2ad}  = \frac{- 2R(0)\cmax\left(\cmin^2+1\right)}{\kcf\left(\cmin^2-\cmin\cmax+1\right)}. \label{eq:tmax_above_disturbed2}
  \end{equation}
  Then, the integration of \cref{eq:bound_dotR_above_disturbed2} yields
  \begin{align*}
     & R(\tau_{\max,2ad}) \geq R(0) +                                                                                               \\
     & \left(\frac{\kcf\left(\cmin^2-\cmin\cmax+1\right)}{\cmax\left(\cmin^2+1\right)} - \norm{\xv(0)} \zmax \right)\tau_{\max,2ad} \\
     & \geq R(0) + \frac{\kcf\left(\cmin^2-\cmin\cmax+1\right)}{2\cmax\left(\cmin^2+1\right)}\tau_{\max,2ad} = 0,
  \end{align*}
  which contradicts $R(t)< 0$, $t\in[0,\tau)$. Hence, by contradiction we leave the upper left quadrant with $\tau\leq\tau_{\max,2ad}$.\\
  %
  \emph{Case ii) b)}: $\epsilon(0) >0$, $\cmin<1$: Analogously, applying \cref{eq:bound_ab-term} from \cref{lma:bound_ab-term} to the first term of \cref{eq:dot_r_disturbed} yields
  \begin{align}
    \dot{R}(t) & \geq   - \frac{\kcf}{2} + \frac{\kcf}{\cmax} - \norm{\xv(0)} \zmax \nonumber                   \\
               & \geq \kcf \frac{2-\cmax}{2\cmax} - \norm{\xv(0)} \zmax. \label{eq:bound_dotR_above_disturbed3}
  \end{align}
  Consider the disturbance bound
  \begin{equation} \label{eq:bound_z_above_disturbed3}
    \zmax \leq \frac{\kcf \left(2-\cmax\right)}{4 \cmax \norm{\xv(0)}},
  \end{equation}
  which is positive due to \cref{eq:bound_cmax}.
  For contradiction, suppose that $\tau>\tau_{\max,2bd}$ with
  \begin{equation}
    \tau_{\max,2bd}  = \frac{- 4 R(0) \cmax}{\kcf\left(2-\cmax\right)}. \label{eq:tmax_above_disturbed3}
  \end{equation}
  Then, the integration of \cref{eq:bound_dotR_above_disturbed3} yields
  \begin{align*}
     & R(\tau_{\max,2bd})                                                                                     \\
     & \geq R(0) + \left(\frac{\kcf\left(2-\cmax\right)}{2\cmax} - \norm{\xv(0)} \zmax \right)\tau_{\max,2bd} \\
     & \geq R(0) + \frac{\kcf\left(2-\cmax\right)}{4\cmax}\tau_{\max,2bd}                                     \\
     & = R(0) + \frac{\kcf\left(2-\cmax\right)}{4\cmax} \frac{- 4 R(0) \cmax}{\kcf\left(2-\cmax\right)} = 0,
  \end{align*}
  which contradicts $R(t)< 0$, $t\in[0,\tau)$. Hence, by contradiction we leave the upper left quadrant with $\tau\leq\tau_{\max,2bd}$.\\
  %
  \textbf{Part II.}
  In order to show that $|\epsilon(t)|\geq \frac{|\epsilon(0)|}{2}$, we first need to establish an upper bound for $|\dot{\epsilon}|$, for which we use \cref{eq:dot_r_disturbed,eq:dot_s_disturbed,eq:dot_c} within \cref{eq:dot_eps_disturbed} to get
  \begin{align} \label{eq:def_dot_eps}
    \begin{split}
      \frac{\dot{\epsilon}}{\kcf}  = & \frac{c R S - R^2}{R^2+S^2}+c \frac{\norm{\dot{\xv}}^2}{\kcf}+\frac{\left(\xv \times \zv \right) \cdot \bv}{\kcf} + \frac{c \xv \cdot \zv}{\kcf} \\
      & - 2c \frac{\dot{\xv} \cdot \zv}{\kcf\norm{\dot{\xv}}^2} R.
    \end{split}
  \end{align}
  To simplify this expression, we note that the following conditions hold
  \begin{align*}
    -R=                                                             & |\xv \cdot \dot{\xv}|                             \\
    \frac{c R S - R^2}{R^2+S^2}+c \frac{\norm{\dot{\xv}}^2}{\kcf} = & \frac{S}{\norm{\xv}^2\norm{\dot{\xv}}^2} \epsilon
  \end{align*}
  \begin{align*}
    -\norm{\xv(0)} \zmax           & \leq \left(\xv \times \zv \right) \cdot \bv                                  & \leq & \, \norm{\xv(0)} \zmax           \\
    -\cmax \norm{\xv(0)} \zmax     & \leq c \xv \cdot \zv                                                         & \leq & \, \cmax \norm{\xv(0)} \zmax     \\
    - 2 \cmax  \norm{\xv(0)} \zmax & \leq 2c \frac{\dot{\xv} \cdot \zv}{\norm{\dot{\xv}}^2} |\xv \cdot \dot{\xv}| & \leq & \, 2 \cmax  \norm{\xv(0)} \zmax,
  \end{align*}
  with $R<0$, $\norm{\xv} \leq \norm{\xv(0)}$ (cf. \cref{eq:bound_norm_x}) and $\norm{\zv} \leq \zmax$.\\
  %
  \emph{Case i)}: $\epsilon(0) < 0$:
  From Part I we know that $S(t)\geq 0$ for all $t\in[0,\tau]$. In the following, we assume $\epsilon(t)<0$ for all $t\in[0,\tau]$, which will be recursively established at the end. Then, \cref{eq:def_dot_eps} yields
  \begin{align}
    \frac{\dot{\epsilon}}{\kcf} & \leq  \frac{S}{\norm{\xv(0)}^2\dxmax^2} \epsilon + \frac{\norm{\xv(0)}}{\kcf} \left(1 + 3\cmax \right)\zmax \nonumber \\
                                & \leq \frac{\norm{\xv(0)}}{\kcf} \left(1 + 3\cmax \right)\zmax. \label{eq:bound_dotEps_below}
  \end{align}
  Consider $\tau \leq \tau_{\max,1d}$ with $\tau_{\max,1d}$ from \cref{eq:tmax_below_disturbed} and that the disturbance bound satisfies
  \begin{align}
    \zmax & \leq -\frac{\epsilon(0)}{2 \norm{\xv(0)} \left(1 + 3\cmax \right)}\frac{1}{\tau_{\max,1d}} \nonumber                                                   \\
          & \leq -\frac{\kcf \epsilon(0)}{4 S(0) \norm{\xv(0)} \left(1 + 3\cmax \right) \left(1+\cmax^2\right)}.\label{eq:bound_z_critical_dist_below_unintuitive}
  \end{align}
  Then, using \cref{eq:tmax_below_disturbed} and integrating \cref{eq:bound_dotEps_below} yields
  \begin{align}
    \epsilon(t) & \leq \epsilon(0) + \norm{\xv(0)} \left(1 + 3\cmax \right)\zmax t \nonumber                                          \\
                & \leq \epsilon(0) + \norm{\xv(0)} \left(1 + 3\cmax \right)\zmax \frac{2 S(0) \left(1+\cmax^2\right)}{\kcf} \nonumber \\
                & \leq \epsilon(0) + \frac{2 S(0) \norm{\xv(0)} \left(1 + 3\cmax \right)\left(1+\cmax^2\right)}{\kcf} \zmax \nonumber \\
                & \leq \frac{\epsilon(0)}{2} \label{eq:eps_leq_eps0_below}
  \end{align}
  for all $t\in[0,\tau]$.
  Note that Inequality \eqref{eq:bound_dotEps_below} used $\epsilon(t)<0$, which holds recursively using $\epsilon(0)<0$ and \cref{eq:eps_leq_eps0_below}.
  \\
  %
  \emph{Case ii)}: $\epsilon(0) > 0$:
  Analogously, we use $\epsilon(t)>0$ for $t\in[0,\tau]$, which will be recursively established in the following. Then, \cref{eq:def_dot_eps} yields
  \begin{align}
    \frac{\dot{\epsilon}}{\kcf} & \geq  \frac{S}{\norm{\xv(0)}^2\dxmax^2} \epsilon - \frac{\norm{\xv(0)}}{\kcf} \left(1 + 3\cmax \right)\zmax \nonumber \\
                                & \geq -\frac{\norm{\xv(0)}}{\kcf} \left(1 + 3\cmax \right)\zmax. \label{eq:bound_dotEps_above}
  \end{align}
  %
  \emph{Case ii) a)}: $\epsilon(0) > 0$, $\cmin\geq1$:
  Consider $\tau \leq \tau_{\max,2ad}$ with $\tau_{\max,2ad}$ from \cref{eq:tmax_above_disturbed2} and that the disturbance bound satisfies
  \begin{align}
    \zmax & \leq \frac{\epsilon(0)}{2 \norm{\xv(0)} \left(1 + 3\cmax \right)} \frac{1}{\tau_{\max,2ad}}  \nonumber                                                                                         \\
          & = -\frac{\kcf\epsilon(0)\left(\cmin^2 -\cmin\cmax + 1\right)}{4 R(0) \norm{\xv(0)} \cmax\left(1 + \cmin^2 \right)\left(1 + 3\cmax \right)}.\label{eq:bound_z_critical_dist_above1_unintuitive}
  \end{align}
  Then, using \cref{eq:tmax_above_disturbed2} and integrating \cref{eq:bound_dotEps_above} yields
  \begin{align}
    \epsilon(t) & \geq \epsilon(0) - \norm{\xv(0)} \left(1 + 3\cmax \right)\zmax t \nonumber               \\
                & \geq \epsilon(0) - \norm{\xv(0)} \left(1 + 3\cmax \right)\zmax \tau_{\max,2ad} \nonumber \\
                & \geq \frac{\epsilon(0)}{2} \label{eq:eps_geq_eps0_above1}
  \end{align}
  for all $t\in[0,\tau]$.\\
  %
  \emph{Case ii) b)}: $\epsilon(0) > 0$, $\cmin<1$:
  Consider $\tau \leq \tau_{\max,2bd}$ with $\tau_{\max,2bd}$ from \cref{eq:tmax_above_disturbed3} and that the disturbance bound satisfies
  \begin{align}
    \zmax & \leq \frac{\epsilon(0)}{2 \norm{\xv(0)} \left(1 + 3\cmax \right)} \frac{1}{\tau_{\max,2bd}}   \nonumber                                                 \\
          & = -\frac{\kcf\epsilon(0)\left(2 - \cmax \right)}{8 R(0)\norm{\xv(0)} \cmax\left(1 + 3\cmax \right)}.\label{eq:bound_z_critical_dist_above2_unintuitive}
  \end{align}
  Then, using \cref{eq:tmax_above_disturbed3} and integrating \cref{eq:bound_dotEps_above} yields
  \begin{align}
    \epsilon(t) & \geq \epsilon(0) - \norm{\xv(0)} \left(1 + 3\cmax \right)\zmax t \nonumber               \\
                & \geq \epsilon(0) - \norm{\xv(0)} \left(1 + 3\cmax \right)\zmax \tau_{\max,2bd} \nonumber \\
                & \geq \frac{\epsilon(0)}{2} \label{eq:eps_geq_eps0_above2}
  \end{align}
  for all $t\in[0,\tau]$.
  Note that Inequality \eqref{eq:bound_dotEps_above} used $\epsilon(t)>0$, which holds recursively using $\epsilon(0)>0$ and \cref{eq:eps_geq_eps0_above1,eq:eps_geq_eps0_above2}.\\
  %
  \textbf{Part III.}
  In the following, we use the derived bound $|\epsilon(t)|\geq \frac{|\epsilon(0)|}{2}, t\in[0,\tau]$ from \cref{eq:eps_leq_eps0_below,eq:eps_geq_eps0_above1,eq:eps_geq_eps0_above2} to establish a lower bound on the obstacle distance $\norm{\xv(t)}$.\\
  %
  \emph{Case i)}: $\epsilon(0) < 0$:
  From \cref{eq:def_epsilon,eq:eps_leq_eps0_below} and $S(t)>0$, it follows that for all $t \in [0,\tau]$
  \begin{equation*}
    R(t) \leq \frac{\frac{\epsilon(0)}{2}-S(t)}{c(t)} \leq  \frac{\epsilon(0)}{2\cmax} \Rightarrow  R(t)^2 \geq \frac{\epsilon(0)^2}{4\cmax^2}
  \end{equation*}
  and therefore $V_B(t) \overset{\eqref{eq:barrier_ab}}{\leq} \frac{\dxmax^2}{R(t)^2} \leq \frac{4 \dxmax^2 \cmax^2}{\epsilon(0)^2}$, which implies
  \begin{equation}
    \norm{\xv(t)}\geq \frac{|\epsilon(0)|}{2 \dxmax \cmax} 
  \end{equation}
  for all $t \in [0, \tau]$.
  Recalling that $S(0) \leq \norm{\xv(0)} \dxmax$ (cf. \cref{eq:def_s,eq:bound_norm_x}), we can replace the disturbance bound from \cref{eq:bound_z_critical_dist_below_unintuitive} with the following more restrictive, albeit more intuitive upper bound
  \begin{align}
    \zmax & \leq \frac{\kcf\left|\epsilon(0)\right|}{4 \norm{\xv(0)}^2 \dxmax \left(1 + 3\cmax \right) \left(1+\cmax^2\right)} \label{eq:bound_z_critical_dist_below} \\
          & \leq \frac{\kcf\left|\epsilon(0)\right|}{4 S(0) \norm{\xv(0)} \left(1 + 3\cmax \right) \left(1+\cmax^2\right)}. \nonumber
  \end{align}
  Consequently, all guarantees hold for $\norm{\zv} \leq \zmax = \min \left(\frac{\kcf}{2 \norm{\xv(0)} \left(1+\cmax^2\right)}, \frac{\kcf\left|\epsilon(0)\right|}{4 \norm{\xv(0)}^2 \dxmax \left(1 + 3\cmax \right) \left(1+\cmax^2\right)}\right)$.\\
  %
  \emph{Case ii)}: $\epsilon(0) > 0$:
  From \cref{eq:def_epsilon,eq:eps_geq_eps0_above1,eq:eps_geq_eps0_above2} and $R<0$, it follows that
  \begin{align*}
    S(t) & \geq \frac{\epsilon(0)}{2}-c(t)R(t)
    \geq \frac{\epsilon(0)}{2}
  \end{align*}
  and therefore $V_B \overset{\eqref{eq:barrier_ab}}{\leq} \frac{\dxmax^2}{S(t)^2} \leq \frac{4 \dxmax^2}{\epsilon(0)^2}$, which implies
  \begin{equation}
    \norm{\xv(t)}\geq \frac{\epsilon(0)}{2 \dxmax}
  \end{equation}
  for all $t \in [0, \tau]$.
  Recalling that $R(0) \leq \norm{\xv(0)} \dxmax$ (cf. \cref{eq:def_r,eq:bound_norm_x}), we can replace the disturbance bounds from \cref{eq:bound_z_critical_dist_above1_unintuitive,eq:bound_z_critical_dist_above2_unintuitive} with the following more restrictive, albeit more intuitive upper bounds
  \begin{align}
    \zmax & \leq \frac{\kcf|\epsilon(0)|\left(\cmin^2 -\cmin\cmax + 1\right)}{4 \norm{\xv(0)}^2 \dxmax \cmax\left(1 + \cmin^2 \right)\left(1 + 3\cmax \right)} \label{eq:bound_z_critical_dist_above1} \\
          & \leq \frac{\kcf|\epsilon(0)|\left(\cmin^2 -\cmin\cmax + 1\right)}{4 |R(0)| \norm{\xv(0)} \cmax\left(1 + \cmin^2 \right)\left(1 + 3\cmax \right)} \nonumber                                 \\
    \zmax & \leq \frac{\kcf|\epsilon(0)|\left(2 - \cmax \right)}{8 \norm{\xv(0)}^2 \dxmax \cmax \left(1 + 3\cmax \right)} \label{eq:bound_z_critical_dist_above2}                                      \\
          & \leq \frac{\kcf|\epsilon(0)|\left(2 - \cmax \right)}{8 |R(0)|\norm{\xv(0)} \cmax \left(1 + 3\cmax \right)}. \nonumber
  \end{align}
  Consequently, all guarantees hold for $\zv \leq \zmax = \min \left(\frac{\kcf\left(\cmin^2-\cmin\cmax+1\right)}{2\norm{\xv(0)}\cmax\left(\cmin^2+1\right)}, \frac{\kcf|\epsilon(0)|\left(\cmin^2 -\cmin\cmax + 1\right)}{4 \norm{\xv(0)}^2 \dxmax \cmax\left(1 + \cmin^2 \right)\left(1 + 3\cmax \right)} \right)$ if $\cmin \geq 1$ or\\
  $\zv \leq \zmax = \min \left(\frac{\kcf \left(2-\cmax\right)}{4 \norm{\xv(0)} \cmax}, \frac{\kcf|\epsilon(0)|\left(2 - \cmax \right)}{8 \norm{\xv(0)}^2 \dxmax \cmax \left(1 + 3\cmax \right)} \right)$ if $\cmin < 1$.
\end{proof}
%
%
\bibliographystyle{IEEEtran}
\bibliography{IEEEabrv,bibProofCA}

\vspace*{-2em}
\begin{IEEEbiography}[{\includegraphics[width=1in,height=1.25in,clip,keepaspectratio]{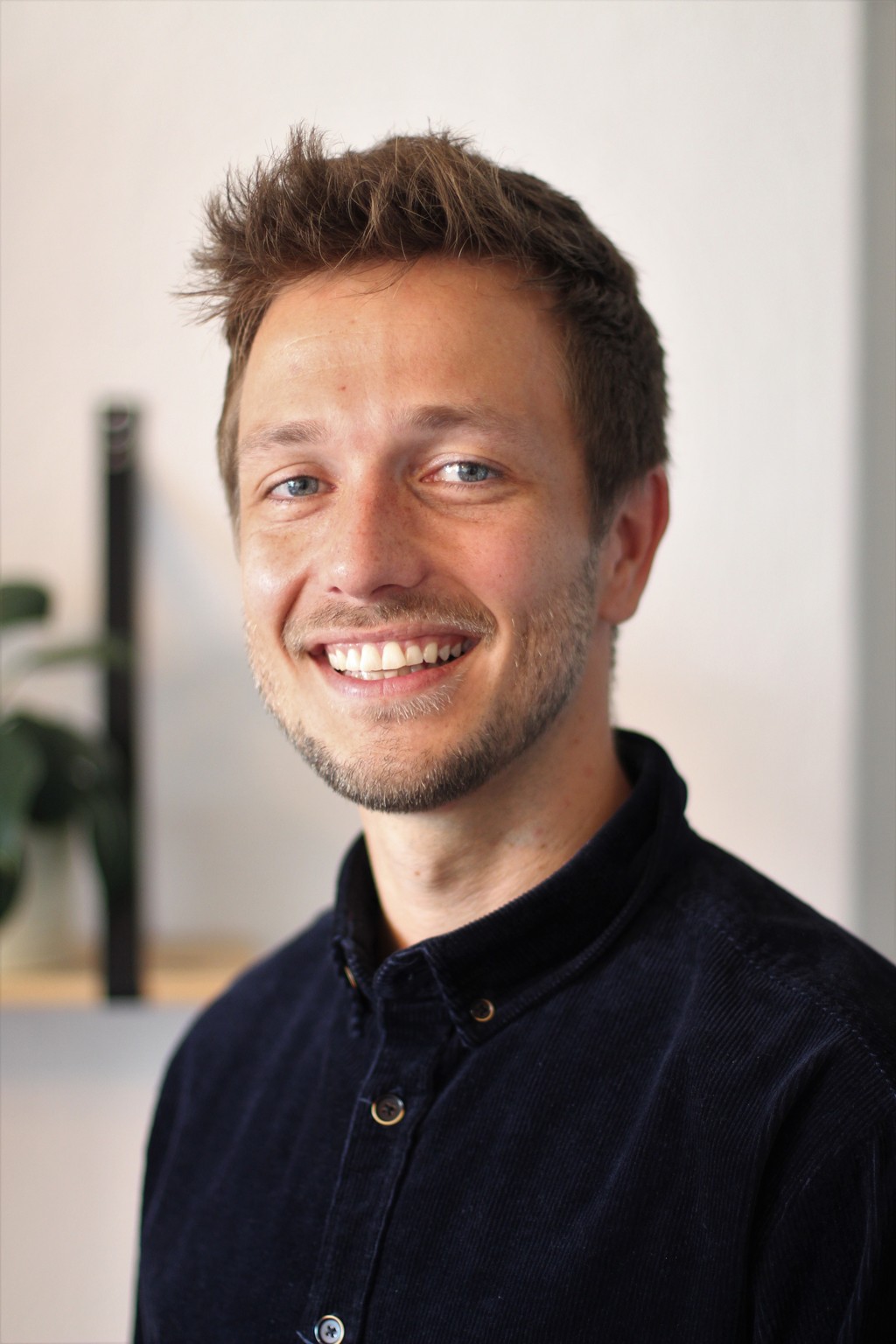}}]{Marvin Becker}
    received his Master degree in mechanical engineering from the Technical University Munich, Germany, in 2016.
    He is currently working toward the Ph.D. degree in electrical engineering at the Institute of Automatic Control (IRT) at Leibniz University Hannover.
    His current research interests are in the area of motion planning and collision avoidance for robotic manipulators. \end{IEEEbiography}
\vspace*{-2em}
\begin{IEEEbiography}[{\includegraphics[width=1in,height=1.25in,clip,keepaspectratio]{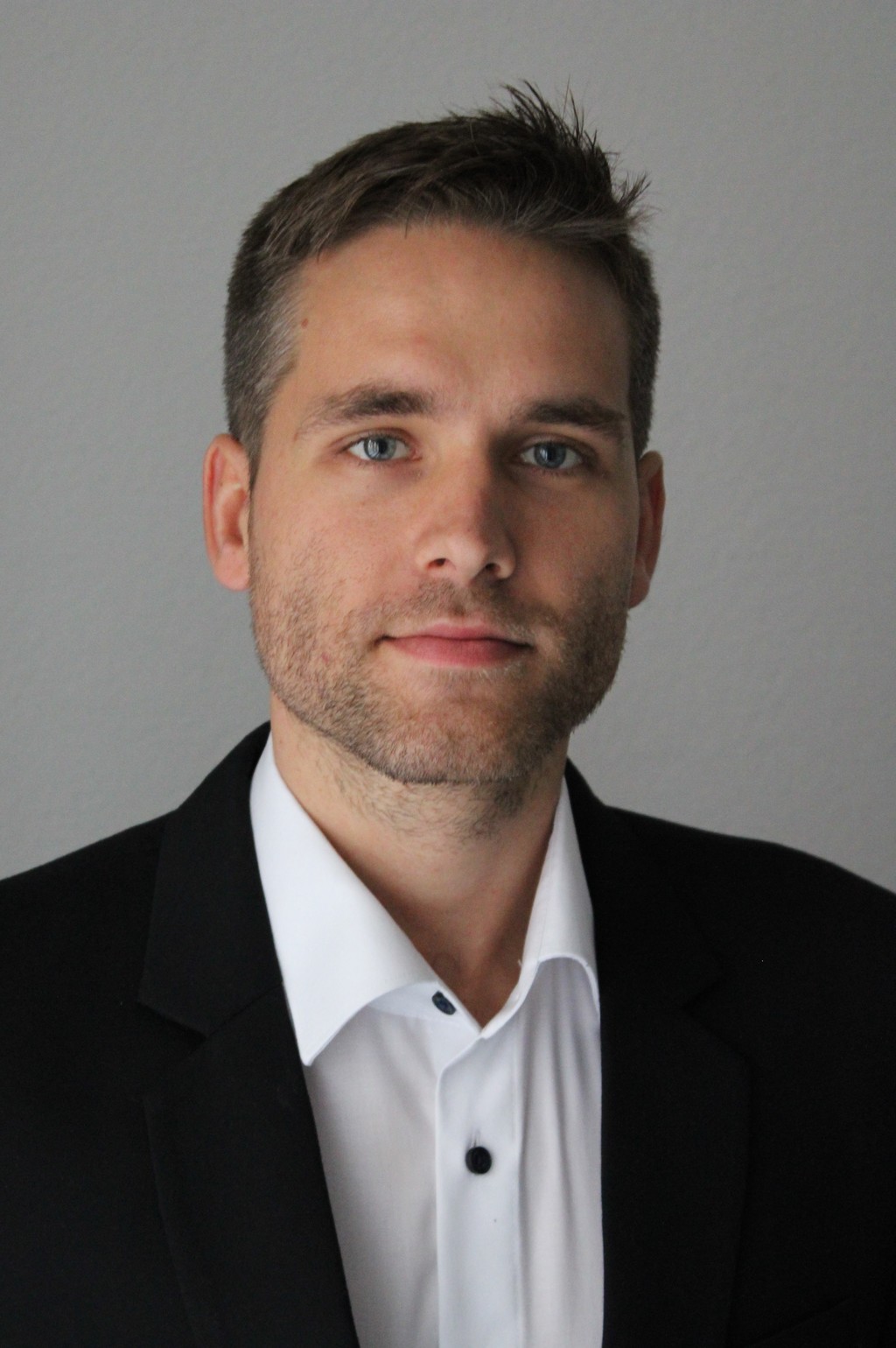}}]{Johannes K\"ohler}
    received his Master degree in Engineering Cybernetics from the University of Stuttgart, Germany, in 2017.
    In 2021, he obtained a Ph.D. in mechanical engineering, also from the University of Stuttgart,
    Germany, for which he received the 2021 European Systems \& Control Ph.D. award.
    He is currently a postdoctoral researcher at the Institute for Dynamic Systems and Control (IDSC) at ETH Zürich.
    His current research interests are in the area of model predictive control and control and estimation for nonlinear uncertain systems.
\end{IEEEbiography}
\vspace*{-2em}
\begin{IEEEbiography}[{\includegraphics[width=1in,height=1.25in,clip,keepaspectratio]{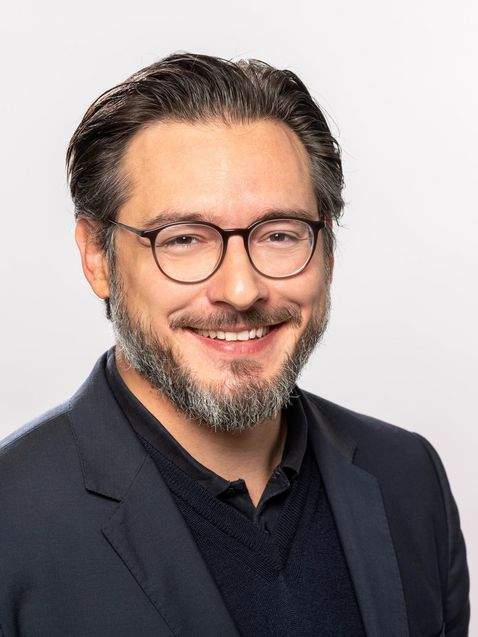}}]{Sami Haddadin}
    received the Dipl.-Ing. degree in electrical engineering in 2005, the M.Sc. degree in computer science in 2009 from Technical University of Munich (TUM), Munich, Germany, the Honours degree in technology management in 2007 from Ludwig Maximilian University, Munich, Germany, and TUM, and the Ph.D. degree in safety
    in robotics from RWTH Aachen University, Aachen, Germany, in 2011.
    He is currently a full professor and chair of Robotics and Systems Intelligence at the Technical University of Munich (TUM) and the founding director of the Munich Institute of Robotics and Machine Intelligence (MIRMI).
    He has received numerous awards for his scientific work, including the George Giralt Ph.D. Award (2012), the RSS Early Career Spotlight (2015), the IEEE/RAS Early Career Award (2015), the Alfried Krupp Award for Young Professors (2015), the German President's Award for Innovation in Science and Technology (2017) and the Leibniz Prize (2019).
    His research interests include physical human-robot interaction, nonlinear robot control, real-time motion planning, real-time task and reflex planning,
    robot learning, optimal control, human motor control, variable impedance actuation, and safety in robotics.
\end{IEEEbiography}
\vspace*{-2em}
\begin{IEEEbiography}[{\includegraphics[width=1in,height=1.25in,clip,keepaspectratio]{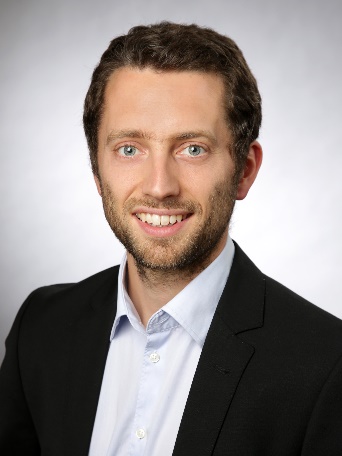}}]{Matthias A. Müller}
    received a Diploma degree in Engineering Cybernetics from the University of Stuttgart, Germany, an M.Sc. in Electrical and Computer Engineering from the University of Illinois at Urbana-Champaign, US (both in 2009), and a Ph.D. from the University of Stuttgart in 2014. Since 2019, he is director of the Institute of Automatic Control and full professor at the Leibniz University Hannover, Germany. His research interests include nonlinear control and estimation, model predictive control, and data- and learning-based control, with application in different fields including biomedical engineering and robotics. He has received various awards for his work, including the 2015 EECI PhD award, the inaugural Brockett-Willems Outstanding Paper Award for the best paper published in Systems \& Control Letters in the period 2014-2018, an ERC starting grant in 2020, and the IEEE CSS George S. Axelby Outstanding Paper Award 2022. He serves as an editor of the International Journal of Robust and Nonlinear Control and as a member of the Conference Editorial Board of the IEEE Control Systems Society.
\end{IEEEbiography}

\end{document}